\newcommand{\bibpath}{bib_nonmonotone}
\definecolor{cite_color}{RGB}{0, 0,255}
\definecolor{link_color}{RGB}{153, 0,0}  
\definecolor{url_color}{RGB}{153, 102,  0}
\definecolor{emp_color}{RGB}{0,0,255}
\renewcommand{\citet}{\cite}
 \crefname{section}{Section}{Sections}
 \crefname{theorem}{Theorem}{Theorems}
 \crefname{lemma}{Lemma}{Lemmas}
 \crefname{equation}{Equation}{Equations}
 \crefname{proposition}{Proposition}{Propositions}
 \crefname{claim}{Claim}{Claims}
  \crefname{corollary}{Corollary}{Corollaries}
   \crefname{algorithm}{Algorithm}{Algorithms}
 \crefname{figure}{Figure}{Figures}
 \crefname{table}{Table}{Tables}
 \crefname{remark}{Remark}{Remarks}
 \crefname{definition}{Definition}{Definitions}
  \crefname{appendix}{Appendix}{Appendices}
\newcommand{\appendixtitle}[1]{
	\begin{center}
		\LARGE \bf #1
	\end{center}
}
\def\X{{\cal X}}
\def\cone{{\cal K}}
\def \c{\bm{c}}
\def \v{\bm{v}}
\def \r{\bm{r}}
\def \c{\bm{c}}
\def \d{\bm{d}}
\def \a{\bm{a}}
\def \x{\bm{x}}
\def \y{\bm{y}}
\def \e{\bm{e}}
\def \u{\bm{u}}
\def \bmu{\bm{u}}
\def \bmA{\mathbf{A}}
\def \bmX{\mathbf{X}}
\def \bmY{\mathbf{Y}}
\def \bmI{\mathbf{I}}
\def \bmC{\mathbf{C}}
\def \bmD{\mathbf{D}}
\def \bmH{\mathbf{H}}
\def \bmL{\mathbf{L}}
\def \bmU{\mathbf{U}}
\def \bmW{\mathbf{W}}
\def \bmZ{\mathbf{Z}}
\def \bmtheta{\bm{\theta}}
\def \bmalpha{\bm\alpha}
\def \O{{\mathcal{O}}}
\def \P{{\mathcal{P}}}
\def \Q{{\mathcal{Q}}}
\def \R{{\mathbb{R}}}
\def \trans{\top}
\newcommand{\tr}[1]{\text{tr}(#1)}
\newcommand{\pare}[1]{{(#1)}}  
\def \z{\bm{z}}
\def \h{\bm{h}}
\newcommand{\argmin}{{\arg\min}}
\newcommand{\diag}{{\text{diag}}}
\newcommand{\sign}[1]{{\text{sign}(#1)}}
\newcommand{\argmax}{{\arg\max}}
\newcommand{\algname}[1]{{\textsc{#1}}}
\newcommand{\dtp}[2]{\langle #1, #2\rangle}
\newcommand{\de}[1]{\text{det}\left(#1\right)}
\newcommand{\fracpartial}[2]{\frac{\partial #1}{\partial #2}}
\newcommand{\fracppartial}[3]{\frac{\partial^2 #1}{\partial #2 \partial #3}}
\newcommand{\lleq}{\preceq}
\newcommand{\ggeq}{\succeq}
\newcommand{\bas}{\bm{e}} 
\newcommand{\groundset}{\ensuremath{\mathcal{V}}}
\newtheorem{theorem}{Theorem}
\newtheorem{lemma}{Lemma}
\newtheorem{proposition}{Proposition}
\newtheorem{corollary}{Corollary}
\newtheorem{definition}{Definition}
\newtheorem{remark}{Remark}
\newtheorem{claim}{Claim}
\DeclarePairedDelimiter\floor{\lfloor}{\rfloor}
\setlist[itemize]{leftmargin=9mm}
\title{
Continuous DR-submodular  Maximization:
 Structure and Algorithms
}
\author{
  An Bian\thanks{Now known as Yatao A. Bian. ORCID: \href{https://orcid.org/0000-0002-2368-4084}{orcid.org/0000-0002-2368-4084}}\\
  ETH Zurich\\
  \texttt{ybian@inf.ethz.ch}
  \\
  \And
  Kfir Y. Levy\\
  ETH Zurich\\
  \texttt{yehuda.levy@inf.ethz.ch}\\  
  \AND
  Andreas Krause\\
  ETH Zurich\\
  \texttt{krausea@ethz.ch} \\
  \And
    Joachim M. Buhmann\\
    ETH Zurich\\
    \texttt{jbuhmann@inf.ethz.ch} \\
}
\begin{document}
\renewcommand{\b}{\bm{b}}
\renewcommand{\v}{\bm{v}}
\renewcommand{\u}{\bm{u}}
\renewcommand{\r}{\bm{r}}
\renewcommand{\d}{\bm{d}}
\renewcommand{\c}{\bm{c}}
\renewcommand{\O}{\mathcal{O}}

\maketitle

\begin{abstract}

DR-submodular continuous functions are important objectives with wide real-world applications spanning MAP inference
in  determinantal point processes (DPPs), and mean-field inference for probabilistic submodular models, amongst others. 
DR-submodularity
captures a subclass of non-convex functions
that enables both exact minimization and approximate maximization in polynomial time. 

In this work we study the  problem of 
maximizing  \textit{non-monotone} continuous DR-submodular  functions
under general   down-closed convex constraints. 
We start by investigating geometric properties that underlie such objectives, e.g., a strong relation between  (approximately) stationary
points and global optimum is proved. These properties are then  used to devise two  optimization algorithms with provable guarantees.
Concretely, we first devise a ``two-phase'' algorithm with $1/4$ 
approximation guarantee. This algorithm allows the use of existing methods for finding (approximately) stationary points as a subroutine, thus, harnessing  recent progress in non-convex optimization. 
Then  we present 
a non-monotone \algname{Frank-Wolfe} variant
with $1/e$ approximation guarantee and sublinear
convergence rate. 
Finally, we extend our approach to a broader class of generalized DR-submodular continuous functions, which captures a wider spectrum of applications.
Our theoretical findings are validated on  synthetic and real-world problem instances.

%
%
%

\end{abstract}

\section{Introduction}
\label{sec_intro}


Submodularity is classically most well known for set function
optimization, where it
%
enables efficient minimization \citep{iwata2001combinatorial} and
approximate maximization
\citep{nemhauser1978analysis,krause2012submodular} in polynomial time.
Submodularity has recently been studied on the integer lattice
\citep{soma2014optimal,DBLP:conf/nips/SomaY15} and on continuous domains
\citep{bach2015submodular,bian2017guaranteed,staibrobust,hassani2017gradient}, with
significant theoretical results and practical applications. For set
functions, it is well known that submodularity is equivalent to the
diminishing returns (DR) property. However, this does not hold for
integer-lattice functions or continuous functions, where the DR
property defines a subclass of submodular functions, called 
DR-submodular functions.

In continuous domains, 
 applying convex
optimization techniques enables  efficient minimization of
submodular continuous functions \citep{bach2015submodular,staibrobust}  (despite the non-convex nature of such
objectives).
In \citet{bian2017guaranteed} it is further shown that continuous
submodularity enables constant-factor approximation schemes for
constrained monotone DR-submodular maximization and ``box''
constrained non-monotone submodular maximization problems.

Many real-world non-convex problems, such as maximizing the softmax
extension of DPPs, require maximizing a \emph{non-monotone}
DR-submodular function over a \emph{general} down-closed {convex}
constraint.  Yet, current theory
\citep{bach2015submodular,bian2017guaranteed,staibrobust} does not
apply to this general problem setting, which motivates us to 
 develop guaranteed and efficient algorithms for such problems.

Exploring the structure that underlies DR-submodularity is crucial to deriving
guaranteed algorithms.
Combined with a notion of
non-stationarity for constrained optimization problems and a new
notion of ``strong DR-submodularity'', we find  a rich
structure in the problem of continuous DR-submodular maximization. This in turn gives rise to two approximation algorithms
with provable guarantees.
%
Specifically, we make the following contributions:

%
%
%
\begin{itemize}

\item We bound the difference between objective values of stationary
  points and the global optimum. Our analysis shows that the
  bound is even tighter if the objective is strongly
  DR-submodular (see \cref{eq_strongly_dr}). 

%
%

\item Based on the geometric  properties, we present two
  algorithms: (i) A two-phase \algname{Frank-Wolfe}-style algorithm with
  $1/4$ approximation guarantee converges with a $1/\sqrt{k}$ rate;
  (ii) a non-monotone \algname{Frank-Wolfe} variant exhibits a $1/e$
  approximation guarantee and converges sublinearly. Even though the
  worst-case guarantee of the first one is worse than the second, it
  yields several practical advantages, which we
  discuss in \cref{sec_fw_variant}.
	
\item We investigate a generalized class of submodular functions on
  ``conic'' lattices.  This allows us to model a larger class of
  non-trivial applications. These include logistic regression with a
  non-convex separable regularizer, non-negative PCA, etc.
  To optimize them, we provide a reduction that enables to invoke
  algorithms for continuous submodular optimization problems.

\item We experimentally demonstrate the applicability of our methods
  on both synthetic and real-world problem instances.
	
\end{itemize}

%
%
%
%

\subsection{Problem Statement}

\textbf{Notation.}
We use boldface letters, e.g., $\x$ to represent a
vector,  boldface capital letters, e.g., $\bmA$  to denote
a matrix. $x_i$ is the $i^{\text{th}}$ entry of $\x$, $A_{ij}$ is the $(ij)^{\text{th}}$ entry of $\bmA$.  We use  $\bas_i$ to denote the standard $i^\text{th}$ basis vector. 
 $f(\cdot)$ is used  to denote a continuous function, and $F(\cdot)$ to
represent a set function. 
$[n]:= \{1,...,n\}$ for an integer $n \geq 1$.
$\|\cdot\|$ means the Euclidean norm by default.  Given two vectors
$\x,\y$, $\x\leq \y$ means $x_i\leq y_i, \forall i$.  $\x\vee \y$ and
$\x \wedge \y $ denote coordinate-wise maximum and coordinate-wise
minimum, respectively.

The general setup of constrained non-monotone DR-submodular (see
\cref{ded:DR_sub} below) maximization is,
\begin{align}\label{setup}
\max_{\x \in \P} f(\x),     \tag{P}
\end{align}
where $f: \X \rightarrow \R$ is continuous DR-submodular,
$\X = \prod_{i=1}^{n}\X_i$, each $\X_i$ is an interval
\citep{bach2015submodular,bian2017guaranteed}.
%
Wlog\footnote{Since otherwise one can work on a new function
  $g(\x) := f(\x + \underline \u)$ that has ${0}$ as the lower
  bound of its domain, and all properties of the function are still
  preserved.}, we assume that the lower bound $\underline \u$ of $\X$
is ${0}$, i.e., $\X=[0, \bar \bmu ]$.  The set
$\P\subseteq [0, \bar \bmu]$ is assumed to be a down-closed {convex}
set, where down-closedness means: $\x \in \P$ and $0\leq\y \leq \x$
implies that $\y \in \P$.
The diameter of $\P$ is $D:= \max_{\x,\y\in\P}\|\x-\y\|$, and it holds
that $D \leq \|\bar \u \|$.  We use $\x^*$ to denote the
global maximum of \labelcref{setup}.
One can assume $f$ is non-negative over $\X$, since otherwise one just
needs to find a lower bound for the minimum function value of $f$ over
$[0, \bar \u]$ (and box-constrained submodular minimization can be
solved to arbitrary precision in polynomial time
\citep{bach2015submodular}).
%
Over continuous domains, a DR-submodular function
\citep{bian2017guaranteed} is a submodular function with the
diminishing returns (DR) property,
\begin{definition}[DR-submodular \& DR property]
	\label{ded:DR_sub} 
	A  function $f:\X\mapsto \R$ is  DR-submodular (has the DR property) if $\forall \a\leq \b \in \X$, $\forall i \in [n], \forall k\in \R_+$ s.t. $(k\e_i+\a)$ and $(k\e_i+\b)$ are still in $\X$, it holds,
	\begin{align}\label{eq_dr}
	f(k\e_i+\a) - f(\a) \geq f(k\e_i+\b) - f(\b).
	\end{align}
\end{definition}
If $f$ is differentiable, one can show that  \cref{ded:DR_sub} is equivalent to  $\nabla f$ being
an antitone mapping from $\R^n$ to $\R^n$. Furthermore, if $f$ is twice-differentiable, 
the DR property is equivalent to  all of the entries
of its Hessian being non-positive, i.e., $\nabla^2_{ij} f(\x) \leq 0, \forall \x\in \X, i,j\in [n]$.  A function $f:\X\mapsto \R$ is DR-supermodular 
iff $-f$ is DR-submodular. 
We also assume that $f$ has 
 Lipschitz gradients, 
%
\begin{definition}
A  function $f$ has $L$-Lipschitz gradients if for all $\x,\y \in \X$  it holds that,
\begin{align}\label{eq_smooth}
\| \nabla f(\x)- \nabla f(\y) \| \leq L \|\x - \y\|.
\end{align}
\end{definition}
A brief summary of related work appears in \cref{sec_related}.

%


%

\section{Motivating Real-world Examples}
\label{sec_examples}
Many continuous objectives in practice turn out to be DR-submodular.
Here we  list several  of them. More 
can be found in \cref{sec_more_apps}.

\textbf{Softmax extension.}  
Determinantal point processes (DPPs) are  probabilistic models
of repulsion, that have been used to model diversity in machine
learning \citep{kulesza2012determinantal}. The constrained MAP
(maximum a posteriori) inference problem of a DPP is an NP-hard
combinatorial problem in general. Currently, the methods with the best
approximation guarantees are based on either maximizing the
multilinear extension \citep{calinescu2007maximizing} or the softmax
extension \citep{gillenwater2012near}, both of which are DR-submodular
functions (details in \cref{appe_dr_soft}).
The multilinear extension is given as an expectation over the original
set function values, thus evaluating the objective of this extension
requires expensive sampling.  In constast, the softmax extension has a
closed form expression, which is much more appealing from a
computational perspective.
%
Let $\bmL$ be the positive semidefinite kernel matrix of a DPP, its
softmax extension is:
\begin{flalign}\label{eq_softmax}
  f(\x) = \log\de{\diag(\x)(\bmL-\bmI) +\bmI }, \x\in [0,1]^n,
\end{flalign}
where $\bmI$ is the identity matrix, $\diag(\x)$ is the diagonal
matrix with diagonal elements set as $\x$. The problem of MAP
inference in DPPs corresponds to the problem $\max_{\x\in \P} f(\x)$,
where $\P$ is a down-closed convex constraint, e.g., a matroid
polytope or a matching polytope.

\textbf{Mean-field inference for log-submodular models.}
Log-submodular models \citep{djolonga14variational} are a class of
probabilistic models over subsets of a ground set $\groundset = [n]$,
where the log-densities are submodular set functions $F(S)$:
$p(S) = \frac{1}{Z}\exp(F(S))$. The partition function
$Z = \sum_{S\subseteq \groundset}\exp(F(S))$ is typically hard to
evaluate.  One can use mean-field inference to approximate $p(S)$ by
some factorized distribution
$q_{\x}(S):= \prod_{i\in S}x_i \prod_{j\notin S}(1-x_j), \x\in
[0,1]^n$,
by minimizing the distance measured w.r.t. the Kullback-Leibler
divergence between $q_{\x}$ and $p$, i.e.,
$ \sum_{S\subseteq \groundset} q_{\x}(S)
\log\frac{q_{\x}(S)}{p(S)}$. It is,
\begin{align}\notag 
  \text{KL}(\x) = 
  -\sum_{S\subseteq \groundset}\prod_{i\in S}x_i \prod_{j\notin S}(1-x_j) F(S) + \sum\nolimits_{i=1}^{n} [x_i\log x_i + (1-x_i)\log(1-x_i)] + \log Z.
\end{align}
$ \text{KL}(\x)$ is DR-supermodular w.r.t. $\x$ (details in
\cref{appe_dr_soft}).  Minimizing the Kullback-Leibler divergence
$\text{KL}(\x)$ amounts to maximizing a DR-submodular function.

\subsection{Motivating  Example Captured by  Generalized Submodularity  on Conic Lattices}
\label{subsec_moti_lattice}

Submodular continuous functions can already
model many scenarios. Yet, there are 
several interesting cases which are in general not (DR-)Submodular, but  can still be captured by a generalized notion. This generalization enables to
develop polynomial algorithms with   guarantees by using ideas from continuous submodular  optimization. We  present one representative objective here  (more in \cref{sec_more_apps}). In \cref{sec_lattice} we show the technical details on    how  they are  covered by
a class of submodular continuous functions over
conic lattices.

Consider the   logistic regression model with a \emph{non-convex} separable regularizer.
This flexibility may result in   better statistical  performance (e.g., in recovering
discontinuities, \citep{antoniadis2011penalized}) compared to classical models with convex regularizers.
%
Let  $\z^1,..., \z^m$ in $\R^n$ be $m$ training  points  with corresponding binary labels $\y\in \{\pm 1\}^m$. Assume that the following mild assumption is satisfied:  
	For any  fixed dimension $i$,
	all the data points  have the same sign, i.e., 
	$\sign{z^j_i}$ is the same for all $j \in [m]$ (which can
	be achieved by easily scaling if not).  
 The task  is to solve the following non-convex optimization problem, 
\begin{flalign}\label{lr}
\min_{\x\in \R^n} f(\x) := m^{-1}\sum\nolimits_{j=1}^{m}f_j(\x)  +\lambda  r(\x), 
\end{flalign}
where $f_j(\x) = \log(1 +\exp(-y_j \x^\trans \z^j))$ is the logistic loss; $\lambda>0$ is the regularization parameter, and  $r(\x) $ is  some non-convex separable regularizer. 
Such separable regularizers are popular in statistics,  and two notable choices are $r(\x)=  \sum_{i= 1}^n \frac{\gamma x_i^2}{1+\gamma  x_i^2}$, and $r(\x) =  \sum_{i= 1}^n \min \{\gamma x_i^2, 1 \}$ (see ~\citep{antoniadis2011penalized}). 
Let us define a vector $\bmalpha\in \{\pm 1 \}^n$ as $\alpha_i = \text{sign}(z_i^j), i\in [n]$ and $l(\x) :=\frac{1}{m}\sum\nolimits_{j=1}^{m}f_j(\x)$.
One can show that $l(\x)$ is not DR-submodular or DR-supermodular. 
Yet,  in \cref{sec_lattice} we  will show that  $l(\x)$  is   $\cone_{\bmalpha}$-DR-supermodular, where the latter generalizes DR-supermodularity.
Usually, one can assume  the optimal solution $\x^*$ lies in 
some  box $[\underline \u, \bar \u]$. Then the problem is an instance of constrained  non-monotone  
$\cone_{\bmalpha}$-DR-submodular maximization.


\vspace{-0.15cm}
\section{Underlying  Properties of  Constrained  DR-submodular  Maximization
} 
\label{sec_structures}



\vspace{-0.1cm}
In this section we  present several properties arising in DR-submodular function maximization. First we show properties related to    concavity  of the objective along certain directions,  then we establish  the  
relation between  locally 
stationary points and the global
optimum (thus called ``local-global relation''). 
These properties will be used to derive guarantees for the algorithms in \cref{sec_algs}. All omitted proofs are in \cref{app_proofs_struc_algs}.

\subsection{Properties Along Non-negative/Non-positive Directions}

A DR-submodular function $f$  is \emph{concave} along any non-negative/non-positive direction \citep{bian2017guaranteed}.
%
Notice that DR-submodularity is a stronger
condition than concavity along directions $\v \in \pm \R_+^n$: for instance,  a concave function is concave 
along any  direction,
but it may not be a DR-submodular function.   

For a  DR-submodular function with $L$-Lipschitz gradients, one can get the following quadratic lower bound using 
standard techniques by combing the concavity and Lipschitz 
gradients in \labelcref{eq_smooth}. \\
\textbf{Quadratic lower bound.}
If $f$ is  DR-submodular  with a $L$-Lipschitz gradient,  then for all $\x \in \X$ and $\v \in \pm \R_+^n$, it holds, 
\begin{align}\label{eq_quad_lower_bound}
f(\x + \v)\geq f(\x) + \dtp{\nabla f(\x)}{\v} - \frac{L}{2}\|\v\|^2.
\end{align}
It  will be used in \cref{sec_fw_variant} for 
analyzing the non-monotone \algname{Frank-Wolfe} variant (\cref{fw-non-monotone}).

\textbf{Strong DR-submodularity.} 
 DR-submodular objectives may be 
 strongly concave along   directions $\v \in \pm \R_+^n$, e.g., for  DR-submodular quadratic functions. 
 We will show that such additional structure may be exploited to obtain stronger guarantees for the local-global relation.
\begin{definition}[Strongly DR-submodular] \label{eq_strongly_dr}
	A function $f$ is $\mu$-strongly 
DR-submodular ($\mu\geq 0$) if for all $\x\in \X$ and  $\v \in \pm \R_+^n$, it holds that,
\begin{align}\label{eq_strong_dr}
f(\x+\v) \leq f(\x) + \dtp{\nabla f(\x)}{\v} - \frac{\mu }{2}\|\v\|^2.
\end{align}
\end{definition}

\subsection{Relation Between Approximately Stationary Points and Global Optimum}\label{subsec_local_global}

First of all, we present the following \namecref{lemma_3_1}, which
will motivate us to consider a  non-stationarity measure
for general constrained optimization problems. 
\begin{lemma}\label{lemma_3_1}
If $f$ is $\mu$-strongly DR-submodular, then for any two points $\x$,  $\y$ in $\X$, it holds: 
\begin{align}\label{non_stationarity}
(\y-\x)^{\trans}\nabla f(\x) \geq f(\x\vee\y) + f(\x\wedge \y) - 2f(\x) + \frac{\mu}{2}\|\x -\y\|^2.
\end{align}
\end{lemma}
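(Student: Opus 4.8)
The plan is to exploit the lattice identity $\x\vee\y + \x\wedge\y = \x+\y$ together with the strong-concavity upper bound \labelcref{eq_strong_dr}, applied along two complementary axis-aligned displacements. Concretely, I would set $\v^+ := (\x\vee\y) - \x$ and $\v^- := (\x\wedge\y)-\x$. Since $\X=\prod_i \X_i$ is a box and $\x,\y\in\X$, both $\x\vee\y$ and $\x\wedge\y$ lie in $\X$, so these are admissible displacements. Moreover $\v^+$ is supported exactly on the coordinates where $y_i>x_i$ and equals the positive part of $\y-\x$ there, hence $\v^+\in\R_+^n$; symmetrically $\v^-$ equals the negative part of $\y-\x$ and lies in $-\R_+^n$. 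Both therefore point along the $\pm\R_+^n$ directions for which \cref{eq_strongly_dr} is stated.

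The two structural facts I would record next are: (i) $\v^+ + \v^- = (\x\vee\y)+(\x\wedge\y)-2\x = \y-\x$, using the coordinatewise identity $\max(x_i,y_i)+\min(x_i,y_i)=x_i+y_i$; and (ii) since $\v^+$ and $\v^-$ have disjoint supports, $\|\v^+\|^2+\|\v^-\|^2 = \|\v^++\v^-\|^2 = \|\y-\x\|^2$. These are the only computations involved, and both are immediate.

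With these in hand, I would apply the $\mu$-strong DR-submodular inequality \labelcref{eq_strong_dr} once with $\v=\v^+$ (giving $f(\x\vee\y)\le f(\x)+\dtp{\nabla f(\x)}{\v^+}-\tfrac{\mu}{2}\|\v^+\|^2$) and once with $\v=\v^-$ (giving the analogous bound for $f(\x\wedge\y)$), and add the two. On the right-hand side the gradient terms combine to $\dtp{\nabla f(\x)}{\v^++\v^-} = (\y-\x)^\trans\nabla f(\x)$ by fact (i), and the quadratic penalties combine to $-\tfrac{\mu}{2}\|\y-\x\|^2$ by fact (ii). Rearranging the resulting inequality $f(\x\vee\y)+f(\x\wedge\y)\le 2f(\x)+(\y-\x)^\trans\nabla f(\x)-\tfrac{\mu}{2}\|\y-\x\|^2$ yields exactly \labelcref{non_stationarity}.

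The argument has no genuine obstacle; the one point that requires care — and the conceptual heart of the proof — is the decomposition of $\y-\x$ into its positive and negative parts. This is what simultaneously (a) produces two displacements lying in the $\pm\R_+^n$ cones where strong DR-submodularity is available, and (b) makes the cross term in $\|\v^++\v^-\|^2$ vanish, so that the two quadratic penalties add up cleanly to $\tfrac{\mu}{2}\|\y-\x\|^2$. Note that the case $\mu=0$ recovers the non-strong local–global inequality as a special case.
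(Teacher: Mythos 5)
Your proof is correct and matches the paper's own argument essentially step for step: both apply the strong DR-submodularity bound \labelcref{eq_strong_dr} along the two displacements $(\x\vee\y)-\x \in \R_+^n$ and $(\x\wedge\y)-\x \in -\R_+^n$, sum the two inequalities, and use the identities $\x\vee\y + \x\wedge\y = \x+\y$ and $\|\x\vee\y-\x\|^2 + \|\x\wedge\y-\x\|^2 = \|\y-\x\|^2$ to conclude. The only cosmetic difference is that you make the disjoint-support observation explicit, which the paper leaves implicit in its final norm identity.
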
  
 \cref{lemma_3_1} implies that if $\x$ is  stationary (i.e., $\nabla f(\x)=0$), then $2f(\x) \geq f(\x\vee\y) + f(\x\wedge \y)  + \frac{\mu}{2}\|\x -\y\|^2$, which gives an implicit relation
between $\x$ and  $\y$. While in practice finding 
an exact  stationary point is   not easy, usually  non-convex solvers will
 arrive at an approximately stationary point, thus requiring a
proper measure of non-stationarity for the constrained optimization problem.

\textbf{Non-stationarity measure.}
 Looking at the LHS of \labelcref{non_stationarity}, it naturally suggests to use 
$\max_{\y\in\P}(\y-\x)^{\trans}\nabla f(\x)$ as the non-stationarity measure,
which happens to coincide with the measure proposed by  recent work
of  \citet{lacoste2016convergence}, and it  can
be calculated for free for  {Frank-Wolfe}-style algorithms (e.g.,  \cref{nonconvex_fw}). In order to adapt it to the local-global relation,
we give a slightly more general definition here: For any   constraint set $\Q\subseteq \X$, the non-stationarity 
of a point $\x\in \Q$ is, 
\begin{align}\label{non_stationary}
 g_{\Q}(\x) := \max_{\v\in\Q}\dtp{\v - \x}{\nabla f(\x)} \qquad \text{(non-stationarity)}.
\end{align}
It always holds that $g_{\Q}(\x)\geq 0$, and $\x$ is a
stationary point in $\Q$ iff $g_{\Q}(\x)=0$, so \labelcref{non_stationary} is a natural generalization of the non-stationarity measure
$\|\nabla f(\x)\|$ for unconstrained  optimization. As the next statement shows, $g_{\Q}(\x)$ 
plays an important role in characterizing the local-global 
relation.


\begin{proposition}[Local-Global Relation]\label{local_global}
Let $\x$ be  a point  in $\P$
with non-stationarity  $g_{\P}(\x)$, and   
${\Q} := \{\y \in \P \;|\; \y\leq \bar \u - \x\}$. 
Let  $\z$ be  a point  in $\Q$ 
with non-stationarity  $g_{\Q}(\z)$.
It holds that,
\begin{flalign}
\max\{f(\x), f(\z) \}  \geq \frac{1}{4}\left[f(\x^*) -g_{\P}(\x) -g_{\Q}(\z)\right ]  +   \frac{\mu}{8}\left(\|\x -\x^*\|^2 + \|\z -\z^*\|^2\right ),
\end{flalign}
where $\z^*:= \x\vee \x^* -\x$.
\end{proposition}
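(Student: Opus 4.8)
Since $\max\{f(\x),f(\z)\}\ge\tfrac12\big(f(\x)+f(\z)\big)$, the plan is to prove the stronger-looking inequality
\begin{align*}
f(\x)+f(\z)\ \ge\ \tfrac12\big[f(\x^*)-g_{\P}(\x)-g_{\Q}(\z)\big]+\tfrac{\mu}{4}\big(\|\x-\x^*\|^2+\|\z-\z^*\|^2\big),
\end{align*}
from which the claimed $1/4$ and $\mu/8$ factors follow immediately by halving. I would obtain this by invoking \cref{lemma_3_1} once per phase and then combining the two bounds through a purely structural inequality.

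\textbf{Two applications of the local-global inequality.} First I would apply \cref{lemma_3_1} to the pair $(\x,\x^*)$. Because $\x^*\in\P$, the left-hand side satisfies $\dtp{\x^*-\x}{\nabla f(\x)}\le g_{\P}(\x)$, so rearranging gives $2f(\x)\ge f(\x\vee\x^*)+f(\x\wedge\x^*)-g_{\P}(\x)+\tfrac{\mu}{2}\|\x-\x^*\|^2$. Next I would verify that the phase-two target is feasible: $\z^*=\x\vee\x^*-\x$ satisfies $\mathbf{0}\le\z^*\le\x^*$ (hence $\z^*\in\P$ by down-closedness) and $\z^*\le\bar\u-\x$ (since $\x\vee\x^*\le\bar\u$), so $\z^*\in\Q$. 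Applying \cref{lemma_3_1} to $(\z,\z^*)$ and bounding $\dtp{\z^*-\z}{\nabla f(\z)}\le g_{\Q}(\z)$ yields $2f(\z)\ge f(\z\vee\z^*)+f(\z\wedge\z^*)-g_{\Q}(\z)+\tfrac{\mu}{2}\|\z-\z^*\|^2$. Adding the two bounds, the whole statement reduces to the structural claim
\begin{align*}
f(\x\vee\x^*)+f(\x\wedge\x^*)+f(\z\vee\z^*)+f(\z\wedge\z^*)\ \ge\ f(\x^*).
\end{align*}

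\textbf{The combine.} This structural inequality is the heart of the argument, and I would establish it from three ingredients: non-negativity of $f$; the subadditivity consequence of the DR property, namely $f(\a)+f(\b)\ge f(\a+\b)+f(\mathbf{0})$ for $\a,\b\ge\mathbf{0}$, obtained by telescoping \cref{eq_dr} coordinate by coordinate; and the defining constraint of $\Q$, i.e.\ $\x+\z\le\bar\u$. The additive identities $\x^*=(\x\wedge\x^*)+\z^*$ and $\x\vee\x^*=\x+\z^*$ tie the four points to the optimum: subadditivity applied to $\x\wedge\x^*$ and $\z^*$ already gives $f(\x\wedge\x^*)+f(\z^*)\ge f(\x^*)+f(\mathbf{0})\ge f(\x^*)$, so it remains to recover $f(\z^*)$ from the phase-two values $f(\z\vee\z^*),f(\z\wedge\z^*)$ together with $f(\x\vee\x^*)$. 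This is exactly where the constraint enters: since both $\z$ and $\z^*$ lie below $\bar\u-\x$, so does $\z\vee\z^*$, whence $\x\wedge\x^*+\z\vee\z^*\le\bar\u$ is a feasible point and subadditivity may be applied to it, producing a point that dominates $\x^*$ coordinatewise.

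\textbf{Main obstacle.} The delicate point—and where I expect the real work to lie—is precisely this recovery step. Unlike the discrete (set-function) analogue, the continuous lattice operations $\vee,\wedge$ do \emph{not} reconstruct $\x^*$ from $\x\wedge\x^*$ and $\z^*$, because that identity is additive rather than a join; a naive translation of the combinatorial local-search certificate therefore fails, and one genuinely needs the interplay of subadditivity (for the additive identities), non-negativity, and the box constraint $\x+\z\le\bar\u$. That the constraint is indispensable can be seen from the fact that the structural inequality is false without it—dropping $\x+\z\le\bar\u$ lets $\z\vee\z^*$ overshoot and collapses the phase-two value—so any correct combine must use $\z\in\Q$ at this step. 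Once the combine is secured, the proof is finished by the averaging step above.
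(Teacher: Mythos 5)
Your skeleton is exactly the paper's: two invocations of \cref{lemma_3_1} at the pairs $(\x,\x^*)$ and $(\z,\z^*)$, the feasibility observations $\z^*\in\P$ (down-closedness) and $\z^*\in\Q$ (since $\z^*\le\bar\u-\x$), the bounds $\dtp{\x^*-\x}{\nabla f(\x)}\le g_{\P}(\x)$ and $\dtp{\z^*-\z}{\nabla f(\z)}\le g_{\Q}(\z)$, reduction to the structural inequality $f(\x\vee\x^*)+f(\x\wedge\x^*)+f(\z\vee\z^*)+f(\z\wedge\z^*)\ge f(\x^*)$ (this is precisely the paper's \cref{claim_key}), and the final averaging. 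Your step (A) also matches the paper: the additive identity $\x\wedge\x^*+\z^*=\x^*$ plus the telescoped DR property give $f(\x\wedge\x^*)+f(\z^*)\ge f(\x^*)+f(0)\ge f(\x^*)$.

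However, your recovery step (B) --- the claim $f(\x\vee\x^*)+f(\z\vee\z^*)+f(\z\wedge\z^*)\ge f(\z^*)$ --- is not actually proved, and the maneuver you sketch for it fails. You propose applying subadditivity to the pair $\left(\x\wedge\x^*,\ \z\vee\z^*\right)$, producing $f(\x\wedge\x^*+\z\vee\z^*)$ at a point that dominates $\x^*$. This has two defects: first, $f$ is non-monotone, so a value at a point dominating $\x^*$ yields no lower bound in terms of $f(\x^*)$ --- an obstruction you yourself flag elsewhere; second, it spends $f(\x\wedge\x^*)$ a second time, even though that term occurs only once on the left of the structural claim and was already consumed in step (A). What is actually needed --- and what the paper does --- is the general two-point consequence of \labelcref{eq_dr}, i.e.\ a common shift applied to two \emph{comparable} base points rather than to the base point $\mathbf{0}$: since $\z^*\le\z\vee\z^*$ and the shift $\x$ keeps both points inside $\X$ (feasible precisely because $\z,\z^*\le\bar\u-\x$, the $\Q$ constraint you correctly identified as essential), the DR property gives
\begin{align*}
f(\z^*+\x)-f(\z^*)\ \ge\ f(\z\vee\z^*+\x)-f(\z\vee\z^*).
\end{align*}
Combined with the coordinatewise identities $\z^*+\x=\x\vee\x^*$ and $\z\vee\z^*+\x=(\x+\z)\vee\x^*$, this reads $f(\x\vee\x^*)+f(\z\vee\z^*)\ \ge\ f(\z^*)+f((\x+\z)\vee\x^*)\ \ge\ f(\z^*)$ by non-negativity of $f$, and adding $f(\z\wedge\z^*)\ge 0$ finishes (B). So your reduction and step (A) are right, but the heart of the combine requires this quadrilateral DR inequality at the auxiliary point $(\x+\z)\vee\x^*$; subadditivity alone, as deployed in your sketch, does not close the argument.
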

\textbf{Proof sketch of \cref{local_global}:}
The proof uses \cref{lemma_3_1}, the non-stationarity in \labelcref{non_stationary} and a key observation in the following 
\namecref{claim_key}. 
The detailed proof is in \cref{app_claim_proof}.
\begin{restatable}[]{claim}{keyclaim}
\label{claim_key}
It holds that $f(\x\vee \x^*) + f(\x \wedge \x^*) +  f(\z\vee \z^*) + f(\z \wedge \z^*) \geq f(\x^*)$.
\end{restatable}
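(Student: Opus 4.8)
The plan is to upper-bound $f(\x^*)$ by the four listed quantities using two applications of the diminishing-returns property together with non-negativity of $f$, mirroring the combinatorial argument for set-function non-monotone submodular maximization but leveraging the \emph{continuous} DR property to handle fractional coordinates. The two facts I would set up first are the identities $\x^* = (\x\wedge\x^*) + \z^*$ and $\x\vee\x^* = \x + \z^*$, both immediate from $\z^* = \x\vee\x^* - \x$ and the lattice identity $\x\wedge\x^* + \x\vee\x^* = \x + \x^*$; note that $\x\wedge\x^*, \z^* \ge 0$.

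\textbf{First half (subadditivity).} I would record that a DR-submodular $f\ge 0$ is subadditive on $\R_+^n$: for $\a,\b\ge 0$ with $\a+\b\in\X$ one has $f(\a+\b)\le f(\a)+f(\b)$. This follows from \cref{ded:DR_sub}: adding the increment $\a$ coordinate-by-coordinate and using antitonicity of marginals with bases $0\le\b$ gives $f(\a+\b)-f(\b)\le f(\a)-f(0)\le f(\a)$, where $f(0)\ge 0$ was dropped. Applying this with $\a=\z^*$, $\b=\x\wedge\x^*$ and the first identity (the sum being exactly $\x^*\in\X$) yields $f(\x^*)\le f(\x\wedge\x^*)+f(\z^*)$.

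\textbf{Second half (the crux).} It remains to control $f(\z^*)$ by $f(\x\vee\x^*)$ and $f(\z\vee\z^*)$. The hard part is that one cannot argue by monotonicity ($f$ is non-monotone) nor by plain submodularity, since $\z^*$ lies \emph{below} both $\x\vee\x^*$ and $\z\vee\z^*$, so its value must be bounded above by values at larger points. This forces the genuinely DR-specific antitone-marginal property: for $\a\le\b$ and $\w\ge 0$ (all points in $\X$), $f(\a+\w)-f(\a)\ge f(\b+\w)-f(\b)$. I would apply it with $\a=\z^*\le\b=\z\vee\z^*$ and $\w=\x$. By the second identity $\z^*+\x=\x\vee\x^*$, so this reads $f(\x\vee\x^*)-f(\z^*)\ge f(\z\vee\z^*+\x)-f(\z\vee\z^*)$; rearranging and dropping $f(\z\vee\z^*+\x)\ge 0$ gives $f(\z^*)\le f(\x\vee\x^*)+f(\z\vee\z^*)$. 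Chaining the two halves and finally adding $f(\z\wedge\z^*)\ge 0$ produces the claim.

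\textbf{Main obstacle.} Beyond identifying that $f(\z^*)$ must be bounded \emph{upward} (ruling out submodularity/monotonicity and pinpointing antitonicity of $\nabla f$ as the right tool), the remaining care is to verify that the box domain $\X=[0,\bar\u]$ is respected throughout the telescoping. This is where $\z\in\Q$ enters: from $\z\le\bar\u-\x$ one checks $\z\vee\z^*+\x\le\bar\u$ coordinatewise — on coordinates with $x_i\ge x_i^*$ one has $(\z\vee\z^*)_i=z_i$ so $z_i+x_i\le\bar u_i$, while on coordinates with $x_i<x_i^*$ one has $(\z\vee\z^*)_i+x_i=\max(z_i+x_i,\,x_i^*)\le\bar u_i$. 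Together with $\x\vee\x^*=\z^*+\x\in\X$, this guarantees that every partial increment from $\z^*$ (resp.\ from $\z\vee\z^*$) toward $\x\vee\x^*$ (resp.\ $\z\vee\z^*+\x$) stays in $\X$ and preserves the ordering $\z^*\le\z\vee\z^*$, so the coordinatewise DR inequalities are all legitimate.
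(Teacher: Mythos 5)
Your proof is correct and is essentially the paper's own argument: your ``second half'' is exactly the paper's key inequality $f(\x \vee \x^*) + f(\z\vee \z^*) \geq f(\z^*) + f((\x+\z)\vee \x^*)$ (note your point $\z\vee\z^*+\x$ equals the paper's $(\x+\z)\vee\x^*$ coordinatewise), and your subadditivity step is the paper's $f(\z^*) + f(\x\wedge \x^*)\geq f(\x^*) + f(0)$, both obtained from the same applications of the DR property followed by dropping non-negative terms. The differences are purely cosmetic — the order of the two steps, packaging the first as a generic subadditivity lemma, and your (welcome) explicit check that all intermediate points stay in $[0,\bar\u]$.
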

Note that 
\citet{chekuri2014submodular,gillenwater2012near} propose a similar   relation 
for the special cases of multilinear/softmax extensions by mainly proving 
the same conclusion as in \cref{claim_key}. Their relation  does not   incorporate the properties of 
non-stationarity or strong DR-submodularity. 
They both use the proof idea of constructing a complicated
auxiliary set  function tailored to specific DR-submodular functions.
We 
present a different  proof method by directly 
utilizing the   DR property on carefully constructed auxiliary points (e.g., $(\x+\z)\vee \x^*$ in the proof of \cref{claim_key}), 
 this is arguably more succint and straightforward than that 
of \citet{chekuri2014submodular,gillenwater2012near}.

\section{Algorithms for Constrained DR-submodular Maximization}
\label{sec_algs}


Based on the properties,  we present two algorithms for
solving    \labelcref{setup}.
The first  is based on the local-global relation,  and the second  is a \algname{Frank-Wolfe} variant adapted for the non-monotone 
setting.  All  the omitted proofs are deferred to \cref{app_proofs_algs}. 

\subsection{An Algorithm Based on the Local-Global Relation}
\label{subsec_local_alg}

\IncMargin{1em}
\begin{algorithm}[htbp]
	\caption{\algname{two-phase Frank-Wolfe} for non-monotone
		{DR}-submodular
		maximization}\label{lg_fw}
 
	\KwIn{$\max_{\x \in \P} f(\x)$,
stopping tolerance $\epsilon_1, \epsilon_2$, \#iterations $K_1, K_2$}
 
	{$\x \leftarrow \algname{Non-convex Frank-Wolfe}(f, \P, K_1, \epsilon_1, \x^\pare{0}) $    \tcp*{$\x^\pare{0}\in \P$}}
	{$\Q \leftarrow  \P\cap \{\y\in \R_+^n\;|\;\y\leq \bar \u -\x \}$\;}
	{  $\z \leftarrow \algname{Non-convex Frank-Wolfe}(f, \Q, K_2, \epsilon_2, \z^\pare{0}) $    \tcp*{$\z^\pare{0}\in \Q$}}
 
	\KwOut{$\argmax \{f(\x), f(\z)\}$ \;}
\end{algorithm}
\DecMargin{1em}

We summarize the \algname{two-phase} algorithm in 
\cref{lg_fw}. It is generalized from the ``two-phase'' method
in \citet{chekuri2014submodular,gillenwater2012near}. It invokes some non-convex
solver (we use the \algname{non-convex Frank-Wolfe} by \citet{lacoste2016convergence}; pseudocode is included in   \cref{nonconvex_fw} of  \cref{sec_nonconvex_fw}) to find  approximately stationary
points in $\P$ and $\Q$,  respectively, then returns
the solution with the larger function value.  
Though we use  \algname{Non-convex Frank-Wolfe} 
as the subroutine here, it is worth noting that any algorithm that is 
guaranteed to find an approximately stationary point can
be plugged into \cref{lg_fw} as the subroutine. 
We give an improved approximation bound  by
considering more properties of DR-submodular functions.  
Borrowing the results from \citet{lacoste2016convergence} for the 
\algname{non-convex Frank-Wolfe} subroutine, we get  the following, 
\begin{theorem}\label{rate_local_fw}
	The output of \cref{lg_fw} satisfies,
	\begin{align}\label{eq_local_rates} 
	&\max \{f(\x), f(\z)\}  \geq 
	 \frac{\mu}{8}\left(\|\x -\x^*\|^2 + \|\z - \z^*\|^2\right )\\\notag 
	& \qquad \qquad  + \frac{1}{4}\left[f(\x^*)  - \min \left\{\frac{\max \{2h_1, C_f(\P)\}}{\sqrt{K_1+1}} , \epsilon_1\right\}   - \min\left\{\frac{\max \{2h_2, C_f(\Q)\}}{\sqrt{K_2+1}} , \epsilon_2\right\} \right], 
	\end{align}  
	where $h_1 := \max_{\x\in\P}f(\x) - f(\x^\pare{0})$, $h_2 := \max_{\z\in\Q}f(\z) - f(\z^\pare{0})$ are the initial suboptimalities,  
	$C_f(\P) : = \sup_{\x, \v\in \P, \gamma\in [0,  1],  \y = \x + \gamma (\v - \x )}\frac{2}{\gamma^2}(f(\y) - f(\x) - {(\y - \x)^\trans}{\nabla f(\x)})$ is  the curvature of  $f$ w.r.t.   $\P$, and $\z^*= \x\vee \x^* -\x$.
\end{theorem}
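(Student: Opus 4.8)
The plan is to obtain \cref{rate_local_fw} as an almost immediate consequence of \cref{local_global}, the only additional ingredient being the convergence rate of the \algname{non-convex Frank-Wolfe} subroutine. Let $\x$ be the point returned by the first phase (\algname{Non-convex Frank-Wolfe} on $\P$ with budget $K_1$ and tolerance $\epsilon_1$), and let $\z$ be the point returned by the second phase (on $\Q$ with budget $K_2$ and tolerance $\epsilon_2$). The set built in \cref{lg_fw}, namely $\Q = \P\cap\{\y\in\R_+^n \mid \y\le\bar\u-\x\}$, coincides with the set $\{\y\in\P \mid \y\le\bar\u-\x\}$ used in \cref{local_global}, since $\P\subseteq[0,\bar\u]\subseteq\R_+^n$; moreover $\Q$ is constructed from the \emph{first-phase} output $\x$, exactly as required there. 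Hence $\x\in\P$, $\z\in\Q$, and \cref{local_global} applies verbatim, giving
\[
\max\{f(\x),f(\z)\} \ge \tfrac14\big[f(\x^*) - g_{\P}(\x) - g_{\Q}(\z)\big] + \tfrac{\mu}{8}\big(\|\x-\x^*\|^2 + \|\z-\z^*\|^2\big),
\]
with $\z^* = \x\vee\x^* - \x$. It therefore only remains to bound the two non-stationarity terms $g_{\P}(\x)$ and $g_{\Q}(\z)$.

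The key observation is that the non-stationarity measure $g_{\Q'}(\cdot)=\max_{\v\in\Q'}\dtp{\v-\cdot}{\nabla f(\cdot)}$ from \labelcref{non_stationary} is precisely the \algname{Frank-Wolfe} gap tracked by the subroutine of \citet{lacoste2016convergence}. I would invoke their rate directly: running the subroutine on a convex set $\Q'$ for $K$ iterations from a starting point with initial suboptimality $h$ and curvature $C_f(\Q')$ returns a point whose gap is at most $\max\{2h,C_f(\Q')\}/\sqrt{K+1}$. Specializing to the two phases, with the initial suboptimalities $h_1,h_2$ and curvatures $C_f(\P),C_f(\Q)$ defined in the statement, gives $g_{\P}(\x)\le\max\{2h_1,C_f(\P)\}/\sqrt{K_1+1}$ and $g_{\Q}(\z)\le\max\{2h_2,C_f(\Q)\}/\sqrt{K_2+1}$. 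The tolerances $\epsilon_1,\epsilon_2$ enter through the early-stopping rule: whenever the subroutine halts because the gap has dropped below $\epsilon_i$, the returned gap is also $\le\epsilon_i$, so each non-stationarity is bounded by the corresponding $\min\{\cdot,\epsilon_i\}$ appearing in \labelcref{eq_local_rates}.

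Finally, since $g_{\P}(\x)$ and $g_{\Q}(\z)$ enter the bound of \cref{local_global} with a negative sign, replacing them by the upper bounds above only decreases the right-hand side, so the inequality is preserved and we recover exactly \labelcref{eq_local_rates}. I expect the conceptual content to be light—this is essentially a substitution—so the only genuinely delicate point is the bookkeeping around the stopping tolerance: one must argue that the returned iterate simultaneously inherits the anytime $1/\sqrt{K+1}$ rate and the $\epsilon_i$ guarantee, so that the $\min$ form is justified rather than only one of the two bounds. A minor remaining detail is to note that the curvature $C_f(\Q)$ of the shrunken region $\Q\subseteq\P$ is well-defined and finite, which holds since $C_f(\Q)\le C_f(\P)$.
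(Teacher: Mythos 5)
Your proposal is correct and is essentially the paper's own proof: the paper likewise invokes Theorem 1 of \citet{lacoste2016convergence} to bound the two non-stationarities by $g_{\P}(\x) \leq \min\{\max\{2h_1, C_f(\P)\}/\sqrt{K_1+1}, \epsilon_1\}$ and $g_{\Q}(\z) \leq \min\{\max\{2h_2, C_f(\Q)\}/\sqrt{K_2+1}, \epsilon_2\}$, and then plugs these directly into \cref{local_global}. The bookkeeping you flag as delicate (matching the algorithm's $\Q$ with the set in \cref{local_global}, and justifying the $\min$ form) is exactly what the paper leaves implicit, handling the latter by modifying the subroutine in \cref{nonconvex_fw} to return the iterate with smallest gap $g_{k'}=\min_{0\leq k\leq K} g_k$ rather than the last iterate.
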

\cref{rate_local_fw} indicates that \cref{lg_fw}
has a $1/4$ approximation guarantee and $1/\sqrt{k}$ rate.
However, it has good empirical performance as demonstrated by the experiments in \cref{sec_exp}.
Informally, this can be partially explained by the  term 
 $\frac{\mu}{8}\left(\|\x -\x^*\|^2 + \|\z - \z^*\|^2\right )$ in 
\labelcref{eq_local_rates}: if $\x$
is away from $\x^*$, this term will augment the bound; if $\x$ is close to $\x^*$,
by the smoothness of $f$, it should be close 
to optimal.





\subsection{The Non-monotone \algname{Frank-Wolfe} Variant}\label{sec_fw_variant}

\IncMargin{1em}
\begin{algorithm}[htbp]
	\caption{Non-monotone \algname{Frank-Wolfe} variant for 
		{DR}-submodular
		maximization}\label{fw-non-monotone}
	\KwIn{$\max_{\x \in \P} f(\x)$,
		prespecified step size $\gamma \in (0, 1]$}
	{$\x^\pare{0} \leftarrow 0$, $t^\pare{0}\leftarrow 0$, $k\leftarrow 0$\tcp*{$k:$ iteration index, $t^\pare{k}:$ cumulative step size}}
	\While{$t^\pare{k} <  1$}{
		{$\v^\pare{k} \leftarrow   \argmax_{\v\in\P, \textcolor{blue}{\v\leq {\bar \u}-\x^\pare{k}}} \dtp{\v}{ \nabla f(\x^\pare{k})}$\tcp*{\emph{shrunken LMO}} \label{new_lmo}}
		{use uniform step size $\gamma_k = \gamma$;  set $\gamma_k \leftarrow \min\{\gamma_k, 1-t^\pare{k} \}$\;}
		{$\x^\pare{k+1}\leftarrow \x^\pare{k} + \gamma_k \v^\pare{k}$, $t^\pare{k+1} \leftarrow t^\pare{k} + \gamma_k$,  $k\leftarrow k+1$\;}
	}
	\KwOut{$\x^\pare{K}$ \tcp*{assuming there are $K$ iterations in total}}
\end{algorithm}
\DecMargin{1em}


\cref{fw-non-monotone} summarizes the non-monotone \algname{Frank-Wolfe} variant, which  is inspired by the  unified continuous
greedy algorithm in \citet{feldman2011unified} for
maximizing the multilinear extension of a submodular
set function.
It initializes the solution $\x^\pare{0}$ to be 0, 
and maintains $t^\pare{k}$ as the cumulative
step size. At iteration $k$,
it  maximizes the linearization 
of $f$ over a ``shrunken'' constraint
set: $\{\v| \v\in \P, \v\leq \bar \u-\x^\pare{k}\}$, which
is different from the classical LMO 
of Frank-Wolfe-style algorithms (hence we refer to it as the ``shrunken LMO''). Then it employs an update step in the direction $\v^\pare{k}$ chosen by the LMO 
with a   uniform 
step size $\gamma_k = \gamma$.
The cumulative
step size $t^\pare{k}$ is used to ensure that 
the overall step sizes sum to one, thus the output
solution $\x^\pare{K}$ is a convex combination
of the  LMO outputs, hence also lies
in $\P$.

The shrunken LMO (Step \labelcref{new_lmo}) is  
the key difference compared to  the monotone  \algname{Frank-Wolfe} variant in  \citet{bian2017guaranteed}.  The extra  constraint $\v\leq {\bar \u}-\x^\pare{k}$ is added to prevent too aggressive growth of the 
solution, since in the non-monotone setting such aggressive growth  may hurt the overall performance. 
The next theorem states the guarantees of  \cref{fw-non-monotone}. 
\begin{theorem}[]\label{thm-e}
	Consider  \cref{fw-non-monotone} with uniform step size $\gamma$.   For $k = 1,..., K$ it holds that,
	\begin{flalign}
	f(\x^\pare{k}) \geq t^\pare{k} e^{-t^\pare{k}}f(\x^*) - \frac{L D^2}{2}k\gamma^2 - O(\gamma^2)f(\x^*). 
	\end{flalign}
\end{theorem}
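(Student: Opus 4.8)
The plan is to set up a one-step progress recursion for the iterates $\x^\pare{k}$ and then unroll it. Two facts drive the argument. First, the shrunken LMO (Step~\labelcref{new_lmo}) keeps each iterate coordinatewise small, $\x^\pare{k} \le (1-(1-\gamma)^k)\bar\u$; this is precisely what upgrades the bound from the naive $t^\pare{k}$-type rate to the $e^{-t^\pare{k}}$-type rate. Second, a per-step inequality obtained by chaining optimality of the shrunken LMO, concavity of $f$ along nonnegative directions, the standard non-monotone ``$\vee$-lemma'' for non-negative DR-submodular functions, and the quadratic lower bound \labelcref{eq_quad_lower_bound}.

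For the per-step inequality, at iteration $k$ I would first exhibit a good feasible competitor for the shrunken LMO. The point $\v := (\x^*\vee\x^\pare{k}) - \x^\pare{k}$ is nonnegative, lies in $\P$ (it is $\le \x^*\in\P$ and $\P$ is down-closed), and satisfies $\v \le \bar\u - \x^\pare{k}$ since $\x^*\vee\x^\pare{k}\le\bar\u$; hence it is feasible in Step~\labelcref{new_lmo}, giving $\dtp{\v^\pare{k}}{\nabla f(\x^\pare{k})} \ge \dtp{(\x^*\vee\x^\pare{k})-\x^\pare{k}}{\nabla f(\x^\pare{k})}$. Concavity of $f$ along the nonnegative direction $\v$ then yields $\dtp{(\x^*\vee\x^\pare{k})-\x^\pare{k}}{\nabla f(\x^\pare{k})} \ge f(\x^*\vee\x^\pare{k}) - f(\x^\pare{k})$. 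Finally I would invoke the non-monotone lemma for non-negative DR-submodular functions, $f(\x\vee\y)\ge (1-\max_i x_i/\bar u_i)\,f(\y)$, together with the coordinate bound above, to obtain $f(\x^*\vee\x^\pare{k}) \ge (1-\gamma)^k f(\x^*)$. Combining, $\dtp{\v^\pare{k}}{\nabla f(\x^\pare{k})} \ge (1-\gamma)^k f(\x^*) - f(\x^\pare{k})$.

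Next, plugging the update $\x^\pare{k+1}=\x^\pare{k}+\gamma\v^\pare{k}$ into \labelcref{eq_quad_lower_bound} (the increment $\gamma\v^\pare{k}$ is nonnegative, and $\|\v^\pare{k}\|\le D$ since $\v^\pare{k},\mathbf{0}\in\P$) gives
\begin{align}\notag
f(\x^\pare{k+1}) \ge (1-\gamma)f(\x^\pare{k}) + \gamma(1-\gamma)^k f(\x^*) - \tfrac{LD^2}{2}\gamma^2 .
\end{align}
The coordinate-shrinkage claim itself I would establish by induction: $\v^\pare{j}\le\bar\u-\x^\pare{j}$ implies $\bar\u-\x^\pare{j+1}\ge(1-\gamma)(\bar\u-\x^\pare{j})$, and with $\x^\pare{0}=\mathbf{0}$ this yields $\bar\u-\x^\pare{k}\ge(1-\gamma)^k\bar\u$, hence $\max_i x^\pare{k}_i/\bar u_i\le 1-(1-\gamma)^k$ as used above.

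It then remains to unroll the linear recursion with $a_k := f(\x^\pare{k})$, $c := f(\x^*)$, and $a_0\ge 0$. A telescoping computation gives $a_k \ge \gamma k(1-\gamma)^{k-1}c - \tfrac{LD^2}{2}\gamma\,(1-(1-\gamma)^k)$; bounding $1-(1-\gamma)^k\le k\gamma$ turns the last term into $\tfrac{LD^2}{2}k\gamma^2$, and comparing $\gamma k(1-\gamma)^{k-1}$ with $t^\pare{k}e^{-t^\pare{k}}$ (where $t^\pare{k}=k\gamma$) via $\ln(1-\gamma)=-\gamma-O(\gamma^2)$ recovers the leading term $t^\pare{k}e^{-t^\pare{k}}c$ up to an $O(\gamma^2)c$ correction. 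I expect the main obstacle to be exactly this final bookkeeping: controlling the gap between the discrete factor $(1-\gamma)^{k-1}$ and the continuous $e^{-t^\pare{k}}$, and handling the truncated last step $\gamma_k=\min\{\gamma,1-t^\pare{k}\}$, tightly enough that the residual proportional to $f(\x^*)$ is only $O(\gamma^2)$ rather than $O(\gamma)$. The non-monotone $\vee$-lemma, though essential to the per-step bound, can be quoted from prior work on continuous DR-submodular maximization rather than reproved here.
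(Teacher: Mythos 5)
Your proposal is correct, and in its structural core it coincides with the paper's proof: your coordinate-shrinkage bound is exactly \cref{prop_non_fw}, the non-monotone ``$\vee$-lemma'' you quote is exactly \cref{lem_nonmonotone_fw}, and your per-step recursion---obtained from feasibility of $(\x^*\vee\x^\pare{k})-\x^\pare{k}$ in the shrunken LMO, concavity along non-negative directions, and the quadratic lower bound \labelcref{eq_quad_lower_bound}---is exactly \cref{claim3_1}. Where you genuinely depart from the paper is in resolving the recursion: the paper argues by induction, using $(1-\gamma)^{t/\gamma}\geq e^{-t}-O(\gamma)$ together with concavity of $g(t)=te^{-t}$ on $[0,2]$ to pass from $t^\pare{k}$ to $t^\pare{k+1}$, whereas you unroll the linear recursion in closed form to get $f(\x^\pare{k})\geq k\gamma(1-\gamma)^{k-1}f(\x^*)-\tfrac{LD^2}{2}\gamma\left(1-(1-\gamma)^k\right)$. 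Your flagged worry---that comparing $(1-\gamma)^{k-1}$ with $e^{-t^\pare{k}}$ might leave an $O(\gamma)f(\x^*)$ rather than $O(\gamma^2)f(\x^*)$ residual---actually resolves in your favor, and the reason is the exponent $k-1$ rather than $k$: since $\ln(1-\gamma)\geq-\gamma-\gamma^2$ for $\gamma\leq 1/2$, one has $(k-1)\ln(1-\gamma)\geq -k\gamma+\gamma\bigl(1-(k-1)\gamma\bigr)\geq -k\gamma$ whenever $(k-1)\gamma\leq 1$, which holds because $t^\pare{k}=k\gamma\leq 1$; hence $(1-\gamma)^{k-1}\geq e^{-t^\pare{k}}$ outright, and your unrolled bound implies the theorem with no residual at all on the leading term (the $O(\gamma^2)f(\x^*)$ term is not even needed), together with $\gamma\left(1-(1-\gamma)^k\right)\leq k\gamma^2$ for the smoothness term. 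This is, if anything, slightly cleaner than the paper's inductive bookkeeping, which injects an $O(\gamma^2)$ error at every step and absorbs the accumulation into a single $O(\gamma^2)$ symbol. The one point you should still pin down---a gap shared with the paper, whose claim and induction also assume uniform steps throughout---is the truncated final step $\gamma_{K-1}=1-t^\pare{K-1}<\gamma$: your telescoping uses $t^\pare{k}=k\gamma$ exactly, so either restrict to $1/\gamma$ being an integer or treat the last step separately (the per-step bound holds for any step size at most $\gamma$, and a direct one-step estimate then covers it).
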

By observing  that $t^\pare{K} = 1$ and applying \cref{thm-e}, we get the following \namecref{coro_e}: 
\begin{corollary}[]\label{coro_e}
	The output of \cref{fw-non-monotone}  satisfies
	$f(\x^\pare{K}) \geq  e^{-1}f(\x^*) - \frac{L D^2}{2K} - O\left(\frac{1}{K^2}\right)f(\x^*)$. 
\vspace{-0.2cm}
\end{corollary}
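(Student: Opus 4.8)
The plan is to obtain \cref{coro_e} as a direct specialization of \cref{thm-e} to the final iterate, combined with elementary step-size bookkeeping. First I would establish that the cumulative step size reaches exactly one at termination, i.e.\ $t^\pare{K} = 1$. This follows from the structure of \cref{fw-non-monotone}: the while loop continues precisely while $t^\pare{k} < 1$, and the clipping rule $\gamma_k \leftarrow \min\{\gamma_k, 1 - t^\pare{k}\}$ prevents the cumulative step size from exceeding one, so the process halts exactly when $t^\pare{K}=1$. To keep the subsequent arithmetic clean, I would take the uniform step size to be $\gamma = 1/K$, in which case every step equals $\gamma$ (no clipping is ever triggered) and $t^\pare{K} = K\gamma = 1$ exactly; this simultaneously fixes the identity $\gamma = 1/K$ that drives the rate.

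Next I would instantiate \cref{thm-e} at $k = K$, giving
\[
f(\x^\pare{K}) \geq t^\pare{K} e^{-t^\pare{K}} f(\x^*) - \frac{L D^2}{2} K\gamma^2 - O(\gamma^2) f(\x^*).
\]
Substituting $t^\pare{K} = 1$ collapses the leading coefficient to $t^\pare{K} e^{-t^\pare{K}} = e^{-1}$, which produces the desired $1/e$ approximation factor on $f(\x^*)$.

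It then remains to rewrite the two error terms in terms of $K$ alone, which is where I would use $\gamma = 1/K$. The smoothness term becomes $\frac{L D^2}{2} K \gamma^2 = \frac{L D^2}{2} K \cdot K^{-2} = \frac{L D^2}{2K}$, and the higher-order term becomes $O(\gamma^2) = O(1/K^2)$, so that
\[
f(\x^\pare{K}) \geq e^{-1} f(\x^*) - \frac{L D^2}{2K} - O\!\left(\frac{1}{K^2}\right) f(\x^*),
\]
exactly as stated. I do not expect a substantive obstacle here, since all the analytic content is already carried by \cref{thm-e}; the only point demanding care is the step-size accounting, namely ensuring $t^\pare{K} = 1$ holds exactly so that the factor $t e^{-t}$ evaluates cleanly to $e^{-1}$ rather than to $t^\pare{K} e^{-t^\pare{K}}$ for some $t^\pare{K}$ slightly below one. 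Choosing $\gamma = 1/K$ (equivalently, running $K = 1/\gamma$ iterations for integer $1/\gamma$) removes this concern entirely.
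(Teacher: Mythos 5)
Your proposal is correct and follows essentially the same route as the paper, which likewise obtains the corollary by setting $t^{\pare{K}}=1$ in \cref{thm-e} and identifying $\gamma$ with $1/K$ so that $\frac{LD^2}{2}K\gamma^2 = \frac{LD^2}{2K}$ and $O(\gamma^2)=O(1/K^2)$. Your extra care about the clipping rule and exact termination at $t^{\pare{K}}=1$ is a harmless elaboration of what the paper states in one line.
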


\cref{coro_e} shows that \cref{fw-non-monotone} 
enjoys a sublinear convergence rate towards
some point $\x^\pare{K}$ inside $\P$, with a
$1/e$ approximation guarantee. 

\textbf{Proof sketch of \cref{thm-e}: }
The proof is by induction. To prepare the building blocks, we first of all  show that  the growth of $\x^\pare{k}$ is indeed bounded,
\begin{restatable}[]{lemma}{restalemmatwo}
\label{prop_non_fw}
	Assume $\x^\pare{0} = 0$. For $k=0,..., K-1$, it holds  $x_i^\pare{k}\leq \bar u_i[1-(1-\gamma)^{t^\pare{k}/\gamma}], \forall i\in [n]$.
\end{restatable}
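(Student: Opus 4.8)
\textbf{Proof plan for \cref{prop_non_fw}.} The plan is to argue by induction on the iteration index $k$, exploiting the coordinate-wise cap enforced by the shrunken LMO in Step~\labelcref{new_lmo}. For every iteration $k=0,\dots,K-1$ the algorithm has so far used only the uniform step size $\gamma$ (the truncation $\gamma_k\leftarrow\min\{\gamma_k,\,1-t^\pare{k}\}$ can become active only at the very last step $K-1\to K$, which lies outside the claimed range), so $t^\pare{k}=k\gamma$, hence $t^\pare{k}/\gamma=k$, and the target inequality reads $x_i^\pare{k}\leq \bar u_i\bigl[1-(1-\gamma)^k\bigr]$ for each coordinate $i$. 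For the base case $k=0$, the initialization $\x^\pare{0}=0$ gives $x_i^\pare{0}=0 = \bar u_i\bigl[1-(1-\gamma)^0\bigr]$, so the bound holds with equality.

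For the inductive step I would start from the update rule $x_i^\pare{k+1}=x_i^\pare{k}+\gamma v_i^\pare{k}$. The decisive input is the constraint imposed by the shrunken LMO, namely $\v^\pare{k}\leq \bar\u-\x^\pare{k}$, which coordinate-wise reads $v_i^\pare{k}\leq \bar u_i - x_i^\pare{k}$. Substituting this and collecting terms yields
\begin{align}\notag
x_i^\pare{k+1}
  \leq x_i^\pare{k} + \gamma\bigl(\bar u_i - x_i^\pare{k}\bigr)
  = (1-\gamma)\,x_i^\pare{k} + \gamma\,\bar u_i .
\end{align}
Because $\gamma\in(0,1]$ we have $1-\gamma\geq 0$, so the inductive hypothesis $x_i^\pare{k}\leq \bar u_i\bigl[1-(1-\gamma)^k\bigr]$ may be inserted while preserving the inequality, giving
\begin{align}\notag
x_i^\pare{k+1}
  \leq (1-\gamma)\,\bar u_i\bigl[1-(1-\gamma)^k\bigr] + \gamma\,\bar u_i
  = \bar u_i\bigl[1-(1-\gamma)^{k+1}\bigr],
\end{align}
which is exactly the claim at index $k+1$.

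The argument has no real obstacle: everything hinges on two facts, (i) the extra constraint $\v^\pare{k}\leq \bar\u-\x^\pare{k}$ built into the shrunken LMO, and (ii) the sign condition $1-\gamma\geq 0$ needed to propagate the inductive hypothesis through the multiplicative factor $(1-\gamma)$. The only point demanding care is the bookkeeping of the cumulative step size: one must confirm that for all $k\leq K-1$ no truncation has yet occurred, so that $t^\pare{k}=k\gamma$ and the exponent $t^\pare{k}/\gamma$ coincides with the integer $k$ driving the induction. This monotone growth bound is then exactly what will be needed in \cref{sec_fw_variant} to control how far $\x^\pare{k}$ stays from the upper bound $\bar\u$ when establishing \cref{thm-e}.
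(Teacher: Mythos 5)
Your proposal is correct and follows essentially the same argument as the paper: induction on $k$, with the base case from $\x^\pare{0}=0$ and the inductive step combining the shrunken LMO constraint $v_i^\pare{k}\leq \bar u_i - x_i^\pare{k}$ with the nonnegativity of $1-\gamma$. The only cosmetic difference is that you make explicit the bookkeeping $t^\pare{k}/\gamma = k$ (no truncation before the final step), whereas the paper keeps the exponent symbolic and uses $t^\pare{k+1}=t^\pare{k}+\gamma$ directly; both are valid.
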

Then the following \namecref{lem_nonmonotone_fw} provides a lower bound, which gets  the global optimum involved, 
\begin{restatable}[Generalized from Lemma 7 in  \citet{chekuri2015multiplicative}]{lemma}{restalemmathree}
	\label{lem_nonmonotone_fw}
Given  $\bmtheta\in (0, \bar \u]$, let $\lambda' = \min_{i\in [n]} \frac{\bar u_i}{\theta_i}$. Then for all  $\x\in [0, \bmtheta]$, it holds  $f(\x\vee \x^*) \geq (1-\frac{1}{\lambda'})f(\x^*)$. 
\end{restatable}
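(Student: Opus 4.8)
The plan is to reduce the statement to a one-dimensional concavity argument along the non-negative direction that connects $\x^*$ to $\x\vee\x^*$. First I would set $\kappa := 1/\lambda' = \max_{i\in[n]}\theta_i/\bar u_i$, which lies in $(0,1]$ because $\bmtheta\in(0,\bar\u]$. Since $\x\le\bmtheta$, this gives $\x\le\kappa\bar\u$ coordinatewise. I then define the displacement $\d := \x\vee\x^* - \x^* = (\x-\x^*)\vee 0 \ge 0$, and note that, because $\x^*\ge 0$, we have $\d\le\x\le\kappa\bar\u$.

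The crux is to exhibit an auxiliary point $\w := \x^* + \tfrac1\kappa\d$ and verify that $\w\in\X=[0,\bar\u]$. The lower bound is clear, since $\w\ge\x^*\ge 0$. For the upper bound I argue coordinatewise: on coordinates with $d_i=0$ we have $w_i=x^*_i\le\bar u_i$; on coordinates with $d_i>0$ (so $d_i=x_i-x^*_i$) we have $w_i = (1-\tfrac1\kappa)x^*_i + \tfrac1\kappa x_i \le (1-\tfrac1\kappa)x^*_i + \bar u_i \le \bar u_i$, using $x_i\le\kappa\bar u_i$ together with $\tfrac1\kappa\ge 1$ and $x^*_i\ge 0$. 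I expect this feasibility check to be the only genuinely delicate step: it is exactly where $\kappa\le 1$ and the nonnegativity of $\x^*$ are used, and it is the reason the hypothesis is restricted to $\bmtheta\le\bar\u$ (i.e. $\lambda'\ge 1$).

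With $\w\in\X$ in hand, I would invoke the fact that a DR-submodular $f$ is concave along any non-negative direction, so $s\mapsto f(\x^*+s\d)$ is concave on $s\in[0,\tfrac1\kappa]$; this whole segment lies in $\X$ by convexity of the box and the endpoints $\x^*,\w\in\X$. Writing $\x\vee\x^* = \x^*+\d$ as the convex combination $(1-\kappa)\,\x^* + \kappa\,\w$ (indeed $s=1 = (1-\kappa)\cdot 0 + \kappa\cdot\tfrac1\kappa$), concavity yields $f(\x\vee\x^*)\ge(1-\kappa)f(\x^*)+\kappa f(\w)$. Finally, nonnegativity of $f$ discards the last term, giving $f(\x\vee\x^*)\ge(1-\kappa)f(\x^*) = (1-\tfrac1{\lambda'})f(\x^*)$, as claimed. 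The degenerate case $\d=0$ (e.g. $\x=0$) is immediate, since then $\x\vee\x^*=\x^*$ and the bound holds trivially.
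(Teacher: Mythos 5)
Your proof is correct and follows essentially the same route as the paper's: both construct the auxiliary point $\x^* + \lambda'(\x\vee\x^* - \x^*)$ (your $\w$, the paper's $\y$), verify coordinatewise that it lies in $[0,\bar\u]$ using $\lambda'\theta_i \leq \bar u_i$ and $\x^*\geq 0$, write $\x\vee\x^*$ as the convex combination $(1-\tfrac{1}{\lambda'})\x^* + \tfrac{1}{\lambda'}\w$, and conclude via concavity along non-negative directions together with non-negativity of $f$. The only difference is cosmetic (working with $\kappa = 1/\lambda'$ instead of $\lambda'$).
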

Then the key ingredient for induction  is the relation between  $f(\x^{\pare{k+1}})$
and $f(\x^{\pare{k}})$ indicated by:  
	\begin{restatable}{claim}{restaclaimthree}
	\label{claim3_1}
	For $k = 0,...,K-1$ it holds
	$f(\x^{\pare{k+1}}) \geq (1-\gamma)  f(\x^{\pare{k}})   + \gamma(1-\gamma)^{t^\pare{k}/\gamma} f(\x^*) -\frac{L D^2}{2}\gamma^2$,
\end{restatable}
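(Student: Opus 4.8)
The plan is to chain four ingredients: the quadratic lower bound of \labelcref{eq_quad_lower_bound}, the optimality of the shrunken LMO, concavity along non-negative directions, and the two preceding lemmas. First I would apply the quadratic lower bound at $\x^\pare{k}$ along the non-negative direction $\gamma\v^\pare{k}$ (recall $\v^\pare{k}\in\P\subseteq\R_+^n$ and the update is $\x^\pare{k+1}=\x^\pare{k}+\gamma\v^\pare{k}$), giving
\begin{align}\notag
f(\x^\pare{k+1}) \geq f(\x^\pare{k}) + \gamma\dtp{\nabla f(\x^\pare{k})}{\v^\pare{k}} - \tfrac{L}{2}\gamma^2\|\v^\pare{k}\|^2 .
\end{align}
Since $0\in\P$ by down-closedness and $\v^\pare{k}\in\P$, we have $\|\v^\pare{k}\|\leq D$, so the last term is at least $-\tfrac{LD^2}{2}\gamma^2$. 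It then suffices to prove $\dtp{\nabla f(\x^\pare{k})}{\v^\pare{k}} \geq (1-\gamma)^{t^\pare{k}/\gamma}f(\x^*) - f(\x^\pare{k})$.

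Second, I would exhibit a good feasible competitor for the shrunken LMO. Set $\v^* := (\x^*\vee\x^\pare{k}) - \x^\pare{k}$. This is exactly where the extra constraint $\v\leq\bar \u-\x^\pare{k}$ in Step \labelcref{new_lmo} is essential: since $\x^*\leq\bar \u$ we get $\x^*\vee\x^\pare{k}\leq\bar \u$, hence $\v^*\leq\bar \u-\x^\pare{k}$; and since $0\leq\v^*\leq\x^*\in\P$, down-closedness yields $\v^*\in\P$. Thus $\v^*$ is feasible for the LMO, so by optimality of $\v^\pare{k}$ we have $\dtp{\nabla f(\x^\pare{k})}{\v^\pare{k}}\geq\dtp{\nabla f(\x^\pare{k})}{\v^*}$. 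As $\v^*$ is a non-negative direction, concavity of $f$ along $\pm\R_+^n$ directions gives
\begin{align}\notag
\dtp{\nabla f(\x^\pare{k})}{\v^*} \geq f(\x^*\vee\x^\pare{k}) - f(\x^\pare{k}).
\end{align}

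Third, I would lower-bound $f(\x^*\vee\x^\pare{k})$ via the growth bound. By \cref{prop_non_fw}, $\x^\pare{k}\in[0,\bmtheta]$ with $\theta_i = \bar u_i[1-(1-\gamma)^{t^\pare{k}/\gamma}]$, so $\lambda' = \min_i \bar u_i/\theta_i = [1-(1-\gamma)^{t^\pare{k}/\gamma}]^{-1}$ and $1-1/\lambda' = (1-\gamma)^{t^\pare{k}/\gamma}$. Feeding this into \cref{lem_nonmonotone_fw} gives $f(\x^*\vee\x^\pare{k}) \geq (1-\gamma)^{t^\pare{k}/\gamma}f(\x^*)$. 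Combining the three displayed inequalities and regrouping the $f(\x^\pare{k})$ terms produces $(1-\gamma)f(\x^\pare{k})+\gamma(1-\gamma)^{t^\pare{k}/\gamma}f(\x^*)-\tfrac{LD^2}{2}\gamma^2$, which is the claimed bound.

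The main obstacle is the bookkeeping that links the per-coordinate growth bound of \cref{prop_non_fw} to the multiplicative factor of \cref{lem_nonmonotone_fw}: one must see that substituting $\theta_i = \bar u_i[1-(1-\gamma)^{t^\pare{k}/\gamma}]$ yields precisely the factor $(1-\gamma)^{t^\pare{k}/\gamma}$, which is exactly the quantity the induction in \cref{thm-e} accumulates into the $t^\pare{K}e^{-t^\pare{K}}$ guarantee. A minor subtlety concerns the degenerate cases $t^\pare{k}=0$ (i.e., $k=0$, where $\bmtheta$ would have zero entries) and a possibly truncated final step $\gamma_k<\gamma$; in the former the factor degenerates to $1$ and the bound holds directly from $\x^\pare{0}\vee\x^*=\x^*$, so no separate argument is required beyond this observation.
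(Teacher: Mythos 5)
Your proposal is correct and follows essentially the same route as the paper's proof: quadratic lower bound plus the diameter bound, then comparing the shrunken-LMO output against the feasible competitor $\x^\pare{k}\vee\x^* - \x^\pare{k}$, using concavity along non-negative directions, and finally invoking \cref{prop_non_fw} together with \cref{lem_nonmonotone_fw} with $\bmtheta = \bar\u[1-(1-\gamma)^{t^\pare{k}/\gamma}]$ to obtain the factor $(1-\gamma)^{t^\pare{k}/\gamma}$. Your explicit treatment of the degenerate case $t^\pare{k}=0$ (where $\bmtheta$ has zero entries and $\lambda'$ is undefined) is a small refinement in rigor over the paper, which glosses over it, but it does not change the argument.
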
			
		which is derived by a combination of
		the quadratic lower bound in \labelcref{eq_quad_lower_bound},
\cref{prop_non_fw} and  \cref{lem_nonmonotone_fw}.

\textbf{Remarks on the two algorithms.}
Notice that  though the \algname{two-phase}  algorithm has a worse 
guarantee than the non-monotone \algname{Frank-Wolfe} variant, it is still of  interest: i) It allows 
 flexibility in using a wide range of existing solvers for finding an (approximately) stationary point. ii) The guarantees that we present  rely on a worst-case analysis. The empirical performance of the \algname{two-phase} algorithm is often comparable or 
better than that of  the \algname{Frank-Wolfe} variant. This  suggests to 
explore more  properties in concrete problems that may favor the \algname{two-phase}
algorithm,  which we leave  for  future work.

%


\vspace{-0.2cm}
\section{Experimental Results}
\label{sec_exp}

We test the performance 
of the analyzed algorithms, 
while considering   the following 
baselines: 1) \algname{quadprogIP}  \citep{xia2015globally}, which
is a global solver for non-convex quadratic programming; 2) Projected
gradient ascent (\algname{ProjGrad}) with diminishing step sizes ($\frac{1}{k+1}$, $k$ starts from 0).
 We run all the algorithms for 100 iterations. 
For the subroutine (\cref{nonconvex_fw}) of \algname{two-phase Frank-Wolfe}, we set
$\epsilon_1 = \epsilon_2 = 10^{-6}, K_1 = K_2 = 100$.
All the synthetic results   are  the 
average of 20 repeated experiments. 
All experiments were implemented using MATLAB.  Source code can be found at: 
\url{https://github.com/bianan/non-monotone-dr-submodular}.

\subsection{DR-submodular Quadratic Programming}

As a state-of-the-art global solver,  \algname{quadprogIP}\footnote{We used  the open source code provided by \citet{xia2015globally}, and the IBM CPLEX optimization studio {\url{https://www.ibm.com/jm-en/marketplace/ibm-ilog-cplex}}
as the subroutine.} \citep{xia2015globally}  
can find the global optimum (possibly  in  exponential time), which were used  to calculate  the approximation ratios.
Our problem instances are  synthetic DR-submodular quadratic objectives with down-closed  polytope
constraints, i.e., $f(\x) = \frac{1}{2}\x^\trans \bmH \x + \h^\trans \x +c$ and $\P = \{\x\in \R_+^n \ |\  \bmA \x \leq \b, \x \leq \bar \u, \bmA\in \R_{++}^{m\times n}, \b\in \R_+^m \}$. 
Both objective and constraints were randomly generated, in the 
following two manners: 

\textbf{1) Uniform distribution. }
$\bmH\in \R^{n\times n}$ is a symmetric matrix 
with uniformly distributed  entries in $[-1, 0]$; $\bmA\in \R^{m\times n}$ has uniformly distributed entries in $[\nu, \nu +1]$, where $\nu = 0.01$
is a small positive constant in order to make entries of $\bmA$
strictly positive.  


\setkeys{Gin}{width=0.33\textwidth}
 \begin{figure}[htbp]
   \center 
  \includegraphics[width=0.56\textwidth]{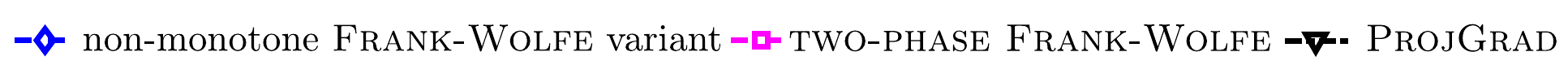}\\
  \vspace{-0.4cm}
 \subfloat[$m={\floor {0.5n}}$ \label{fig_quad_sub1}]{
 \includegraphics[]{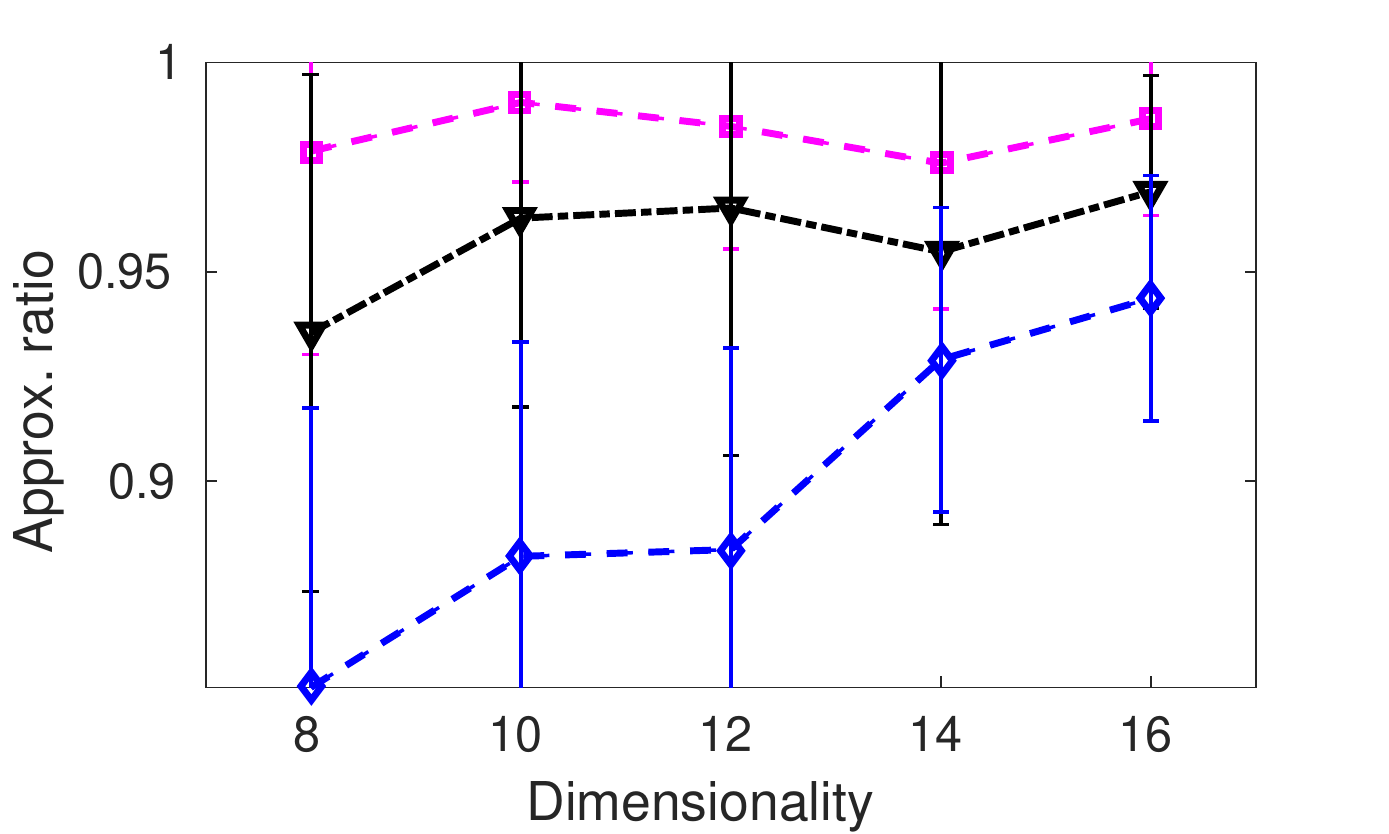}}
 \hspace{-0.4cm}
 \subfloat[$m=n$ \label{fig_quad_sub2}]{
 \includegraphics[]{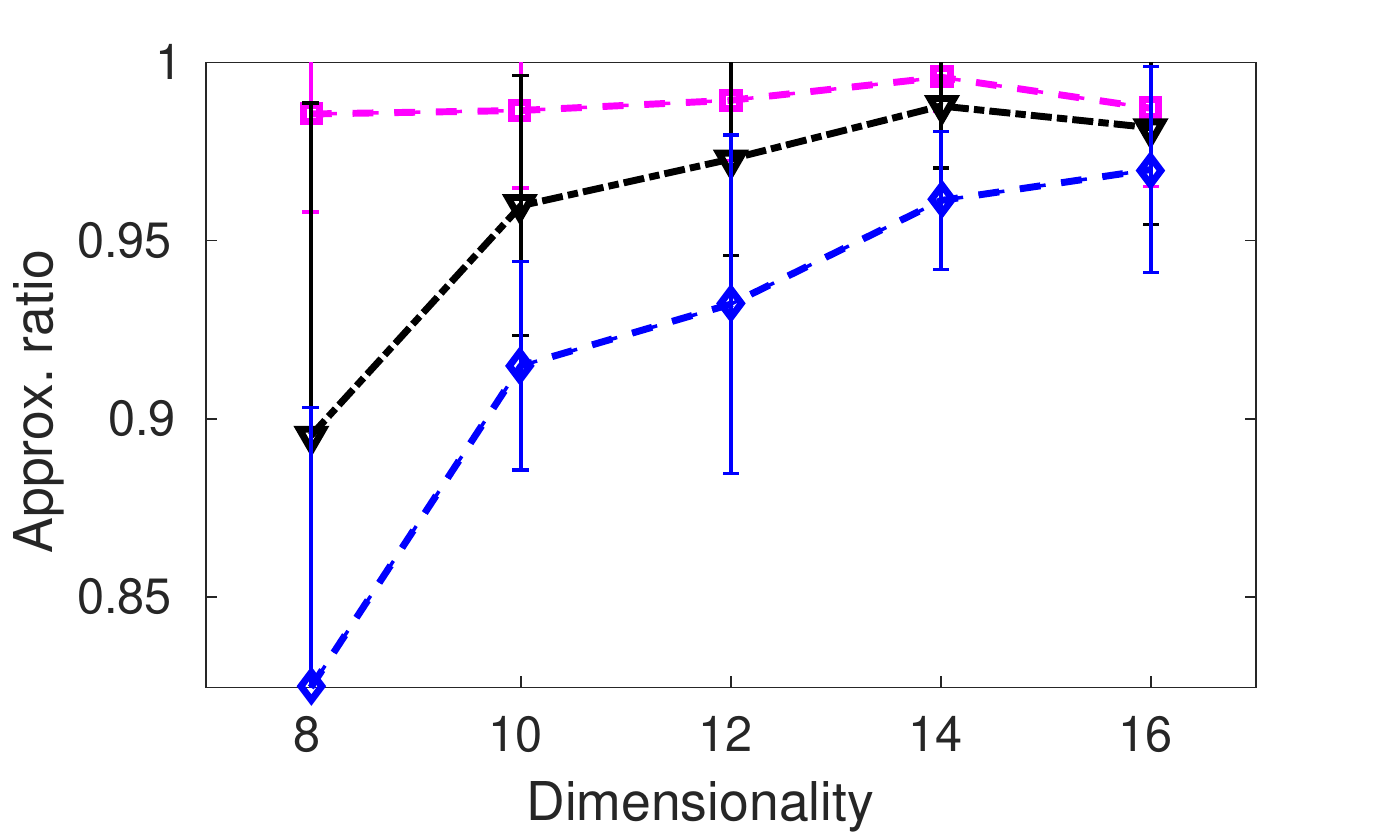}}
  \hspace{-0.4cm}
  \subfloat[$m=\floor {1.5n}$ \label{fig_quad_sub3}]{
  \includegraphics[]{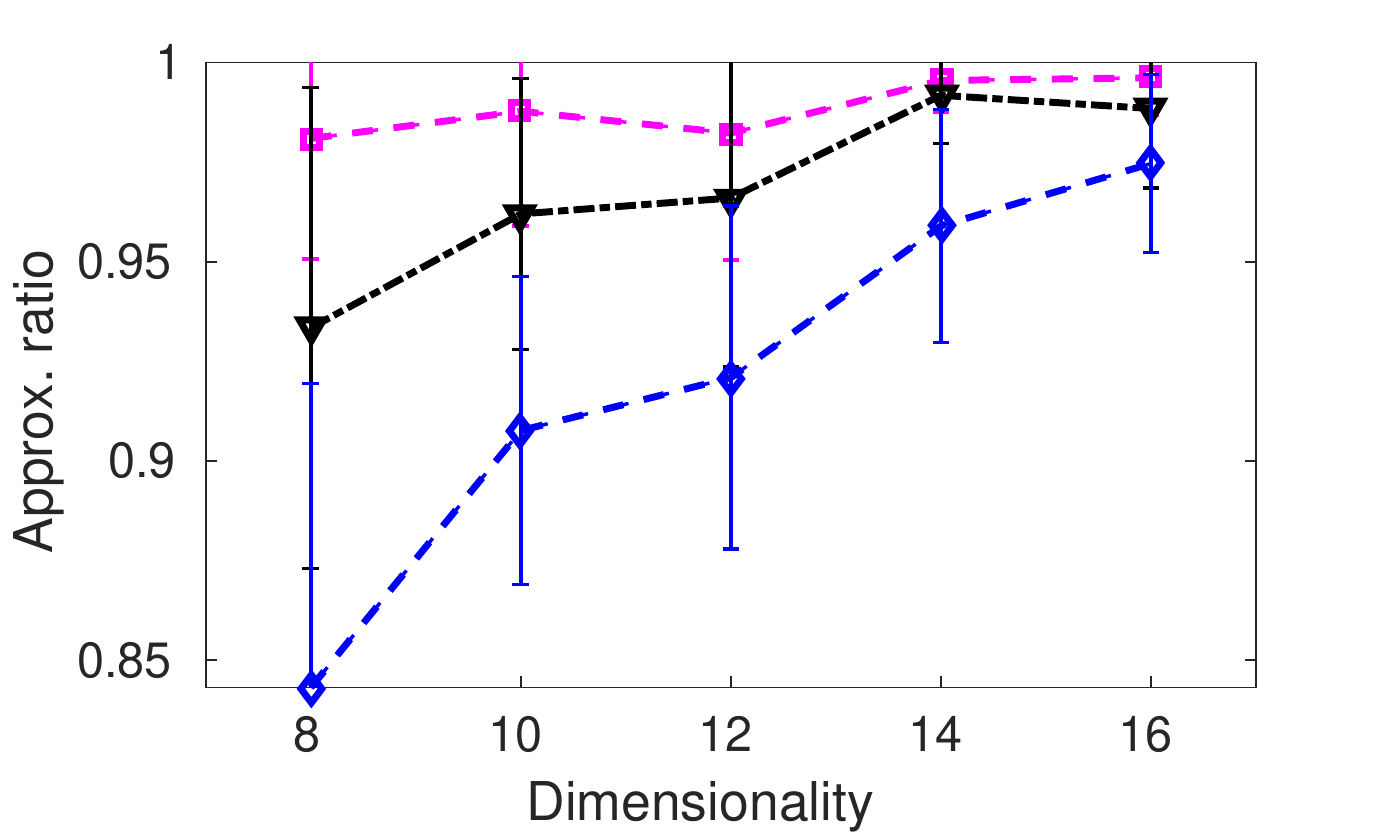}}
 \caption{Results on   DR-submodular  quadratic instances with uniform distribution.}
 \label{fig_quad}
\end{figure}

\textbf{2) Exponential distribution. }  The entries of $-\bmH$
and $\bmA$ were sampled from exponential distributions $\text{Exp}(\lambda)$ (For a random variable $y\geq 0$, its probability density function is $\lambda e^{-\lambda y}$, and for $y<0$, its  density is $0$).    Specifically, each entry of $-\bmH$
was sampled from $\text{Exp}(1)$, then the matrix $-\bmH$
was made to be  symmetric. Each entry of $\bmA$ was
sampled from $\text{Exp}(0.25) + \nu$, where $\nu = 0.01$
is a small positive constant.

In both the above two cases, we set   $\b = \mathbf{1}^m$, and    $\bar \u$ to be the tightest upper bound of $\P$ by  $\bar u_j = \min_{i\in [m] }\frac{b_i}{A_{ij}}, \forall j\in [n]$. 
In order to make $f$  non-monotone, 
we set $\h = -0.2*\bmH^\trans \bar \u$. 
To make sure that $f$ is non-negative, we first of all solve the 
problem $\min_{\x\in \P} \frac{1}{2}\x^\trans \bmH \x + \h^\trans \x$ using \algname{quadprogIP}, let the solution to be $\hat\x$, then  set 
$c= -f(\hat\x) + 0.1*|f(\hat\x)|$. 

 \setkeys{Gin}{width=0.33\textwidth}
 \begin{figure}[htbp]
   \center 
  \includegraphics[width=0.56\textwidth]{legend_h.pdf}\\
  \vspace{-0.4cm}
 \subfloat[$m= {\floor {0.5n}}$ \label{fig_quad_exp_sub1}]{
 \includegraphics[]{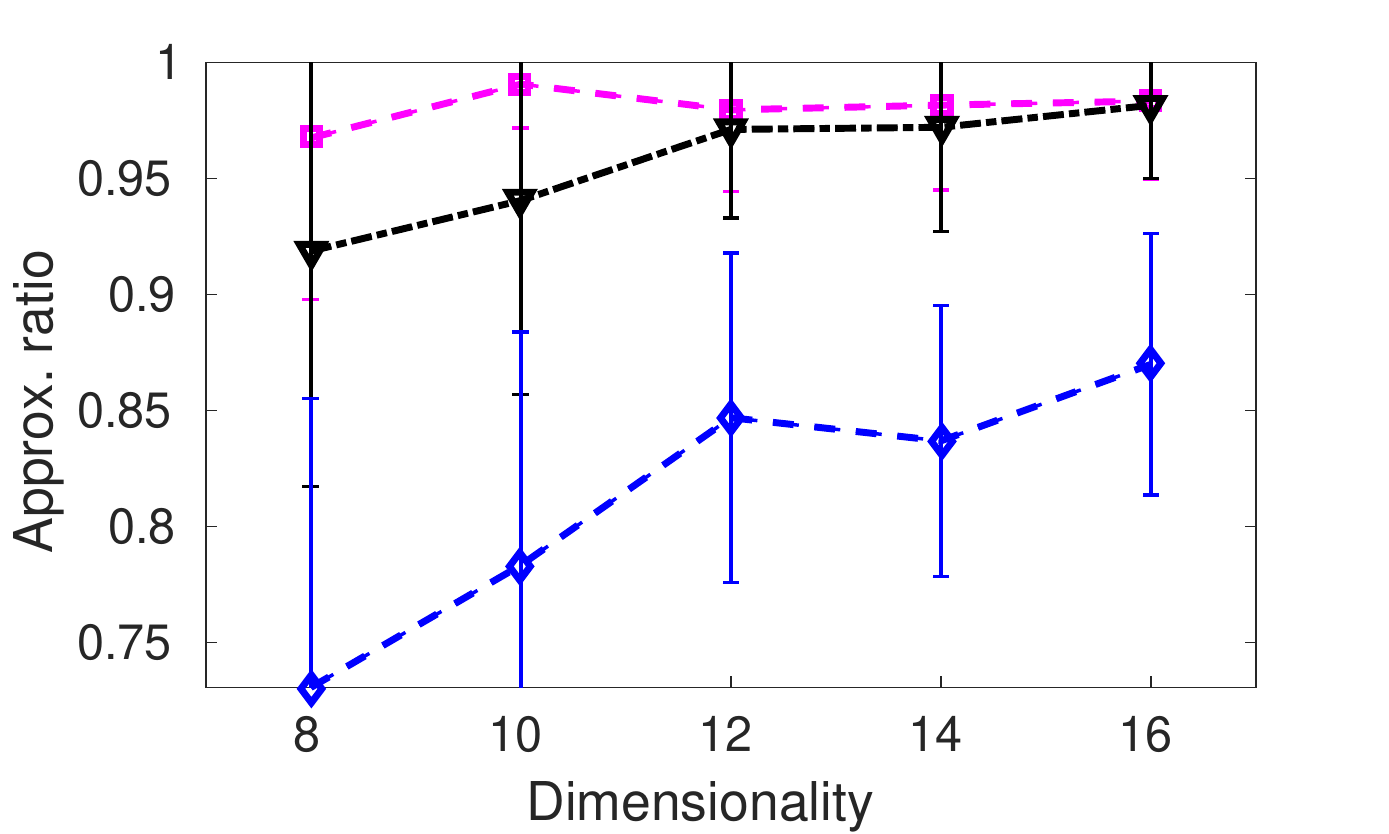}}
 \hspace{-0.4cm}
 \subfloat[$m=n$ \label{fig_quad_exp_sub2}]{
 \includegraphics[]{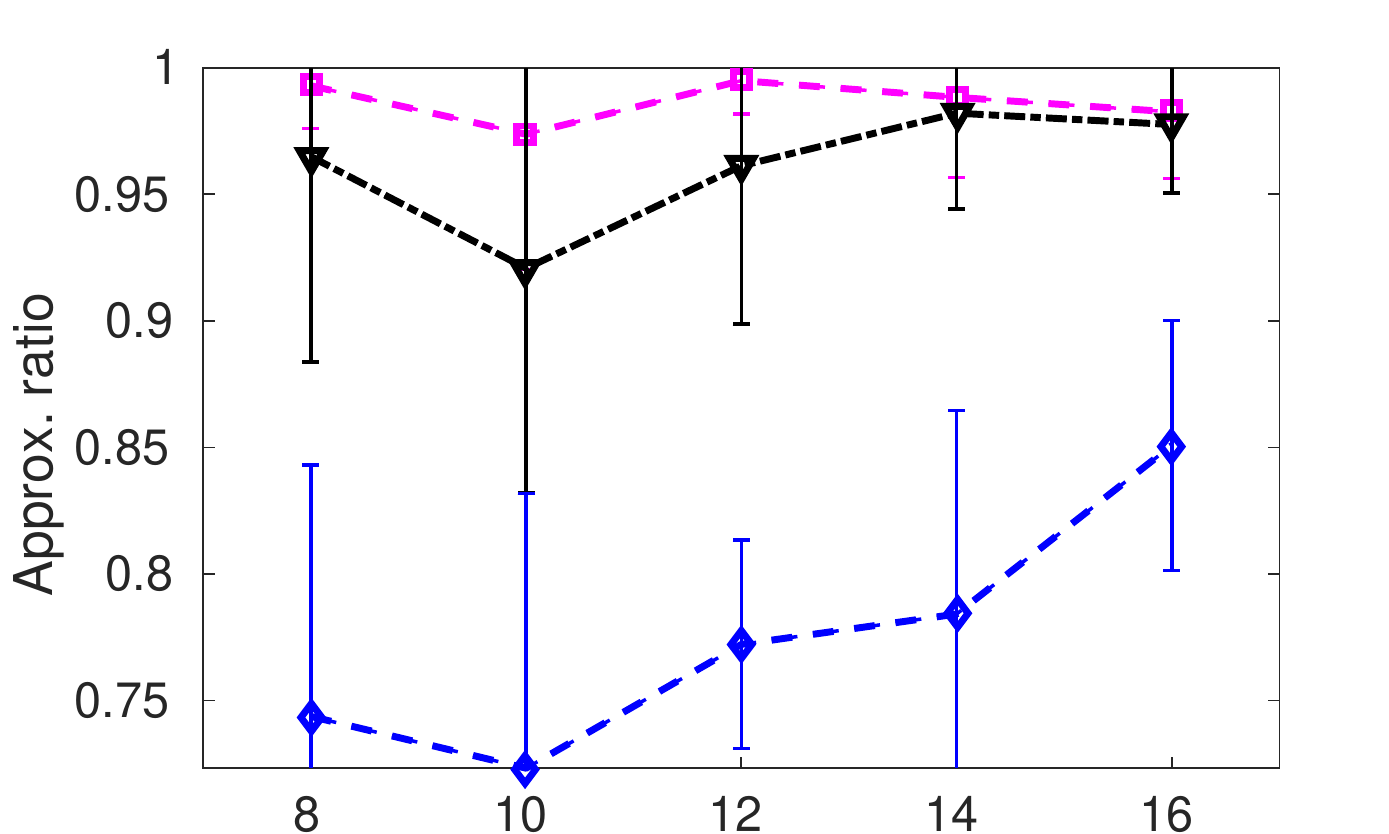}}
  \hspace{-0.4cm}
  \subfloat[$m=\floor {1.5n}$ \label{fig_quad_exp_sub3}]{
  \includegraphics[]{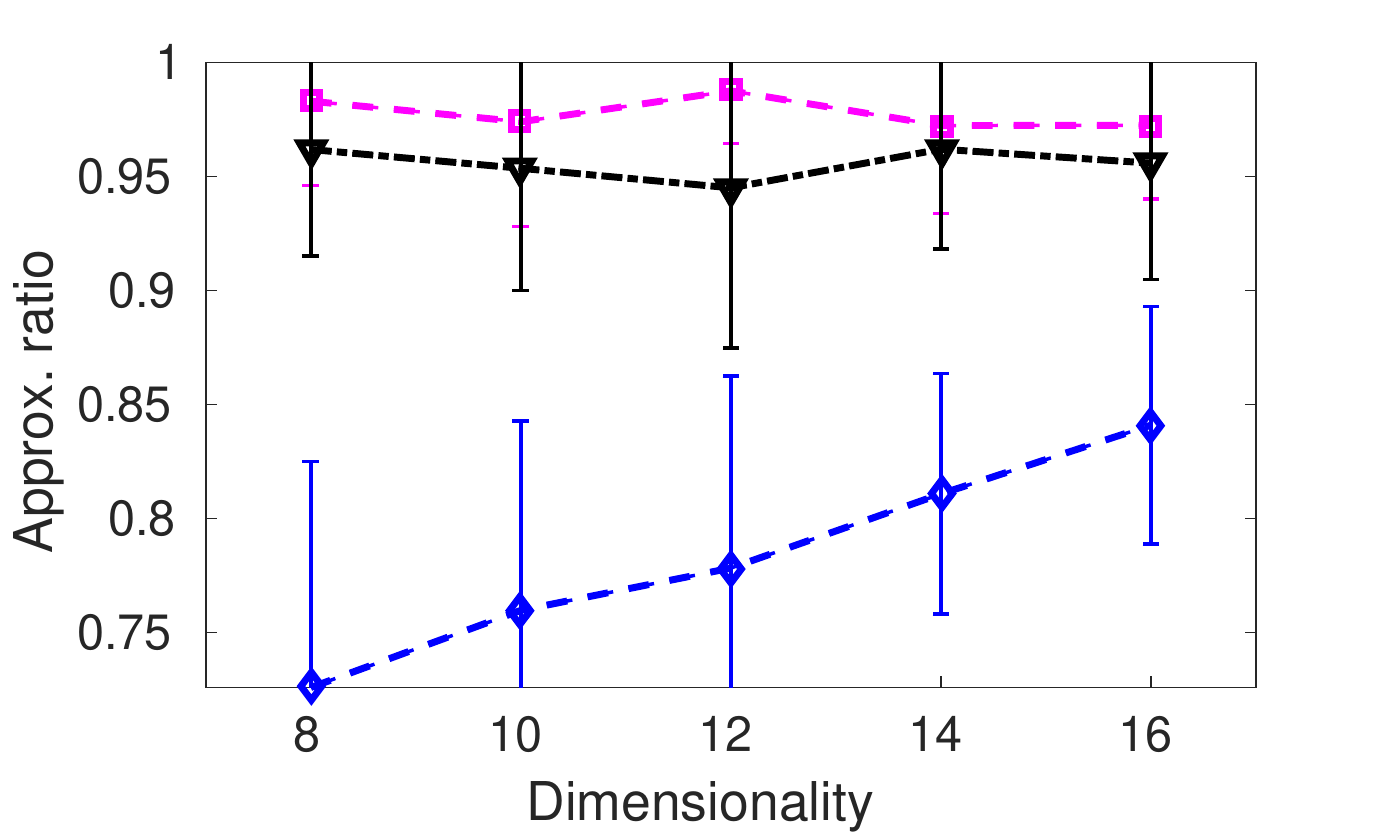}}
 \caption{Results on   quadratic instances with exponential  distribution.}
 \label{fig_quad_exp}
 \vspace{-0.44cm}
\end{figure}

The approximation ratios w.r.t.  dimensionalities ($n$) are plotted in \cref{fig_quad,fig_quad_exp}, for the two manners of data generation. We set the number of constraints to be $m=\floor {0.5n}$,  
$m=n$ and $m=\floor {1.5n}$ in \cref{fig_quad_sub1,fig_quad_sub2,fig_quad_sub3} (and \cref{fig_quad_exp_sub1,fig_quad_exp_sub2,fig_quad_exp_sub3}), respectively.

One can see that 
  \algname{two-phase Frank-Wolfe} 
usually performs  the best, \algname{ProjGrad} follows, and  non-monotone 
 \algname{Frank-Wolfe} variant is the last. 
 The good performance of \algname{two-phase Frank-Wolfe}  can be partially  explained by the strong DR-submodularity
 of quadratic functions according to \cref{rate_local_fw}.
Performance of the two analyzed algorithms is consistent with the  theoretical
bounds: the approximation ratios of  \algname{Frank-Wolfe} variant
are always much higher than $1/e$.

\subsection{Maximizing  Softmax Extensions}

 With some derivation, one can see the derivative of the softmax extension  in \labelcref{eq_softmax} is:
 $\nabla_i f(\x) = \tr{ ({\diag(\x)(\bmL-\bmI) +\bmI })^{-1}(\bmL - \bmI)_i}, \forall i\in [n]$, 
 where $(\bmL - \bmI)_i$ denotes the matrix obtained by zeroing  all entries except
 the $i^\text{th}$ row of $(\bmL - \bmI)$. Let $\bmC:= ({\diag(\x)(\bmL-\bmI) +\bmI })^{-1}, \bmD:=(\bmL - \bmI)$, one can see that $\nabla_i f(\x) = \bmD_{i\cdot}^\trans \bmC_{\cdot i}$, which gives an efficient way to calculate the gradient $\nabla f(\x)$.


  \setkeys{Gin}{width=0.33\textwidth}
         \begin{figure}[htbp]
       \center 
      \includegraphics[width=0.58\textwidth]{legend_h.pdf}\\
      \vspace{-0.4cm}
              \subfloat[$m={\floor {0.5n}}$ \label{fig_softmax_syn1}]{
              \includegraphics[]{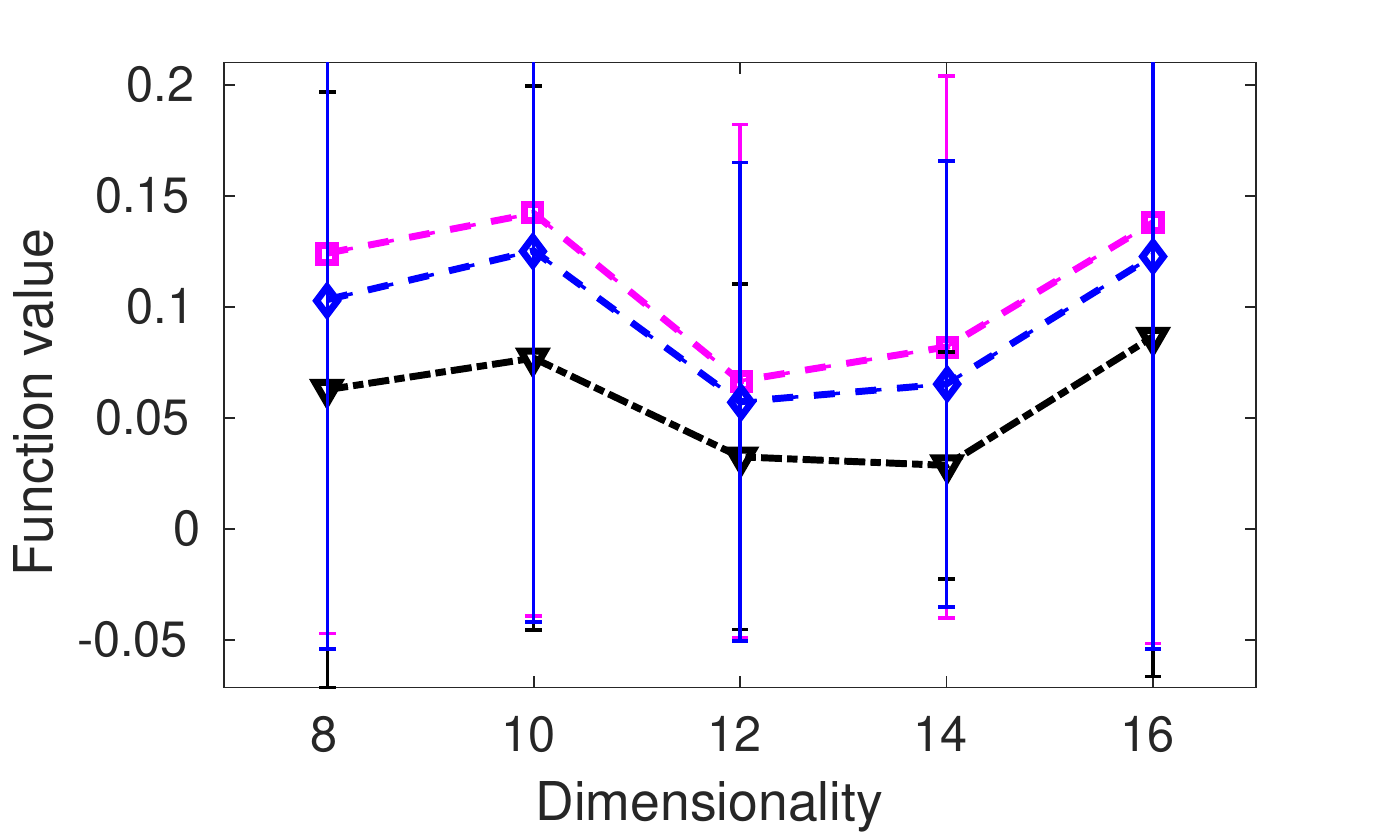}}
        \subfloat[$m=n$ \label{fig_softmax_syn2}]{
        \includegraphics[]{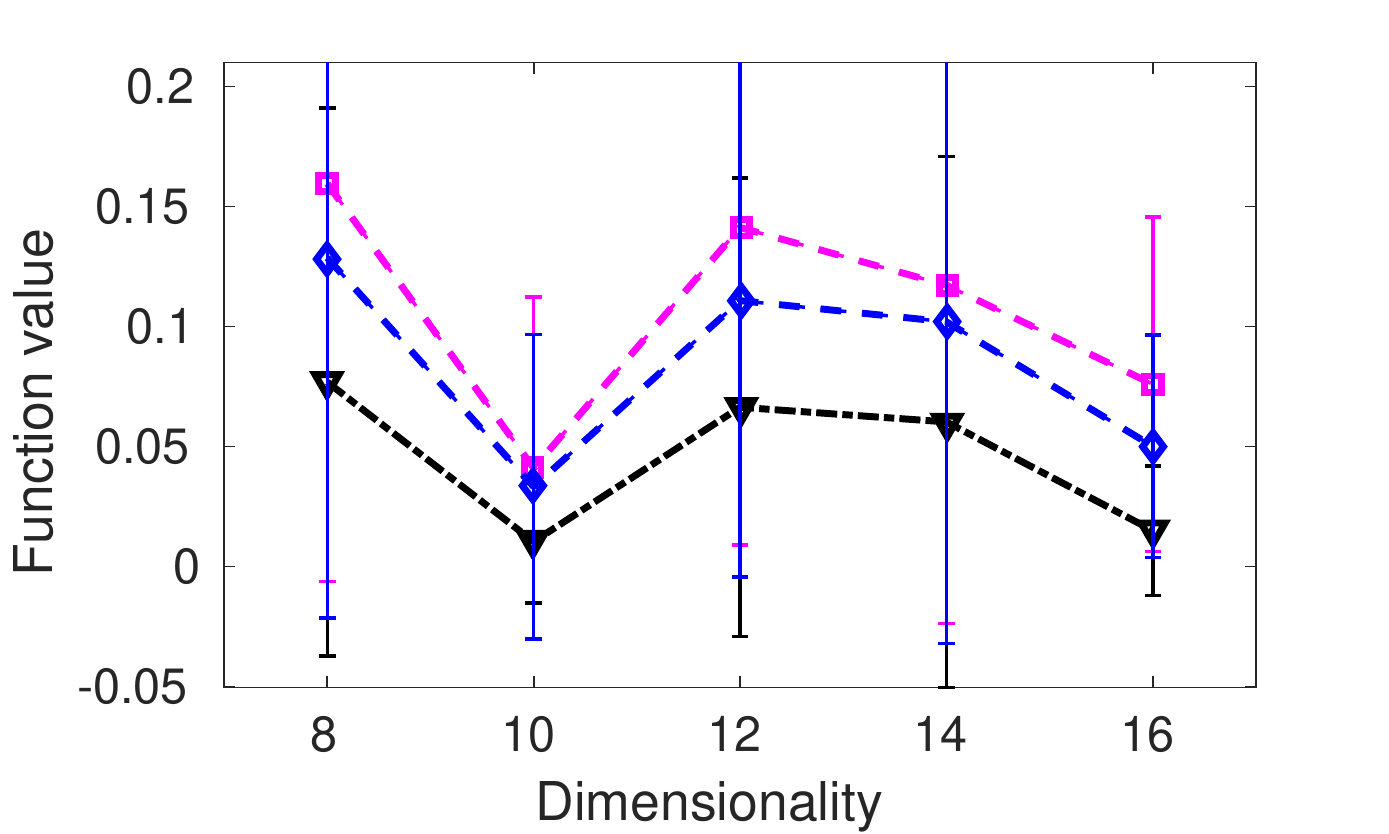}}
      \subfloat[$m= \floor{1.5n}$ \label{fig_softmax_syn3}]{
      \includegraphics[]{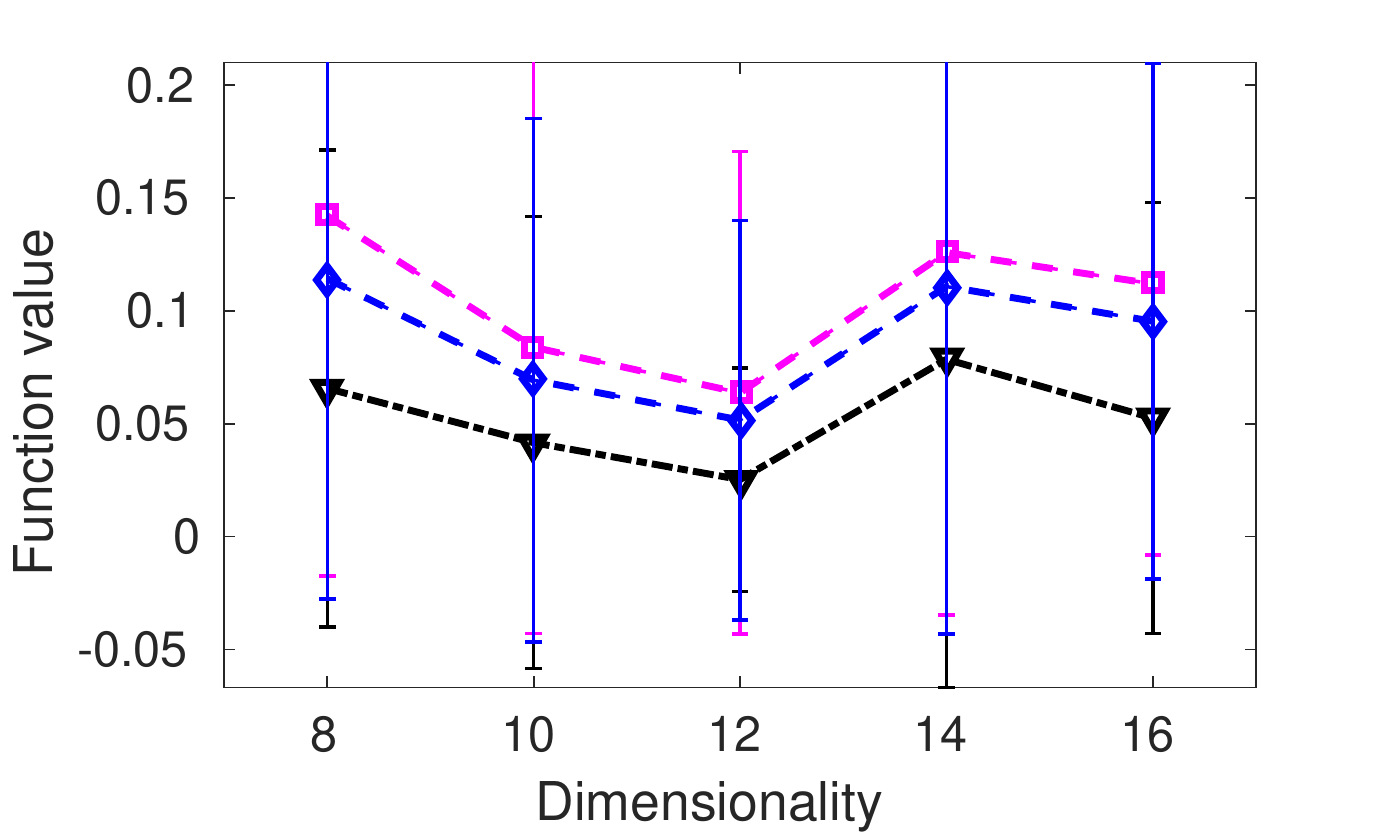}}
    \caption{Results  on  softmax instances with polytope constraints generated from uniform distribution.}
         \label{fig_softmax_syn}
    \end{figure}   
 
 \textbf{Results on synthetic data.}
We generate the softmax objectives (see \labelcref{eq_softmax}) in the following way:  first 
 generate the $n$ eigenvalues $\d\in \R_+^n$, each 
 randomly distributed in $[0, 1.5]$, and set  
  $\bmD = \diag(\d)$. After   
 generating a random unitary matrix $\bmU$, we   set $\bmL = \bmU \bmD\bmU^\trans$.   One can verify that $\bmL$ is positive semidefinite and has
 eigenvalues as the entries of $\d$.

 We generate the down-closed polytope constraints in the same form  
 and same way as that for  DR-submodular quadratic functions,
 except for setting $\b = 2*\mathbf{1}^m$.
Function values returned by different solvers w.r.t. $n$ are shown in \cref{fig_softmax_syn}, for which the random polytope
constraints were generated with uniform distribution (results for which the random polytope constraints were generated with 
exponential distribution are deferred to \cref{app_add_exp}).
The number of constraints was set to be $m={\floor {0.5n}}$,  $m=n$ and 
$m=\floor {1.5n}$ in  \cref{fig_softmax_syn1,fig_softmax_syn2,fig_softmax_syn3}, respectively. 
One can observe that  \algname{two-phase Frank-Wolfe}
still has the best performance, the non-monotone
\algname{Frank-Wolfe} variant follows, and   \algname{ProjGrad} has the worst performance.

\textbf{Real-world results on   matched summarization.} 
The task of ``matched summarization'' is to select a set of document \emph{pairs} out of a corpus of documents, 
  such that the two documents within a pair are similar, and the overall set of 
  pairs is as diverse as possible. 
  The motivation for this task is very practical: it  could be, for example,  to compare the opinions of various
  politicians on a range of representative topics. 

In our experiments, we used a similar setting to the one in  \citet{gillenwater2012near}.
We experimented on the 2012 US Republican  debates data, which consists
of 8  candidates: Bachman, Gingrich, Huntsman, Paul, Perry, Romney
and Santorum. Each task involves one pair of candidates, so in total
there are $28=7*8/2$ tasks. \cref{fig_softmax1} plots the averaged function
values returned by the three solvers over 28 tasks, w.r.t. different values of
a hyperparameter reflecting the matching quality (details see   \citet{gillenwater2012near}). 
\setkeys{Gin}{width=0.33\textwidth}
\begin{wrapfigure}[]{l}{0.66\textwidth}
	\centering
	\includegraphics[width=0.56\textwidth]{legend_h.pdf}\\
	\vspace{-0.5cm}
	\subfloat[Average on 28 tasks \label{fig_softmax1}]{
		\includegraphics[]{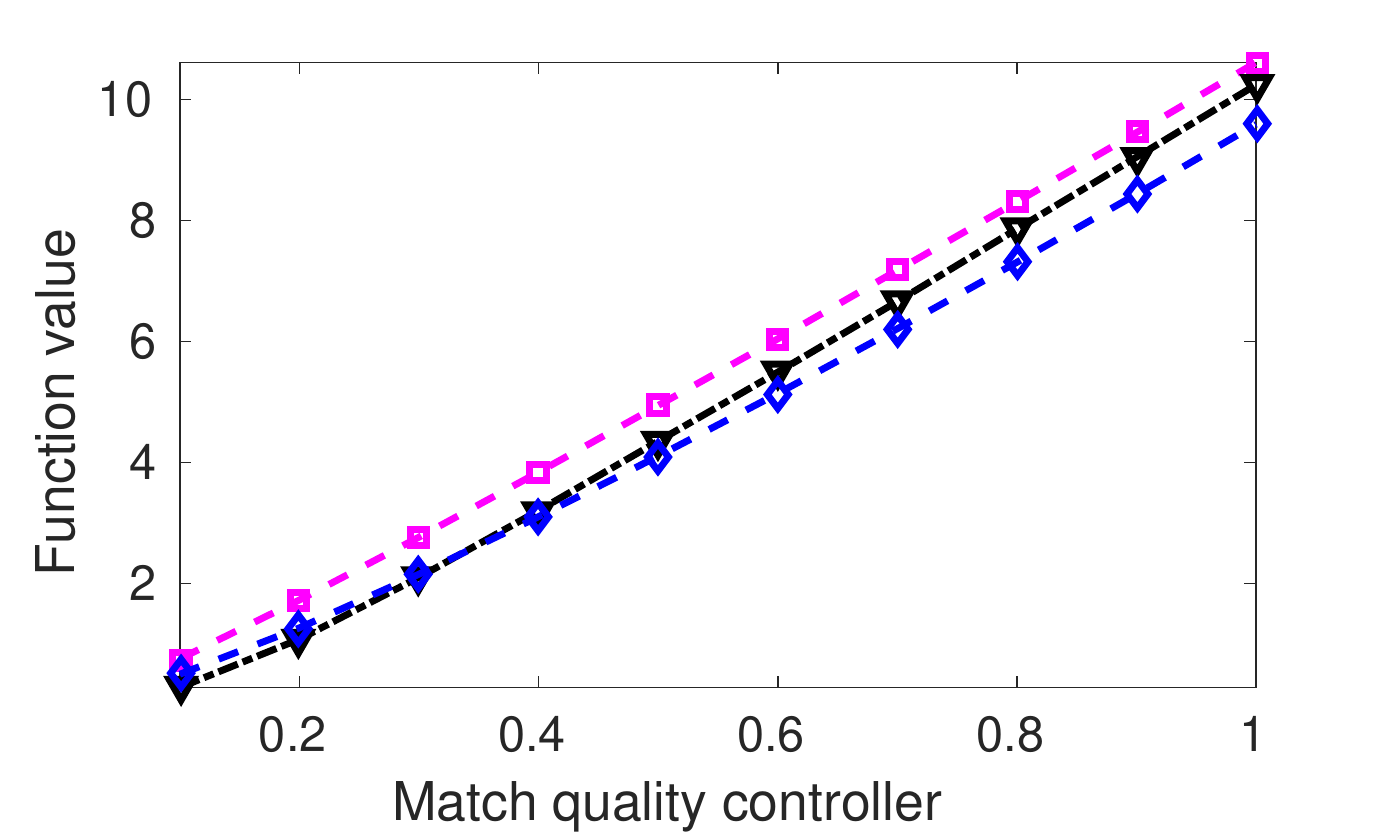}}
	\hspace{-.4cm}
	\subfloat[Objectives w.r.t. iterations \label{fig_softmax3}]{
		\includegraphics[]{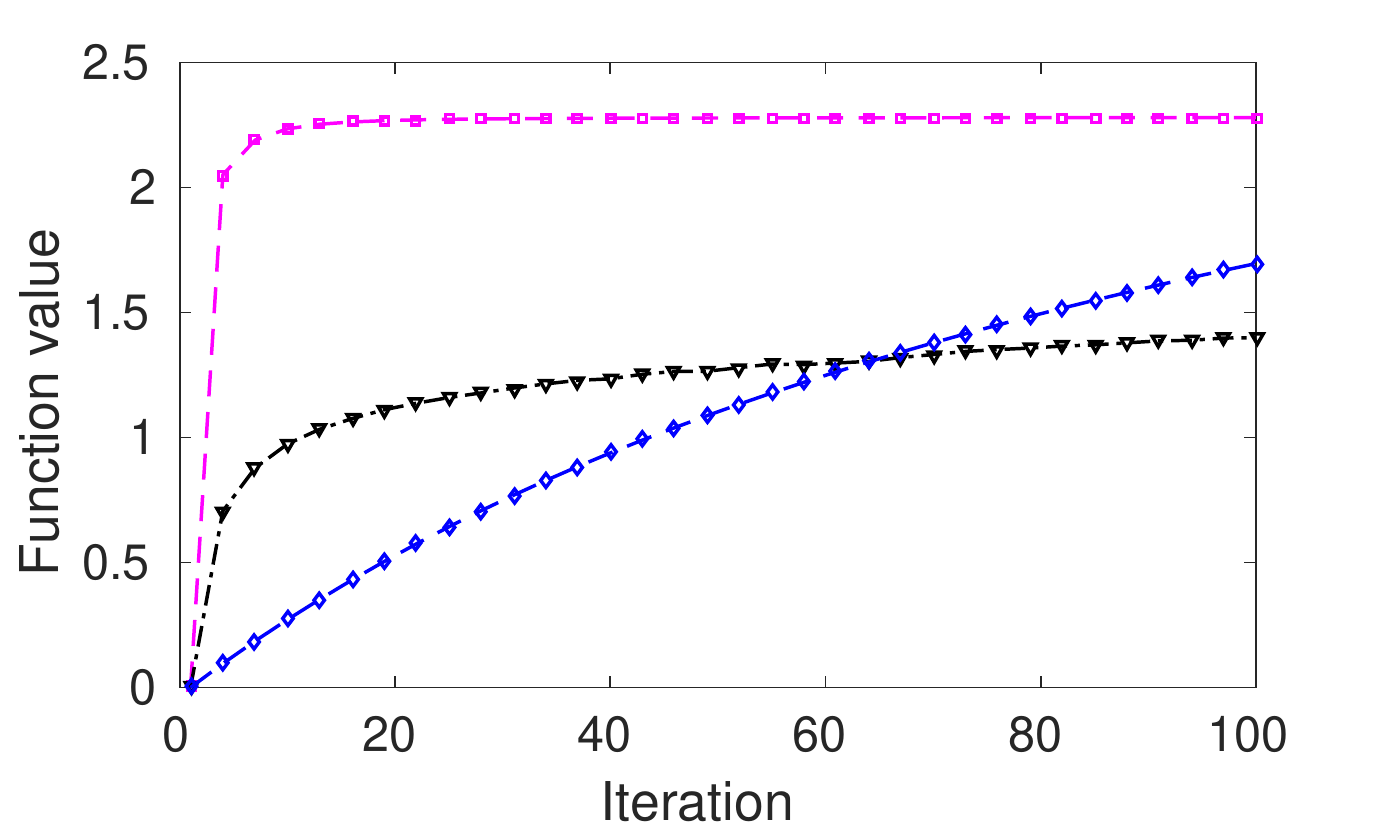}}
	\caption{Results  on  2012 US Republican  debates data.}
	\label{fig_softmax}
	 \vspace{-.2cm}
\end{wrapfigure} 
\cref{fig_softmax3}
traces the objectives w.r.t. iterations for a specific candidate pair (Bachman, Romney). 
For \algname{two-phase Frank-Wolfe}, the objectives of the selected phase
were  plotted.
One can see that  \algname{ two-phase Frank-Wolfe} also 
achieves the best performance, while 
the performance of non-monotone \algname{Frank-Wolfe}
variant and \algname{ProjGrad} is comparable. 

%
%

%
%
%
%

\section{Related Work}\label{sec_related}

Submodular optimization and, more broadly, non-convex
optimization are extensively studied in the literature,  which renders it
very difficult comprehensively surveying all previous work.
Here  we only  briefly summarize some of the most related papers.

\textbf{Submodular optimization over integer-lattice and continuous
  domains.}  Many results from submodular set function optimization
have been generalized to the integer-lattice case
\citep{soma2014optimal,DBLP:conf/nips/SomaY15,ene2016reduction,khodabakhsh2016maximizing}.
Of particular interest is the reduction \citep{ene2016reduction} from
an integer-lattice DR-submodular maximization problem to a submodular
set function maximization problem. %
Submodular optimization over continuous domains has attracted
considerable attention recently
\citep{bach2015submodular,bian2017guaranteed,staibrobust}.
%
Two  classes of functions that are covered by continuous
submodularity are the Lovasz extensions \citep{lovasz1983submodular}
and multilinear extensions \citep{calinescu2007maximizing} of
submodular set functions.  Particularly, multilinear extensions of
submodular set functions are also continuous DR-submodular
\citep{bach2015submodular}, but with the special property that they are
coordinate-wise linear.  Combined with the rounding technique of
contention resolution \citep{chekuri2014submodular}, maximizing
multilinear extensions
\citep{DBLP:conf/stoc/Vondrak08,gharan2011submodular,feldman2011unified,chekuri2015multiplicative,ene2016constrained}
has become the state-of-the-art method for submodular set function
maximization.
Some of the techniques in maximizing multilinear extensions
\citep{feldman2011unified,chekuri2014submodular,chekuri2015multiplicative} have inspired
this work.  However, we are the first to explore the rich properties
and devise algorithms for the general constrained DR-submodular
maximization problem over continuous domains.

\textbf{Non-convex optimization.}  Non-convex optimization receives a
surge of attention in the past years. One active research topic
 is to reach a stationary point for unconstrained optimization
\citep{sra2012scalable,reddi2016fast,allen2016variance} or constrained
optimization \citep{ghadimi2016mini,lacoste2016convergence}.  However,
without proper assumptions, a stationary point may not lead to any
global approximation guarantee.  The local-global relation (in
\cref{local_global}) provides a strong relation between (approximately) stationary
points and global optimum, thus making it flexible to incorporate
progress in this area.

%



%
%
%



\section{Conclusion}

We have studied the problem of constrained non-monotone DR-submodular
continuous maximization.  We explored the structural properties of
such problems, and established a local-global relation.  Based on these properties, we presented a \algname{two-phase} algorithm
with a $1/4$ approximation guarantee, and a non-monotone
\algname{Frank-Wolfe} variant with a $1/e$ approximation guarantee.
We further generalized submodular continuous function over  conic
lattices, which enabled us to model a larger class of applications.
Lastly, our theoretical findings were verified by synthetic and
real-world experiments.

\paragraph{Acknowledgement.}
This research was partially supported by  ERC StG 307036, by the  
Max Planck ETH Center for Learning Systems, and by the  ETH Z\"urich Postdoctoral Fellowship program.

\if 0

Potential future directions: 
\begin{enumerate}

\item Techniques to handle the constraint as a non-down-closed
convex set.

\item \textcolor{black}{More algorithms, for example, variants of  \algname{Frank-Wolfe} algorithms with away-steps, or coordinate-ascent-style algorithms,  to gain faster convergence
	rates, or even better guarantees.} 
For monotone
case, it is easy to plug in the pairwise away-steps.  {It is  not  easy for non-monotone case}.  

\item Primal-dual rate of  \algname{Frank-Wolfe}, may be different from the convex case. 

\item Saddle-point study of DR-submodular functions, using the 
\emph{non-positive matrix} theory. 



\item Maybe worst-case continuous instance to show that 
the bound is tight. (Worst-case instance is important and interesting.)
What is the limit of approximiation guarantee in this setting? 



\item 
\citet{garber2016linear} presented the decomposition-invariant
pairwise Frank-Wolfe algorithm for convex optimization over a specific polytope 
constraint, it would be interesting to explore 
linear rate for strongly DR-submodular functions
\end{enumerate}
\fi

\clearpage
{
\bibliography{\bibpath}
}

\clearpage 
\appendix
\appendixtitle{Appendix}

\section{Continuous Submodular  Functions  on  Conic Lattices and the Reduction}\label{sec_lattice}

Motivated by the objectives  that can not be modeled
by   continuous  submodular functions, we consider the more general submodular continuous functions over  lattices  induced by conic inequalities. Furthermore  we provide a reduction to the original (DR-)submodular optimization  problem. 

%

\subsection{Definitions and Properties}
Let us look
at the proper cone that will be used to define a conic inequality firstly.
	 A cone $\cone\subseteq \R^n$ is a \emph{proper cone} if it  is convex, closed, solid (having nonempty interior) and  pointed (contains no line, i.e., $\x\in \cone, -\x\in \cone$ implies  $\x=0$).
A proper cone $\cone$ can be used to define a conic inequality (a.k.a.  generalized inequality \citep[Chapter 2.4]{boyd2004convex}): $\a\preceq_{\cone} \b$ 
 iff $\b-\a \in \cone$, which also defines a partial ordering 
 since the binary relation $\lleq_{\cone}$ 
 is reflexive, antisymmetric and transitive. 
 Then it is easy to see that $(\X, \lleq_\cone)$ is a partial ordered set (poset).

 If two elements $\a, \b \in \X$ have a least upper bound (greatest 
 lower bound), it is denoted as the ``join'':  $\a\vee \b$ (the ``meet'': $\a\wedge \b$).  
A lattice is a poset that contains the join and meet of each pair of
its elements \citep{garg2015introduction}. 
%
A ``lattice cone''  \citep{fuchssteiner2011convex} is the proper cone that  can be used to define a lattice. 
%
Note that not all conic inequalities can be used to define a lattice. {For example, the positive semidefine 
	cone $\cone_{\text{PSD}} = \{\bmA\in \R^{n\times n} | \bmA \text{ is symmetric, } \bmA  \succeq  0\}$ is a proper cone, but its induced ordering
can not be used to define a lattice. 
	 There is a simple counter example to show this in \cref{app_sec_counter_psd}.}
	
Specifically, we name the lattice that can be defined through 
a conic inequality as ``conic lattice'', 
 since it is of particular interest for modeling  the real-world 
applications in this paper.
\begin{definition}[Conic Lattice] \label{def_conic_lattice}
Given a poset $(\X, \lleq_{\cone})$ induced by 
the conic inequality $\lleq_{\cone}$, if there exist  joint and meet operations  
for every pair of elements $(\a,\b)$ in $\X\times \X$, 
s.t. $\a \vee \b$ and $\a \wedge \b$ are still in $\X$, then 
we call $(\X, \lleq_{\cone})$ a conic lattice. 
\end{definition}
In one word, a conic lattice $(\X, \lleq_{\cone})$ is a lattice induced  by a conic inequality $\preceq_{\cone}$.
In the following we introduce a class of conic lattices to  model 
the applications in this work. We further provide  a general 
characterization about submodularity on this conic lattice. 

\textbf{Orthant conic lattice.} 
Given a sign vector $\bmalpha\in \{\pm1\}^n$, 
  the orthant cone is defined as  $\cone_{\bmalpha}:= \{\x\in \R^n \;|\; x_i\alpha_i \geq 0, \forall i\in [n]\}$, one can see that $\cone_{\bmalpha}$ is a proper cone.  
%
%
 For any two points $\a, \b\in \X $, 
 one can further  define the join and meet operations:
 $(\a\vee \b)_i := \alpha_i \max\{\alpha_ia_i, \alpha_ib_i \}$, $(\a\wedge \b)_i :=\alpha_i \min\{\alpha_ia_i, \alpha_ib_i \}$, $\forall i\in [n]$. Then one can show that  the poset $(\X, \lleq_{\cone_{\bmalpha}})$ is a conic  lattice. 

A function $f:\X\mapsto \R$  is submodular on a lattice \citep{topkis1978minimizing,fujishige2005submodular} if for all $(\x, \y)\in \X \times \X$, it holds that,
\begin{align}\label{eq_sub_lattice}
f(\x) + f(\y) \geq f(\x \vee \y)  + f(\x \wedge \y).
\end{align}

One can establish the  characterizations of submodularity 
on the orthant conic lattice $(\X, \preceq_{\cone_{\bmalpha}})$ similarly 
as that in \citet{bian2017guaranteed}: 
\begin{proposition}[Characterizations of Submodularity  on  Orthant Conic Lattice  $(\X, \preceq_{\cone_{\bmalpha}})$]\label{prop_submod_orthant}
If a function $f$ is is submodular on the lattice  $(\X, \preceq_{\cone_{\bmalpha}})$   (called \emph{$\cone_{\bmalpha}$-submodular}), then we have  the following two equivalent characterizations:\\
a)  $\forall \a,\b \in \X$ s.t. $\a \preceq_{\cone_{\bmalpha}} \b$, $\forall i$ s.t. $a_i =b_i$, 
 $\forall k\in \R_+$ s.t. $(k\e_i+\a)$ and $(k\e_i+\b)$ are still in $\X$, it holds that,
 	\begin{align}\label{eq_general_weakdr}
 \alpha_i [	f(k\e_i+\a) - f(\a)] \geq \alpha_i [f(k\e_i+\b) - f(\b)].  \quad \texttt{ \emph{(weak DR)} }
 	\end{align}
	
b) 	If $f$ is twice differentiable, then  $\forall \x\in \X$ it holds, 
	\begin{flalign}\label{eq_general_sub}
	\alpha_i\alpha_j\nabla_{ij}^2 f(\x) \leq 0,\; \forall i,j \in [n], \textcolor{blue}{i\neq j}.
	\end{flalign}
\end{proposition}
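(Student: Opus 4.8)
The plan is to reduce $\cone_{\bmalpha}$-submodularity to ordinary submodularity on the standard lattice by a coordinate-wise sign flip, and then invoke the known characterizations of submodular continuous functions from \citet{bian2017guaranteed}. Concretely, let $T := \diag(\bmalpha)$; since $\alpha_i \in \{\pm 1\}$, $T$ is a linear involution ($T^{-1}=T$) that maps the box $\X = \prod_i \X_i$ onto another box $T\X$. Define $g(\y) := f(T\y)$ on $T\X$. First I would verify that $T$ carries the conic order into the standard one: writing $\y = T\x$, the relation $\a \preceq_{\cone_{\bmalpha}} \b$ (i.e.\ $\alpha_i(b_i - a_i) \geq 0$ for all $i$) is equivalent to $T\a \leq T\b$ coordinate-wise, and the conic join/meet satisfy $T(\a \vee \b) = (T\a)\vee(T\b)$ and $T(\a\wedge\b) = (T\a)\wedge(T\b)$ with the standard $\vee,\wedge$. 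This is a direct computation from the definitions $(\a\vee\b)_i = \alpha_i\max\{\alpha_ia_i,\alpha_ib_i\}$ and $(\a\wedge\b)_i = \alpha_i\min\{\alpha_ia_i,\alpha_ib_i\}$, using $\alpha_i^2=1$.

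Given this isomorphism, $\cone_{\bmalpha}$-submodularity of $f$ (inequality \cref{eq_sub_lattice}) is equivalent to ordinary submodularity of $g$ on the box $T\X$, since each of the four terms transforms term-by-term under $T$. I would then apply the characterizations of \citet{bian2017guaranteed}: a submodular continuous function $g$ on a box satisfies the weak DR property, and, when twice differentiable, has $\nabla_{ij}^2 g \leq 0$ for all $i \neq j$, and both are equivalent to submodularity of $g$. It then remains to transport these two characterizations back through $T$.

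For the second-order condition this is immediate from the chain rule: $\nabla_{ij}^2 g(\y) = \alpha_i\alpha_j\,\nabla_{ij}^2 f(T\y)$, so $\nabla_{ij}^2 g \leq 0$ for $i \neq j$ translates verbatim into \cref{eq_general_sub}, namely $\alpha_i\alpha_j\nabla_{ij}^2 f(\x) \leq 0$. For the first-order (weak DR) condition the translation is the fiddly part and the main place to be careful: an increment $k\e_i$ with $k \geq 0$ in $\x$-space corresponds to the increment $\alpha_i k\,\e_i$ in $\y$-space, so that $\alpha_i[f(k\e_i+\a) - f(\a)]$ equals a \emph{nonnegative-direction} increment of $g$ (for $\alpha_i = 1$ it is $g(k\e_i + T\a) - g(T\a)$; for $\alpha_i = -1$ the factor $\alpha_i$ absorbs the sign and one rewrites it as a nonnegative step $+k\e_i$ from the shifted base point $T\a - k\e_i$). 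Under this substitution, together with the equivalences $a_i = b_i \iff (T\a)_i = (T\b)_i$ and $\a \preceq_{\cone_{\bmalpha}} \b \iff T\a \leq T\b$, the weak DR inequality for $g$ becomes exactly \cref{eq_general_weakdr}. Chaining submodularity of $f$ $\Leftrightarrow$ submodularity of $g$ $\Leftrightarrow$ weak DR of $g$ $\Leftrightarrow$ \cref{eq_general_weakdr}, and similarly for \cref{eq_general_sub}, yields all the stated equivalences. The only obstacle worth flagging is the sign bookkeeping in the weak DR step; everything else is a mechanical transfer through the involution $T$.
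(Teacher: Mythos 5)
Your proof is correct, but it follows a genuinely different route from the paper's. The paper does not spell out an argument at all: it asserts that \cref{prop_submod_orthant} follows by \emph{directly generalizing} the proof of Proposition 1 in \citet{bian2017guaranteed}, i.e., by re-running that proof with the sign factors $\alpha_i$ inserted throughout, and omits the details "due to the high similarity." You instead prove it by \emph{reduction}: the involution $T=\diag(\bmalpha)$ is an order- and lattice-isomorphism from $(\X,\preceq_{\cone_{\bmalpha}})$ onto the box $T\X$ with the standard order (your identities $T(\a\vee\b)=(T\a)\vee(T\b)$ and $T(\a\wedge\b)=(T\a)\wedge(T\b)$ are exactly right), so $\cone_{\bmalpha}$-submodularity of $f$ in the sense of \labelcref{eq_sub_lattice} is equivalent to ordinary submodularity of $g(\y):=f(T\y)$, and the known characterizations can be invoked as a black box and transported back. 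Your handling of the one delicate point is also sound: for $\alpha_i=-1$ the inequality \labelcref{eq_general_weakdr} at the pair $(\a,\b)$ corresponds to weak DR of $g$ at the \emph{shifted} base points $T\a-k\e_i \leq T\b-k\e_i$ (which still satisfy the equal-coordinate and ordering hypotheses), and the Hessian identity $\nabla^2_{ij}g(\y)=\alpha_i\alpha_j\nabla^2_{ij}f(T\y)$ gives \labelcref{eq_general_sub} verbatim. What your approach buys is rigor without repetition: nothing from the cited proof needs to be reopened, and in fact your construction is the same change of variables the paper itself introduces later in \cref{sec_lattice} for the algorithmic reduction $f(\x):=g(\bmA\x)$, so your argument unifies the proposition with that reduction. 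What the paper's (sketched) approach would buy is a self-contained derivation on the conic lattice itself, making visible exactly where the sign factors enter each step, at the cost of duplicating the original argument.
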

\cref{prop_submod_orthant} can be proved by directly 
generalizing the proof of Proposition 1    in \citet{bian2017guaranteed},  proof is omitted here due to the high similarity. Next, we generalize the definition of
DR-submodularity to the conic lattice $(\X, \preceq_{\cone_{\bmalpha}})$: 
\begin{definition}[$\cone_{\bmalpha}$-DR-submodular]
	\label{def_general_dr} 
	A  function $f:\X\mapsto \R$ is  $\cone_{\bmalpha}$-DR-submodular  if  $\forall \a, \b\in \X$ s.t. $\a \preceq_{\cone_{\bmalpha}}\b$, $\forall i \in [n], \forall k\in \R_+$ s.t. $(k\e_i+\a)$ and $(k\e_i+\b)$ are still in $\X$, it holds that,
	\begin{align}\label{eq_general_dr}
\alpha_i [	f(k\e_i+\a) - f(\a)] \geq \alpha_i [f(k\e_i+\b) - f(\b)].
	\end{align}
\end{definition}

In correspondence to the relation between DR-submodularity and
submodularity over continuous domains  (Proposition 2    in \citet{bian2017guaranteed}), one can easily get  the similar relation (with highly similar proof) in bellow: 
%
%
%
 \begin{proposition}[$\cone_{\bmalpha}$-submodular + coordinate-wise concave $\Leftrightarrow$ $\cone_{\bmalpha}$-DR-submodular]\label{prop_relation}
 A function $f$ is $\cone_{\bmalpha}$-DR-submodular iff it is $\cone_{\bmalpha}$-submodular and coordinate-wise concave. 
 \end{proposition}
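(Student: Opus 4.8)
The plan is to mirror the continuous-domain equivalence (Proposition 2 of \citet{bian2017guaranteed}), carrying the sign vector $\bmalpha$ and the conic order $\preceq_{\cone_{\bmalpha}}$ through every step. The key observation is that $\cone_{\bmalpha}$-submodularity is exactly the \emph{weak DR} condition \labelcref{eq_general_weakdr} (by \cref{prop_submod_orthant}), which is nothing but the $\cone_{\bmalpha}$-DR inequality \labelcref{eq_general_dr} restricted to those coordinates $i$ with $a_i=b_i$. Hence the only gap between $\cone_{\bmalpha}$-submodularity and $\cone_{\bmalpha}$-DR-submodularity is the freedom to take $i$ with $a_i\neq b_i$, and I will show this gap is precisely coordinate-wise concavity. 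Note that concavity of a univariate restriction $t\mapsto f(\x+t\e_i)$ is orientation-free, so ``coordinate-wise concave'' is a well-defined notion independent of the sign $\alpha_i$.

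For the forward direction ($\cone_{\bmalpha}$-DR-submodular $\Rightarrow$ the other two conditions), $\cone_{\bmalpha}$-submodularity is immediate by restricting \cref{def_general_dr} to coordinates with $a_i=b_i$. For coordinate-wise concavity I would fix $\a$, a coordinate $i$, and pick $\b:=\a+s\e_i$ with $s\alpha_i\geq 0$, so that $\b-\a=s\e_i\in\cone_{\bmalpha}$ and thus $\a\preceq_{\cone_{\bmalpha}}\b$. Feeding this pair into \cref{def_general_dr} and writing $g(t):=f(\a+t\e_i)$ gives $\alpha_i[g(k)-g(0)]\geq\alpha_i[g(k+s)-g(s)]$ for every $k\geq 0$. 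Letting $\a$ range over $\X$, this says that the length-$k$ increment $g(s+k)-g(s)$ is non-increasing in the base point $s$, which for the continuous $g$ is equivalent to concavity of the restriction.

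For the converse ($\cone_{\bmalpha}$-submodular $+$ coordinate-wise concave $\Rightarrow$ $\cone_{\bmalpha}$-DR-submodular), given $\a\preceq_{\cone_{\bmalpha}}\b$ and an arbitrary coordinate $i$, I would introduce the auxiliary point $\b':=\b-(b_i-a_i)\e_i$, which coincides with $\b$ off coordinate $i$ and satisfies $b'_i=a_i$; one checks that $\a\preceq_{\cone_{\bmalpha}}\b'$ still holds and that $a_i=b'_i$. Applying weak DR \labelcref{eq_general_weakdr} to the pair $(\a,\b')$ yields $\alpha_i[f(k\e_i+\a)-f(\a)]\geq\alpha_i[f(k\e_i+\b')-f(\b')]$, and applying coordinate-wise concavity of $t\mapsto f(\b'+t\e_i)$ together with the sign relation $(b_i-a_i)\alpha_i\geq 0$ yields $\alpha_i[f(k\e_i+\b')-f(\b')]\geq\alpha_i[f(k\e_i+\b)-f(\b)]$. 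Chaining the two inequalities recovers \cref{def_general_dr}.

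The main obstacle is purely the sign bookkeeping in the concavity step: for $\alpha_i=-1$ the diminishing-returns inequality reverses, and one must confirm that this reversal is exactly matched by the reversal of the admissible range of $s$ (now $s\leq 0$), so that both cases collapse to the same statement ``the fixed-length increment is non-increasing in the base point'', i.e.\ concavity. A cleaner route, valid when $f$ is twice differentiable, sidesteps this entirely: by the second-order characterization, $\cone_{\bmalpha}$-DR-submodularity is equivalent to $\alpha_i\alpha_j\nabla^2_{ij}f(\x)\leq 0$ for \emph{all} $i,j$ (including $i=j$, where $\alpha_i^2=1$ forces $\nabla^2_{ii}f(\x)\leq 0$), whereas \labelcref{eq_general_sub} gives $\cone_{\bmalpha}$-submodularity only for $i\neq j$; the missing diagonal condition $\nabla^2_{ii}f(\x)\leq 0$ is exactly coordinate-wise concavity, settling both directions at once.
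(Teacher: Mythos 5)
Your argument is correct, and it is essentially the proof the paper intends but omits: the paper states \cref{prop_relation} as following ``with highly similar proof'' from Proposition 2 of \citet{bian2017guaranteed}, and your two directions (weak DR via \cref{prop_submod_orthant} plus the base-point-monotonicity reading of concavity for the forward direction, and the rectangle decomposition through the auxiliary point $\b'$ chained with concavity for the converse) are exactly that proof with the $\bmalpha$ sign bookkeeping carried out correctly. One caveat on your closing remark: within this paper the all-pairs second-order condition \labelcref{eq_general_conedr} is itself \emph{derived from} \cref{prop_relation}, so the twice-differentiable shortcut is non-circular only if you first establish that second-order characterization of $\cone_{\bmalpha}$-DR-submodularity directly (which is routine), and it of course covers only smooth $f$, so it can serve as a sanity check but not as a replacement for the general argument.
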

 Combining \labelcref{eq_general_sub} and \cref{prop_relation},
 one can show that  if $f$ is twice differentiable and $\cone_{\bmalpha}$-DR-submodular,   then  $\forall \x\in \X$ it holds that, 
 	\begin{flalign}\label{eq_general_conedr}
 	\alpha_i\alpha_j\nabla_{ij}^2 f(\x) \leq 0,\; \forall  i, j\in  [n].
 	\end{flalign}
Similarly, a function $f$ is   $\cone_{\bmalpha}$-DR-supermodular iff $-f$ is $\cone_{\bmalpha}$-DR-submodular.
 
\textbf{Remark:} 
We only consider the orthant conic lattice $(\X, \preceq_{\cone_{\bmalpha}})$ here, since it can
already model the applications in this paper. However, it
is noteworthy that the framework can be generalized 
to arbitrary conic lattices, which may be of interest 
to model more complicated applications. We left this as future exploration.

\if 0
 
  \paragraph{A class of polyhedral cones.}  
  $\cone_{\bmA} = \{\x\in \R^n | \bmA \x\geq 0, \bmA \in \R^{n\times n}, \bmA \text{ is nonsingular} \}$.
  
  The join operation is 
  $(\a\vee \b):= \bmA^{-1} \max\{\bmA\a, \bmA\b \}$, the meet operation is $(\a\wedge \b):= \bmA^{-1}\min\{\bmA\a, \bmA\b \}$, where the $\max$ and $\min$ are applied coordinate-wise. One can verify that the definitions
  of join and meet operations are legal, so $(\X, \preceq_{\cone_{\bmA}})$
  is a lattice. 
 
\begin{proposition}\label{prop_concave}
	A DR-submodular function $f$ over the lattice $(\R^n, \preceq_{\cone})$ is concave along any direction $\v\in \pm \cone$.
\end{proposition} 
\fi 
 
 \subsection{A Reduction to Optimizing  Submodular Functions over Continuous Domains}

 To be succint, in this section we only discuss the reduction for 
 the $\cone_{\bmalpha}$-DR-submodular maximization problems. However, it is easy to see that  the reduction works
 for all  kinds of $\cone_{\bmalpha}$-submodular optimization problems,
 e.g., $\cone_{\bmalpha}$-submodular minimization problem.

Suppose $g$ is a  $\cone_{\bmalpha}$-DR-submodular 
function, and 
the $\cone_{\bmalpha}$-DR-submodular maximization problem  is $\max_{\y\in\P'} g(\y)$, where 
$\P' = \{ \y\in \R^n|h_i(\y)\leq b_i, \forall i\in [m], \y \ggeq_{\cone_{\bmalpha}} \mathbf{0} \}$ is down-closed 
w.r.t. the conic inequality $\lleq_{\cone_{\bmalpha}}$. 
The down-closedness here means if $\a\in \P'$ and 
$\mathbf{0}\lleq_{\cone_{\bmalpha}} \b \lleq_{\cone_{\bmalpha}} \a$, then $\b\in \P'$ as well.

Let $\bmA:=\diag(\bmalpha)$, and  a function $f(\x):= g(\bmA \x)$. One can see that 
if $g$ is $\cone_{\bmalpha}$-DR-submodular, then $f$ is DR-submodular:
assume wlog.\footnote{If not one can  still use  other  equivalent characterizations, for instance, the characterization in \labelcref{eq_sub_lattice} or in \labelcref{eq_general_weakdr} to formulate this.} that $g$ is twice differentiable, then  $\nabla^2f(\x) = \bmA^\trans \nabla^2 g \bmA$, and  $\nabla^2_{ij}f(\x) = \alpha_i\alpha_j \nabla^2_{ij} g \leq 0$, so $f$ is DR-submodular. 

By the affine
transformation $\y:=\bmA\x$, one can transform the 
$\cone_{\bmalpha}$-DR-submodular maximization problem  to be a DR-submodular maximization problem $\max_{\x\in\P} g(\bmA\x)$, where 
$\P = \{ \x\in \R^n|h_i(\bmA\x)\leq b_i, \forall i\in [m],  \bmA\x \ggeq_{\cone_{\bmalpha}} \mathbf{0} \}$ is down-closed 
w.r.t. the ordinary component-wise inequality $\leq$. 
To verify the down-closedness of $\P$ w.r.t. to the ordinary inequality  $\leq$ here, let 
$\y_1 = \bmA \x_1 \in \P'$ (so $\x_1\in \P$). Suppose there is a point
$\y_2 = \bmA \x_2$ s.t.  
$\mathbf{0}\lleq_{\cone_{\bmalpha}} \y_2 \lleq_{\cone_{\bmalpha}} \y_1$. From the down-closedness
of $\P'$, we know that $\y_2\in \P'$, thus $\x_2\in \P$. 
Looking at $\mathbf{0}\lleq_{\cone_{\bmalpha}} \y_2 \lleq_{\cone_{\bmalpha}} \y_1$, it is equivalent to that $0\leq \x_2\leq \x_1$. Thus we establish the down-closedness of $\P$.

Given the reduction, we can reuse the algorithms for  the original DR-submodular maximization problem \labelcref{setup}.


\subsection{Proof for the  Logistic Loss in \cref{subsec_moti_lattice}}
\label{appe_dr_lr}

Remember that the logistic loss is: 
\begin{flalign} \label{app_lr}
  l(\x) =\frac{1}{m}\sum\nolimits_{j=1}^{m}f_j(\x) = \frac{1}{m}\sum\nolimits_{j=1}^{m} \log(1 +\exp(-y_j \x^\trans \z^j))
\end{flalign}

\begin{claim}\label{claim_logistic}
$l(\x)$ in \labelcref{app_lr} is $\cone_{\bmalpha}$-DR-supermodular.
\end{claim}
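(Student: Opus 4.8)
The plan is to reduce the claim to a sign condition on the Hessian and then verify that condition by a direct computation. By \cref{def_general_dr}, saying that $l$ is $\cone_{\bmalpha}$-DR-supermodular means precisely that $-l$ is $\cone_{\bmalpha}$-DR-submodular. Since $l$ is smooth, I would invoke the twice-differentiable characterization: by \cref{prop_relation}, $-l$ is $\cone_{\bmalpha}$-DR-submodular iff it is both $\cone_{\bmalpha}$-submodular and coordinate-wise concave, which by part b) of \cref{prop_submod_orthant} (the off-diagonal entries) together with coordinate-wise concavity (the diagonal entries) is equivalent to $\alpha_i\alpha_j\nabla^2_{ij}(-l)(\x)\le 0$ for all $i,j\in[n]$. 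Equivalently, it suffices to prove
\begin{align}\label{eq_goal_logistic}
\alpha_i\alpha_j\,\nabla^2_{ij} l(\x) \geq 0, \qquad \forall i,j\in[n],
\end{align}
where the diagonal case $i=j$ uses $\alpha_i^2=1$, so that \labelcref{eq_goal_logistic} encodes both the $\cone_{\bmalpha}$-submodularity and the coordinate-wise convexity of $l$.

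Next I would compute the Hessian of each summand. Writing $s_j := -y_j\x^\trans\z^j$ so that $f_j(\x)=\log(1+e^{s_j})$, and denoting the sigmoid by $\sigma(s)=(1+e^{-s})^{-1}$, the chain rule gives $\nabla_i f_j(\x) = -\sigma(s_j)\,y_j z_i^j$. Differentiating once more and using $\sigma'=\sigma(1-\sigma)$ and $\partial_k s_j = -y_j z_k^j$ yields, because $y_j^2=1$,
\begin{align}\label{eq_hess_logistic}
\nabla^2_{ik} f_j(\x) = \sigma'(s_j)\,(-y_j z_k^j)(-y_j z_i^j) = \sigma'(s_j)\, z_i^j z_k^j,
\end{align}
and hence $\nabla^2_{ik} l(\x) = \tfrac1m\sum_{j=1}^m \sigma'(s_j)\, z_i^j z_k^j$.

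Finally I would apply the standing sign assumption. Multiplying by $\alpha_i\alpha_k$ and regrouping,
\begin{align}\label{eq_concl_logistic}
\alpha_i\alpha_k\,\nabla^2_{ik} l(\x) = \frac1m\sum_{j=1}^m \sigma'(s_j)\,(\alpha_i z_i^j)(\alpha_k z_k^j).
\end{align}
Since $\alpha_i=\sign{z_i^j}$ is the same for every $j$, we have $\alpha_i z_i^j = |z_i^j|\ge 0$ and $\alpha_k z_k^j = |z_k^j|\ge 0$, while $\sigma'(s_j)=\sigma(s_j)(1-\sigma(s_j))>0$; thus every summand in \labelcref{eq_concl_logistic} is nonnegative, establishing \labelcref{eq_goal_logistic} and the claim. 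There is no genuine obstacle here beyond careful sign bookkeeping; the only point that needs emphasis is that \labelcref{eq_general_conedr} is used as an \emph{iff} characterization (combining the off-diagonal $\cone_{\bmalpha}$-submodularity with the diagonal coordinate-wise concavity), so that the Hessian sign condition \labelcref{eq_goal_logistic} is sufficient, and that the hypothesis $\sign{z_i^j}$ being $j$-independent is exactly what makes the products $(\alpha_i z_i^j)(\alpha_k z_k^j)$ nonnegative termwise.
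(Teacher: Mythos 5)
Your proof is correct and takes essentially the same route as the paper's: both verify the second-order criterion $\alpha_p\alpha_q\nabla^2_{pq}l(\x)\geq 0$ for all $p,q\in[n]$ by computing the Hessian of the logistic loss (your $\sigma'(s_j)\,z^j_p z^j_q$ is exactly the paper's $\frac{\exp(y_j \x^\trans \z^j)}{[\exp(y_j \x^\trans \z^j)+1]^2}z^j_p z^j_q$) and then using the sign assumption $\alpha_p = \sign{z^j_p}$ to make every summand nonnegative. The only difference is that you explicitly justify why the Hessian condition \labelcref{eq_general_conedr} may be used as a \emph{sufficient} (two-sided) characterization, via \cref{prop_relation} and \cref{prop_submod_orthant}, a point the paper's proof leaves implicit.
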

\begin{proof}[Proof of \cref{claim_logistic}]
To show that $l(\x)$ is  $\cone_{\bmalpha}$-DR-supermodular, we can
check the second-order condition in \labelcref{eq_general_conedr}, that is, whether it holds that 
$\alpha_p\alpha_q\nabla_{pq}^2 l(\x) \geq  0,\; \forall p, q \in [n]$. One can easily see that, 
\begin{align}\notag 
& \fracpartial{l(\x)}{x_p} = \frac{1}{m}\sum\nolimits_{j=1}^{m} \frac{-y_j z_{p}^j}{\exp{(y_j \x^\trans \z^j)}+1}\\\notag 
& \fracppartial{l(\x)}{x_p}{x_q} =  \frac{1}{m}\sum\nolimits_{j=1}^{m} \frac{\exp{(y_j \x^\trans \z^j)}}{[\exp{(y_j \x^\trans \z^j)}+1]^2}z^j_p z^j_q .
\end{align}
Since $\alpha_p = \text{sign}(z^j_p)$, so $\alpha_p\alpha_q\nabla_{pq}^2 l(\x) \geq  0,\; \forall p, q \in [n]$. Thus 
$l(\x)$ in \labelcref{app_lr} is $\cone_{\bmalpha}$-DR-supermodular
according to \labelcref{eq_general_conedr}.
\end{proof}


\section{More Applications}
\label{sec_more_apps}

We present more applications that 
fall into submodular or $\cone_{\bmalpha}$-submodular  optimization 
problems. 
One class of  notable examples are the objectives studied in \citet[]{NIPS2016_6073} in   the online setting.
%
These objectives are  captured by the  DR-submodular property
over continuous domains. One can also refer to
Section 2.2 in \citet{bach2015submodular}
to see more examples.

\textbf{DR-submodular quadratic functions.}  {Price optimization} with
continuous prices is a DR-submodular quadratic optimization problem
\citep{ito2016large}.
Another representative  class of  DR-submodular quadratic objectives
arises when computing the {stability number}  $s(G)$ of a graph $G= (V, E)$ \citep{motzkin1965maxima}, 
${s(G)}^{-1} = \min_{\x\in \Delta}\x^\trans (\bmA + \bmI)\x$,
where $\bmA$ is the adjacency matrix of the graph $G$, $\Delta$ is the standard simplex. 
This is an instance of a convex-constrained  DR-submodular maximization problem.


\textbf{Non-negative PCA (NN-PCA).}
NN-PCA  \citep{zass2007nonnegative,montanari2016non} is widely used as   alternative models
of PCA for dimension reduction, since its projection involves
only non-negative weights---a required property in fields
like economics, bioinformatics and computer vision. 
For a given set of $m$ data points $\z^j\in \R^n, j\in [m]$, NN-PCA aims to solve the following non-convex  optimization problem:
\begin{flalign}\label{nn_pca_append}
\min_{\|\x\|_2 \leq 1, \x\geq 0} f(\x) : =  -\frac{1}{2} \x^\trans \left(\sum\nolimits_{j=1}^{m} \z^j {\z^j}^\trans\right ) \x.
\end{flalign}
Let $\bmA= \sum\nolimits_{j=1}^{m} \z^j {\z^j}^\trans$,  
one can see that,
\begin{flalign}\notag 
&	A_{pp} = \sum\nolimits_{j=1}^{m}(z_p^j)^2 \geq 0,  \;	A_{pq} = \sum\nolimits_{j=1}^{m} z^j_p z^j_q = A_{qp}. 
\end{flalign}

Let us make the following weak assumption: 
	For one dimension/feature $i$, all
	the data points have the same sign, i.e., 
	$\sign{z^j_i}$ is the same for all $j \in [m]$ (which can
	be achieved by easily scaling if not).  
Now, by 
choosing the sign  vector  $\bmalpha\in \{\pm 1\}^n$
to be
$\alpha_p  = \sign{z^j_p}, \forall p\in [n]$,
one can easily verify that $A_{pq}\alpha_p\alpha_q \geq 0,  \forall p,q\in [n]$. Notice that $\nabla^2 f$ in  \labelcref{nn_pca_append}  is $-\bmA$, 
so it holds that    $\alpha_p\alpha_q\nabla^2_{pq} f \leq 0,  \forall p,q\in [n]$, thus $f(\x)$ is $\cone_{\bmalpha}$-DR-submodular
according to \labelcref{eq_general_conedr}.
Thus we can treat \labelcref{nn_pca_append} as a constrained 
$\cone_{\bmalpha}$-DR-submodular minimization problem. 


\textbf{Submodular spectral functions.}
As discussed by \citet{bach2015submodular}, 
submodular spectral functions \citep{friedland2013submodular} in  the following  form are DR-submodular, 
\begin{flalign}\label{eq_spectral}
f(\x) = \log\de{\sum\nolimits_{i=1}^{n}x_i \bmA_i}, \x\in \R_+^n,
\end{flalign}
where $\bmA_i$ are positive definite matrices. 
One can check the DR-submodularity of  $f(\x)$  by
checking its second-order-derivatives.

\if 0

@ not do this one first of all 
\paragraph{\citet{NIPS2016_6073}.}  
DR-submodular functions in real-world. list some of
 it.  
\fi

\section{The Subroutine Algorithm}
\label{sec_nonconvex_fw}

\IncMargin{1em}
\begin{algorithm}[htbp]
	\caption{\algname{Non-convex Frank-Wolfe} $(f, \P, K, \epsilon, \x^\pare{0})$\citep{lacoste2016convergence}}\label{nonconvex_fw}
	\KwIn{$\max_{\x \in \P} f(\x)$,
		$\P$:   {convex} set, $K$: number of iterations, $\epsilon$: stopping tolerance}
	\For{$k = 0, ... , K$}{
		{find $\v^\pare{k} \text{ s.t. } \dtp{\v^\pare{k}}{\nabla f(\x^\pare{k})} \geq   \max_{\v\in\P} \dtp{\v}{ \nabla f(\x^\pare{k})}$\tcp*{\emph{LMO}}}
		{$\d_k \leftarrow \v^\pare{k} - \x^\pare{k}$, $g_k := \dtp{\d_k}{\nabla f(\x^\pare{k})}$ \tcp*{$g_k$: non-stationarity measure}}
		{\bfseries{if $g_k \leq \epsilon$ then return  $\x^\pare{k}$}\;}
		{Option I:  $\gamma_k \in \argmin_{\gamma\in [0, 1]}f(\x^\pare{k} + \gamma \d^\pare{k})$, 
			Option II: $\gamma_k \leftarrow \min \{\frac{g_k}{C}, 1 \}$ for  $C\geq C_f(\P)$ \;}
		{$\x^\pare{k+1}\leftarrow \x^\pare{k} + \gamma_k \d^\pare{k}$ \;}
	}
	\KwOut{$\x^\pare{k'}$ and $g_{k'} = \min_{0\leq k\leq K} g_k$ \tcp*{modified  output solution compared to \citet{lacoste2016convergence}}}
\end{algorithm}
\DecMargin{1em}

\cref{nonconvex_fw} is taken from \citet{lacoste2016convergence}, the only
difference lies in the output: we output 
the solution $\x^\pare{k'}$
with the minimum non-stationarity, which is needed
to apply  the local-global relation. While \citet{lacoste2016convergence} outputs the solution 
in the last step. Since $C_f(\P)$ is generally hard to 
evaluate, we used the classical \algname{Frank-Wolfe}
step size $\frac{2}{k+2}$ in the experiments.

\section{Proofs for Properties}
\label{app_proofs_struc_algs}

\subsection{Proof of \cref{lemma_3_1}}

\begin{proof}[Proof of \cref{lemma_3_1}]
   Since $f$ is DR-submodular, so it is concave along any direction $\v\in \pm \R^n_+$. We know that $\x\vee \y - \x \geq 0$
    and $\x\wedge \y - \x\leq 0$, so from the strong DR-submodularity in \labelcref{eq_strong_dr}, 
    \begin{align}\notag 
   & f(\x\vee\y)  - f(\x) \leq \dtp{\nabla f(\x)}{\x\vee \y - \x}  -\frac{\mu}{2}  \|\x\vee \y - \x\|^2,\\\notag 
    &  f(\x\wedge \y) - f(\x) \leq \dtp{\nabla f(\x)}{\x\wedge \y - \x}  -\frac{\mu}{2}  \|\x\wedge  \y - \x\|^2.
    \end{align}
    Summing the above two inequalities and notice that $\x\vee\y + \x\wedge \y = \x+\y$, we arrive,
    \begin{flalign}\notag 
     (\y-\x)^{\trans}\nabla f(\x) & \geq f(\x\vee\y) + f(\x\wedge \y) - 2f(\x) + \frac{\mu}{2}  (\|\x\vee \y - \x\|^2 + \|\x\wedge  \y - \x\|^2)\\\notag 
     & = f(\x\vee\y) + f(\x\wedge \y) - 2f(\x) + \frac{\mu}{2}  \| \y - \x\|^2,
    \end{flalign}
   the last equality holds since $\|\x\vee \y - \x\|^2 + \|\x\wedge  \y - \x\|^2 =  \| \y - \x\|^2$.
\end{proof}

\subsection{Proof of  \cref{local_global}}\label{app_claim_proof}

\begin{proof}[Proof of \cref{local_global}]
Consider the point $\z^*:= \x\vee \x^* -\x = (\x^* - \x)\vee 0$. One can see that: 1) $0\leq \z^* \leq \x^*$; 2) $\z^* \in \P$ (down-closedness); 3) $\z^*\in \Q$ (because of   $\z^*\leq \bar \u - \x$). 
From \cref{lemma_3_1},
\begin{align}\label{eq_1718}
& \dtp{\x^*-\x}{\nabla f(\x)} +  2f(\x) \geq f(\x\vee \x^*) + f(\x \wedge \x^*) +  \frac{\mu}{2}\|\x -\x^*\|^2, \\\label{eq12}
& \dtp{\z^*-\z}{\nabla f(\z)} +  2f(\z) \geq f(\z\vee \z^*) + f(\z \wedge \z^*) +  \frac{\mu}{2}\|\z -\z^*\|^2.
\end{align}
Let us first of all prove the following  key \namecref{claim_key}.

\keyclaim* 

\begin{proof}[Proof of \cref{claim_key}]
Firstly, we are going to prove that 
\begin{align}\label{proof_part1}
f(\x \vee \x^*) + f(\z\vee \z^*) \geq f(\z^*) + f((\x+\z)\vee \x^*), 
\end{align}
which is equivalent to
$f(\x \vee \x^*) - f(\z^*) \geq f((\x+\z)\vee \x^*) - f(\z\vee \z^*)$.
It can be shown that  $\x \vee \x^*  - \z^* = (\x+\z)\vee \x^* - \z\vee \z^* $. Combining this with 
the fact that $\z^* \leq \z\vee \z^*$, and using the DR property (see \labelcref{eq_dr}) implies 
\labelcref{proof_part1}.
Then we establish,
\begin{align}\label{eq_EqaulityPoints}
 \x \vee \x^*  - \z^* = (\x+\z)\vee \x^* - \z\vee \z^* ~.
\end{align}
We will show that both the RHS and LHS of the above equation are equal to $\x$:  for the LHS of \labelcref{eq_EqaulityPoints} we can write 
 $\x \vee \x^*  - \z^* =  \x \vee \x^*  - \left(  \x \vee \x^* - \x\right) = \x$.
For the RHS of \labelcref{eq_EqaulityPoints} let us consider any coordinate $i\in [n]$,
\begin{align}\notag 
(x_i+z_i)\vee x_i^* - z_i\vee z_i^* = (x_i+z_i)\vee x_i^* - \left((x_i+z_i)-x_i\right)\vee  \left((x_i \vee x_i^*) - x_i\right) =x_i,
\end{align}
where the last equality holds easily for the two situations: $(x_i+z_i) \geq  x_i^*$ and $(x_i+z_i) < x_i^*$.

Next, we are going to prove that,
\begin{align}\label{proof_part2}
 f(\z^*) + f(\x\wedge \x^*)\geq f(\x^*) + f(0)
\end{align}
it is equivalent to 
$f(\z^*)   - f(0) \geq  f(\x^*) - f(\x\wedge \x^*)$,
which can be done similarly by the DR property: Notice that
\begin{align}\notag 
\x^* - \x\wedge \x^* = \x\vee \x^* - \x = \z^* - 0 \text{ and } 
 0 \leq  \x\wedge \x^*
\end{align}
thus \labelcref{proof_part2} holds from the DR property. 
Combining \labelcref{proof_part1,proof_part2} one can get,
\begin{align}\notag 
 f(\x \vee \x^*) + f(\z\vee \z^*) + f(\x\wedge \x^*) + f(\z\wedge \z^*)  & \geq  f(\x^*) + f(0) +  f((\x+\z)\vee \x^*)+ f(\z\wedge \z^*)\\\notag 
& \geq f(\x^*)    \quad \text{(non-negativity of $f$) }.
\end{align}
\end{proof} 

Combining \labelcref{eq_1718,eq12} and \cref{claim_key} it reads,
\begin{align}\label{eq16}
 \dtp{\x^* \!\!-\!\!\x}{\nabla f(\x)} \!\!+  \dtp{\z^*\!\!-\!\!\z}{\nabla f(\z)} \!\! +   2(f(\x) + f(\z) ) \geq f(\x^*) + 
 \frac{\mu}{2}(\|\x\!\! -\!\!\x^*\|^2 + \|\z \!\!-\!\!\z^*\|^2) 
\end{align}

From the definition of non-stationarity in \labelcref{non_stationary} one can get, 
\begin{align}\label{eq17}
&  g_{\P}(\x) := \max_{\v\in\P}\dtp{\v - \x}{\nabla f(\x)} \overset{\x^*\in \P}{\geq}  \dtp{\x^*-\x}{\nabla f(\x)}\\\label{eq18}
& g_{\Q}(\z) := \max_{\v\in\Q}\dtp{\v - \z}{\nabla f(\z)}  \overset{\z^*\in \Q}{\geq} \dtp{\z^*-\z}{\nabla f(\z)} 
\end{align}
Putting together \labelcref{eq16,eq17,eq18} we can get, 
\begin{align}\notag 
2(f(\x) + f(\z) ) \geq f(\x^*) -g_{\P}(\x) -g_{\Q}(\z) +   \frac{\mu}{2}(\|\x -\x^*\|^2 + \|\z -\z^*\|^2).
\end{align}
so it arrives 
$\max\{f(\x), f(\z) \} \geq \frac{1}{4}[f(\x^*) -g_{\P}(\x) -g_{\Q}(\z)]  +   \frac{\mu}{8}(\|\x -\x^*\|^2 + \|\z -\z^*\|^2)$.
\end{proof}

\section{Proofs for  Algorithms}
\label{app_proofs_algs}

\subsection{Proof of \cref{rate_local_fw}}

\begin{proof}[Proof of \cref{rate_local_fw}]
%
%

Let $g_{\P}(\x), g_{\Q}(\z)$ to be the non-stationarity of $\x$ and
$\z$, respectively. Since we are using 
the \algname{non-convex Frank-Wolfe} (\cref{nonconvex_fw}) as
subroutine, according to \citet[Theorem 1]{lacoste2016convergence}, one can  get,
	\begin{align}\notag 
&	g_{\P}(\x) \leq \min\left\{\frac{\max \{2h_1, C_f(\P)\}}{\sqrt{K_1+1}} , \epsilon_1\right\}   \\\notag
&	 g_{\Q}(\z) \leq  \min\left\{\frac{\max \{2h_2, C_f(\Q)\}}{\sqrt{K_2+1}} , \epsilon_2 \right\}, 
	\end{align}
	Plugging the above into \cref{local_global} we reach the  conclusion in \labelcref{eq_local_rates}.
\end{proof}

\subsection{Proof of \cref{prop_non_fw}}

\restalemmatwo*

\begin{proof}[Proof of \cref{prop_non_fw}]
	We prove  by induction. 	
	First of all, it holds when $k=0$, since $x_i^\pare{0}=0$,
	and $t^\pare{0}=0$ as well. 	
	Assume it holds for $k$. Then for $k+1$, we have
	\begin{align}\notag 
	x_i^\pare{k+1} & = x_i^\pare{k} + \gamma v_i^\pare{k}\\
	& \leq x_i^\pare{k} + \gamma ({\bar u_i} - x_i^\pare{k}) \quad \text{(constraint of shrunken LMO)}\\\notag 
	& = (1-\gamma) x_i^\pare{k} + \gamma {\bar u_i}\\
	& \leq (1-\gamma){\bar u_i}[1- (1-\gamma)^{t^\pare{k}/\gamma} ]+ \gamma {\bar u_i} \quad \text{ (induction) } \\\notag 
	& =  \bar u_i [1- (1-\gamma)^{t^\pare{k+1}/\gamma}].
	\end{align}
\end{proof}

\subsection{Proof of \cref{lem_nonmonotone_fw}}

\restalemmathree*

\begin{proof}[Proof of \cref{lem_nonmonotone_fw}]
	
	Consider $r(\lambda)= \x^* + \lambda(\x\vee \x^* - \x^*)$, it is easy to
	see that $r(\lambda)\geq 0, \forall \lambda \geq 0$. 
	
	Notice that $\lambda'\geq 1$. 
	Let $\y = \r(\lambda') =  \x^* + \lambda'(\x\vee \x^* - \x^*)$, it is easy to see that $\y \geq 0$, it also hold that $\y\leq \bar u$: Consider one coordinate $i$, 1) if $x_i\geq x_i^*$, then $y_i = x_i^* + \lambda'(x_i - x_i^*)\leq \lambda'x_i \leq \lambda'\theta_i \leq \bar u_i$; 2)  if $x_i< x_i^*$, then $y_i = x_i^* \leq \bar u_i$. So $f(\y) \geq 0$. 
	
	Note that 
	$$\x\vee \x^* = (1-\frac{1}{\lambda'})\x^* + \frac{1}{\lambda'}\y = (1-\frac{1}{\lambda'})r(0) + \frac{1}{\lambda'}r(\lambda'), $$
	since $f$ is concave along $r(\lambda)$, so it holds that,
	$$f(\x\vee \x^*) \geq  (1-\frac{1}{\lambda'})f(\x^*) +  \frac{1}{\lambda'}f(\y) \geq (1-\frac{1}{\lambda'})f(\x^*).$$
\end{proof}

\subsection{Proof of \cref{thm-e}}
\label{app_subsec_thm2_proof}

\begin{proof}[Proof of \cref{thm-e}]
	
First of all, let us prove the  \namecref{claim3_1}: 

\restaclaimthree*

	\begin{proof}[Proof of \cref{claim3_1}]
		Consider a point  $\z^\pare{k}:= \x^\pare{k}\vee \x^*  - \x^\pare{k}$, one can observe that:  1) $\z^\pare{k}\leq \bar \u -\x^\pare{k}$; 2) since $\x^\pare{k}\geq 0, \x^*\geq 0$, so $\z^\pare{k}\leq \x^*$, which implies that $\z^\pare{k}\in \P$ (from down-closedness of $\P$). So 
		$\z^\pare{k}$ is a candidate solution for the new  LMO  (Step \labelcref{new_lmo}). We have,
		\begin{flalign}\notag 
		f(\x^{\pare{k+1}}) - f(\x^{\pare{k}}) & \geq  \gamma\dtp{\nabla f(\x^\pare{k})}{\v^\pare{k}} - \frac{L}{2}\gamma^2 \|\v^\pare{k}\|^2 \quad (\text{Quadratic lower bound from \labelcref{eq_quad_lower_bound}}) \\\notag 
		&  \geq  \gamma\dtp{\nabla f(\x^\pare{k})}{\v^\pare{k}} - \frac{L}{2}\gamma^2 D^2 \quad (\text{diameter of } \P) \\\notag
		& \geq \gamma \dtp{\nabla f(\x^\pare{k})}{\z^\pare{k}} - \frac{L}{2}\gamma^2 D^2\quad (\text{shrunken LMO})\\ \notag
		& \geq \gamma(f(\x^\pare{k}+\z^\pare{k}) - f(\x^\pare{k}))  - \frac{L}{2}\gamma^2 D^2   \quad (\text{concave along $\z^\pare{k}$})\\\notag
		& = \gamma [f(\x^\pare{k}\vee \x^*)  - f(\x^\pare{k})]  - \frac{L}{2}\gamma^2 D^2\\\notag
		& \geq \gamma [(1-\frac{1}{\lambda'})f(\x^*)  - f(\x^\pare{k})]  - \frac{L}{2}\gamma^2 D^2 \quad (\text{\cref{lem_nonmonotone_fw}}) \\\notag
		& =\gamma  [ (1-\gamma)^{t^\pare{k}/\gamma} f(\x^*) - f(\x^\pare{k})]   - \frac{L}{2}\gamma^2 D^2 
		\end{flalign}
		where the last equality comes from  {setting } $\bmtheta : = \bar \u(1-(1-\gamma)^{t^\pare{k}/\gamma})$ { according to \cref{prop_non_fw}}, thus 
		$\lambda' = \min_i \frac{\bar u_i}{\theta_i} = (1-(1-\gamma)^{t^\pare{k}/\gamma})^{-1}$. 
		
		After rearrangement, we reach the claim. 
	\end{proof}
	Then, let us prove  \cref{thm-e} by \emph{induction}. 
	
	First of all, it holds when 
	$k = 0$ (notice that $t^\pare{0}=0$). Assume that it holds for $k$. Then for $k+1$, 
	considering  the fact $e^{-t} - O(\gamma)\leq (1-\gamma)^{t/\gamma}$ when 
	$0< \gamma\leq t \leq 1$ and \cref{claim3_1} we get, 
	\begin{align}\notag  
	& f(\x^{\pare{k+1}})\\\notag 
	&  \geq (1-\gamma)  f(\x^{\pare{k}})   + \gamma(1-\gamma)^{t^\pare{k}/\gamma} f(\x^*) -\frac{L D^2}{2}\gamma^2\\
	& \geq  (1-\gamma)  f(\x^{\pare{k}})   + \gamma [e^{-t^\pare{k}} - O(\gamma)] f(\x^*) -\frac{L D^2}{2}\gamma^2\\\notag 
	& \geq  (1-\gamma) [ t^\pare{k} e^{-t^\pare{k}}f(\x^*) - \frac{L D^2}{2}k\gamma^2 - O(\gamma^2)f(\x^*)]+ \gamma [e^{-t^\pare{k}} - O(\gamma)] f(\x^*) -\frac{L D^2}{2}\gamma^2\\\notag 
	& = [(1-\gamma) t^\pare{k} e^{-t^\pare{k}} + \gamma e^{-t^\pare{k}}   ]f(\x^*)  - \frac{L D^2}{2}\gamma^2 [(1-\gamma)k + 1] - [(1-\gamma) O(\gamma^2) + \gamma O(\gamma)]f(\x^*)\\\label{eq_30}
	& \geq  [(1-\gamma) t^\pare{k} e^{-t^\pare{k}} + \gamma e^{-t^\pare{k}}   ]f(\x^*) -  \frac{L D^2}{2}\gamma^2(k+1) - O(\gamma^2)f(\x^*).
	\end{align}
	Let us consider  the term $ [(1-\gamma) t^\pare{k} e^{-t^\pare{k}} + \gamma e^{-t^\pare{k}}   ]f(\x^*)$. We know that the function $g(t) = te^{-t}$ is concave in $[0, 2]$, 
	so $g(t^\pare{k}+\gamma) - g(t^\pare{k}) \leq \gamma g'(t^\pare{k})$, which amounts to
	\begin{align}\label{eq_34}
	[(1-\gamma) t^\pare{k} e^{-t^\pare{k}} + \gamma e^{-t^\pare{k}}   ]f(\x^*) \geq (t^\pare{k} +\gamma) e^{-(t^\pare{k} +\gamma)} f(\x^*) = t^{\pare{k+1}} e^{-t^{\pare{k+1}}} f(\x^*)
	\end{align}
	Plugging \labelcref{eq_34} into \labelcref{eq_30} we get, 
	\begin{align}\notag 
	f(\x^{\pare{k+1}})    \geq t^{\pare{k+1}} e^{-t^{\pare{k+1}}} f(\x^*) -  \frac{L D^2}{2}\gamma^2(k+1) - O(\gamma^2)f(\x^*).
	\end{align}
	Thus proving the induction, and proving the theorem as well. 
\end{proof}

\section{Miscellaneous Results}

\subsection{Verifying DR-submodularity of the Objectives}\label{appe_dr_soft}

\textbf{Softmax extension.}
For softmax extension, the objective is, 
\begin{flalign}\notag 
f(\x) = \log\de{\diag(\x)(\bmL-\bmI) +\bmI }, \x\in [0,1]^n.
\end{flalign}
Its DR-submodularity can be established by directly applying 
Lemma 3 in \citep{gillenwater2012near}:  \citet[Lemma 3]{gillenwater2012near} immediately implies 
that all  entries of  $\nabla^2  f$ are non-positive, so $f(\x)$
is DR-submodular. 

\textbf{Multilinear extension.}
The DR-submodularity of  multilinear extension can be directly
recognized by considering the conclusion in Appendix A.2 of \citet{bach2015submodular}
and the fact that multilinear extension is coordinate-wise linear.

\textbf{$\text{KL}(\x)$.}
The Kullback-Leibler divergence between  $q_{\x}$ and $p$, i.e., $ \sum_{S\subseteq \groundset} q_{\x}(S)
\log\frac{q_{\x}(S)}{p(S)}$ is, 
\begin{align}\notag 
\text{KL}(\x) = 
-\sum_{S\subseteq \groundset}\prod_{i\in S}x_i \prod_{j\notin S}(1-x_j) F(S) + \sum\nolimits_{i=1}^{n} [x_i\log x_i + (1-x_i)\log(1-x_i)] + \log Z.
\end{align}
The first term is the negative of a multilinear extension, so it is DR-supermodular. The second term
is separable, and coordinate-wise convex, so it will not
affect the off-diagonal entries of $\nabla^2 \text{KL}(\x)$,
it will only contribute to  the diagonal entries. 
Now, one can see that all entries of $\nabla^2 \text{KL}(\x)$  are non-negative, so $\text{KL}(\x)$ is DR-supermodular w.r.t. $\x$.

\subsection{A Counter Example to Show  PSD Cone is not Lattice}
\label{app_sec_counter_psd}

 The positive semidefine 
	cone $\cone_{\text{PSD}} = \{\bmA\in \R^{n\times n} | \bmA \text{ is symmetric, } \bmA  \succeq  0\}$ is a proper cone, but not 
	a lattice cone. That is, it can not be used to 
	define a lattice over the space of symmetric matrices. 
	
Let us consider the two dimensional symmetric  matrix space $S^2$.
Specifically, the following 
two symmetric matrices,
\begin{align}\notag 
\bmX = \begin{bmatrix}
 1 &0 \\
 0 & 0
\end{bmatrix}, 
\bmY = \begin{bmatrix}
 0 &0 \\
 0 & 1
\end{bmatrix}.
\end{align}
For the conic inequality $\lleq_{\cone_{\text{PSD}}}$, 
assume that there exists a least upper bound, i.e.,
the join of $\bmX, \bmY$: $\bmZ: = \bmX \vee \bmY$. From
the definition of least upper bound, $\forall\; \bmW \in S^2$ it should hold that,
\begin{align}\label{eq33}
\bmW   \ggeq_{\cone_{\text{PSD}}} \bmX \text{ and } \bmW   \ggeq_{\cone_{\text{PSD}}} \bmY  \text{ iff }  \bmW   \ggeq_{\cone_{\text{PSD}}} \bmZ.
\end{align}
Suppose $\bmZ = \begin{bmatrix}
 b &a \\
 a & c
 \end{bmatrix}$. Firstly, consider $\bmW$ to be diagonal 
  matrices, one can verify that $\bmZ$ must be in the form of 
$ \begin{bmatrix}
 1 &a \\
 a & 1
 \end{bmatrix}$, then considering $\bmW =\bmI$ forcing $\bmZ$
 to be $\bmI$. 
 
 Now let  $\bmW =\frac{2}{3} \begin{bmatrix}
  2 &1 \\
  1 & 2
  \end{bmatrix}$, which is $\ggeq_{\cone_{\text{PSD}}} \bmX$
  and $\ggeq_{\cone_{\text{PSD}}} \bmY$. However,  $\bmW - \bmI =\frac{1}{3} \begin{bmatrix}
    1 &2 \\
    2 & 1
    \end{bmatrix} \notin \cone_{\text{PSD}}$, thus
  contradicting \labelcref{eq33}.

\section{Additional Experimental Results}
\label{app_add_exp}

We generate the down-closed polytope constraints in the same form  
 and same way as that for  DR-submodular quadratic functions.

 \cref{fig_softmax_exp} shows the function values returned by different solvers w.r.t. $n$, for which the random polytope
constraints were generated with exponential distribution.
Specifically, the random polytope is in the form of   $\P = \{\x\in \R_+^n \ |\  \bmA \x \leq \b, \x \leq \bar \u, \bmA\in \R_{++}^{m\times n}, \b\in \R_+^m \}$. 
Each entry of $\bmA$ was
sampled from $\text{Exp}(1) + \nu$, where $\nu = 0.01$
is a small positive constant.
We set   $\b = 2*\mathbf{1}^m$, and set 
$\bar \u$ to be the tightest upper bound of $\P$ by  $\bar u_j = \min_{i\in [m] }\frac{b_i}{A_{ij}}, \forall j\in [n]$. 
One can see that the  \algname{two-phase Frank-Wolfe}
 has the best performance, while non-monotone
\algname{Frank-Wolfe}  and   \algname{ProjGrad} have comparable  performance.

  \setkeys{Gin}{width=0.33\textwidth}
         \begin{figure}[htbp]
       \center 
      \includegraphics[width=0.58\textwidth]{legend_h.pdf}\\
      \vspace{-0.4cm}
              \subfloat[$m={\floor {0.5n}}$ \label{fig_softmax_exp1}]{
              \includegraphics[]{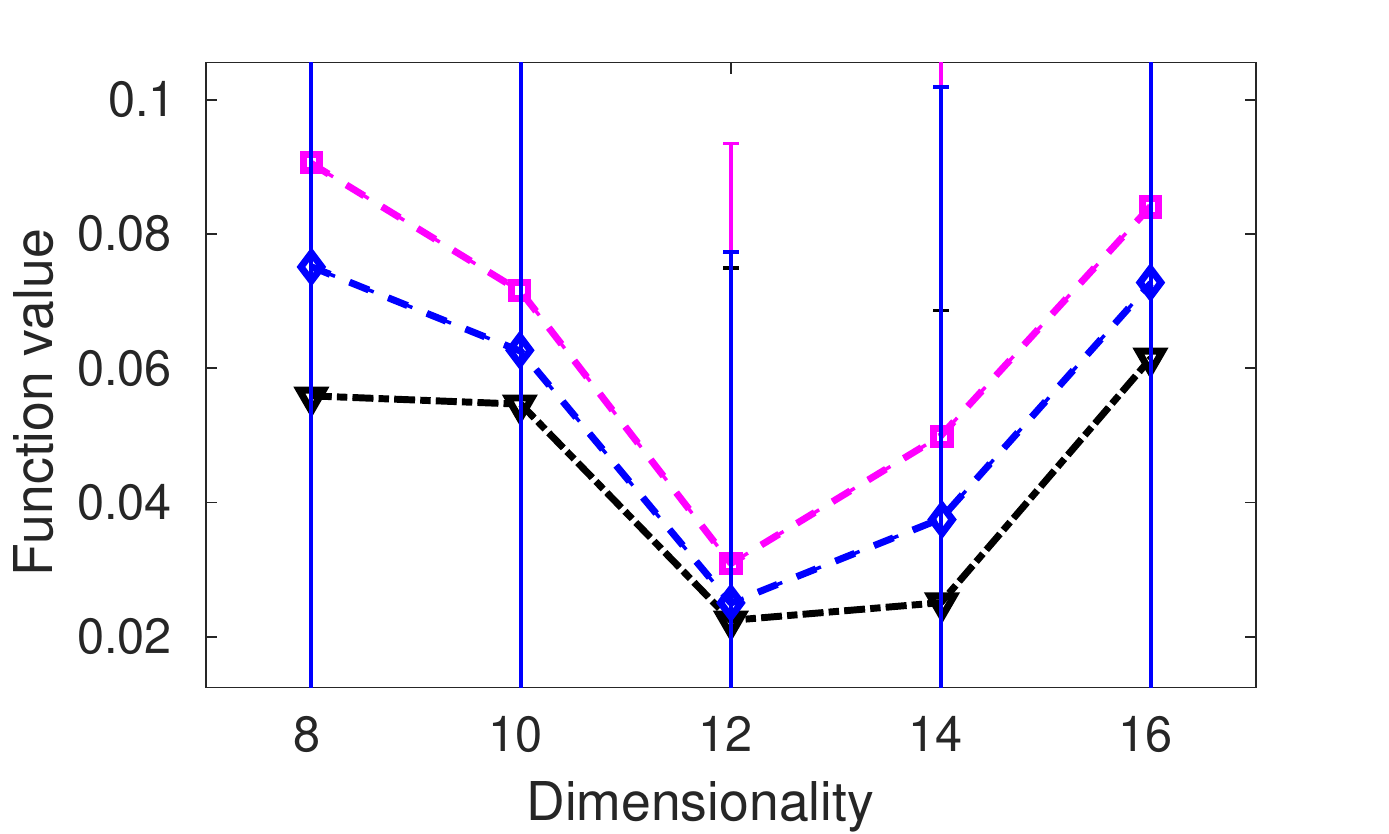}}
        \subfloat[$m=n$ \label{fig_softmax_exp2}]{
        \includegraphics[]{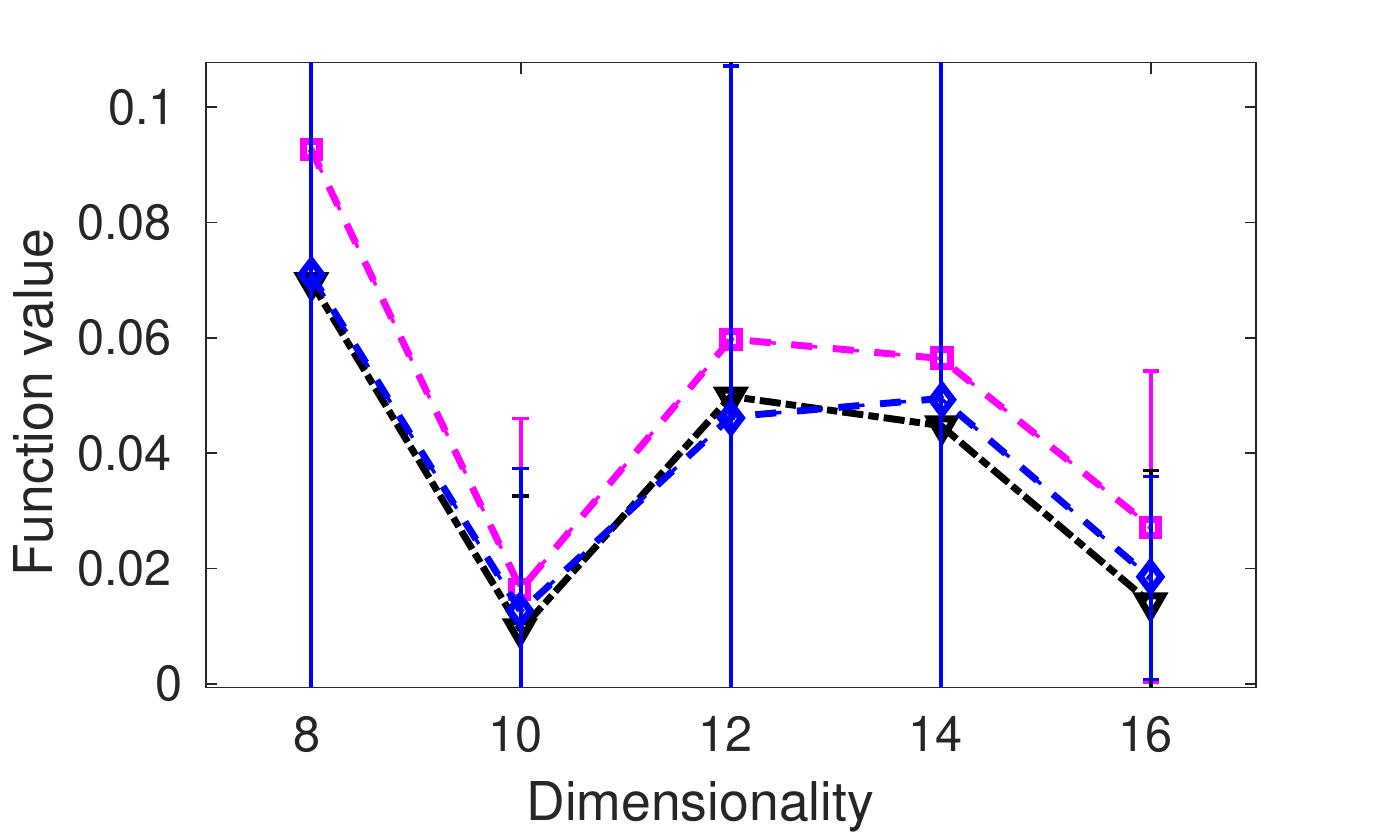}}
      \subfloat[$m=\floor {1.5n}$ \label{fig_softmax_exp3}]{
      \includegraphics[]{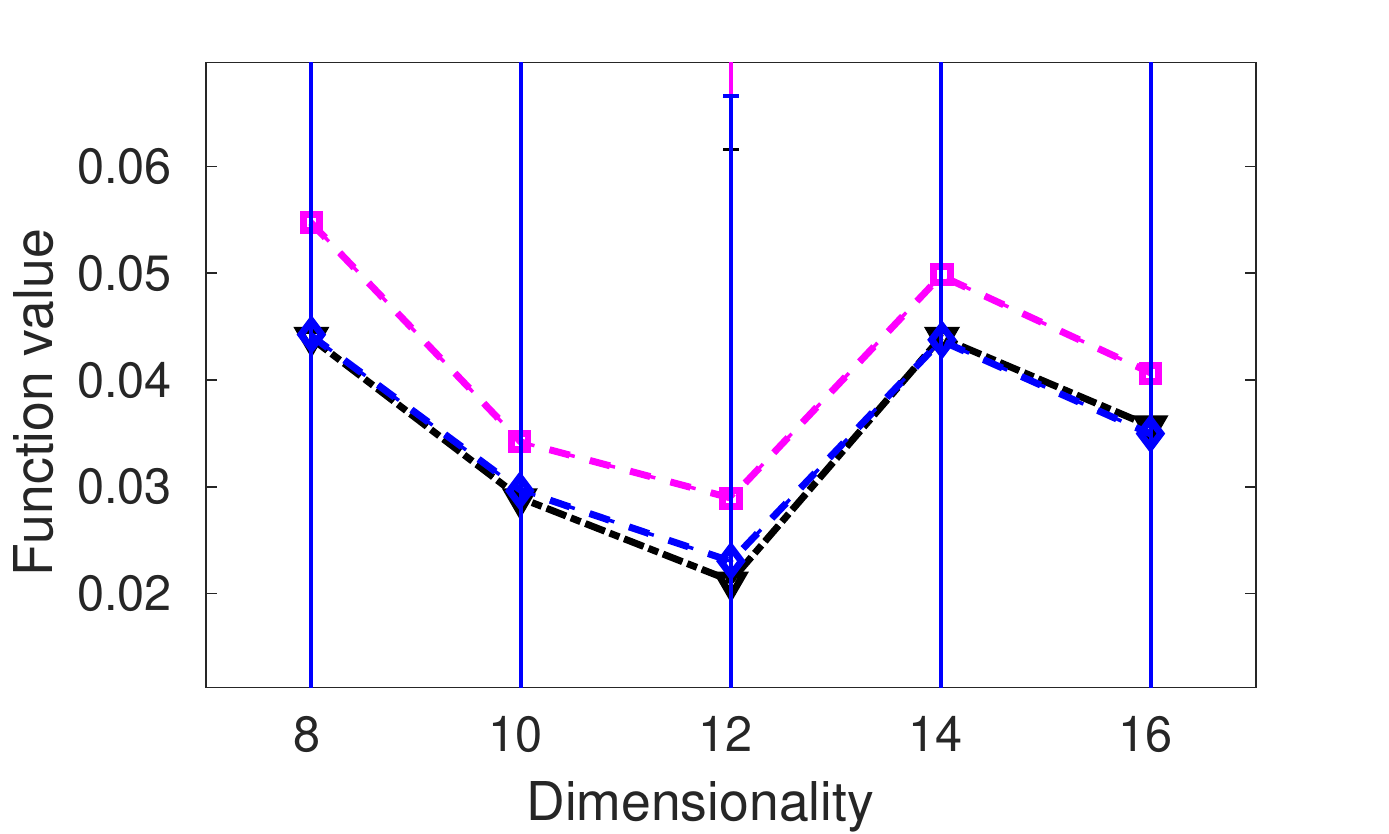}}
    \caption{Results  on   softmax instances with random polytope constraints generated  from exponential distribution.}
         \label{fig_softmax_exp}
    \end{figure}

\if 0

\section{A More Stricking Result}

\begin{proposition}[Local-Global Correlation]\label{local_global_bk}
Suppose $\x^*$ to be the global optimum of $\max_{\x \in \P} f(\x)$. 
Let $\x$ be  an approximately  stationary point  of $f(\cdot)$ in $\P$
with non-stationarity as $g_{\P}(\x)$. Let the set 
${\O} := \{\y \in \P \; | \y\leq \bar \u - \x\}$. 
Let $\Q$ to be any convex set s.t. $\O\subseteq \Q \subseteq \P$. 
Let  $\z$ be  an approximately stationary points  of $f(\cdot)$ in $\Q$ 
with non-stationarity as $g_{\Q}(\z)$.
It holds that ,
\begin{flalign}
\max\{f(\x), f(\z) \}  \geq \frac{1}{4}\left[f(\x^*) -g_{\P}(\x) -g_{\Q}(\z)\right ]  +   \frac{\mu}{8}\left(\|\x -\x^*\|^2 + \|\z -\z^*\|^2\right ),
\end{flalign}
where $\z^*:= \x\vee \x^* -\x$ is some auxiliary point inside $\O$.
\end{proposition}

\begin{proof}[Proof of \cref{local_global}]
Consider the  auxiliary point $\z^*:= \x\vee \x^* -\x = (\x^* - \x)\vee 0$. One can observe that: 1) $0\leq \z^* \leq \x^*$; 2) $\z^* \in \P$ (down-closedness); 3) $\z^*\in \O$ (because of that  $\z^*\leq \u - \x$), so it must hold that $\z^*\in \Q$ as well. 

From \cref{lemma_3_1} one can get,
\begin{align}\label{eq_1718}
& \dtp{\x^*-\x}{\nabla f(\x)} +  2f(\x) \geq f(\x\vee \x^*) + f(\x \wedge \x^*) +  \frac{\mu}{2}\|\x -\x^*\|^2, \\\label{eq12}
& \dtp{\z^*-\z}{\nabla f(\z)} +  2f(\z) \geq f(\z\vee \z^*) + f(\z \wedge \z^*) +  \frac{\mu}{2}\|\z -\z^*\|^2.
\end{align}
Let us first of all prove the following \namecref{claim_key}.
\begin{claim}\label{claim_key}
It holds that $f(\x\vee \x^*) + f(\x \wedge \x^*) +  f(\z\vee \z^*) + f(\z \wedge \z^*) \geq f(\x^*)$.
\end{claim}
\begin{proof}[Proof of \cref{claim_key}]
Firstly, we are going to prove that 
\begin{align}\label{proof_part1}
f(\x \vee \x^*) + f(\z\vee \z^*) \geq f(\z^*) + f((\x+\z)\vee \x^*)
\end{align}
which is equivalent to
$f(\x \vee \x^*) - f(\z^*) \geq f((\x+\z)\vee \x^*) - f(\z\vee \z^*)$.
It can be shown that  $\x \vee \x^*  - \z^* = (\x+\z)\vee \x^* - \z\vee \z^* $. Combining this with 
the fact that $\z^* \leq \z\vee \z^*$, and using the DR property (see \labelcref{eq_dr}) implies 
\labelcref{proof_part1}.
Then we establish,
\begin{align}\label{eq_EqaulityPoints}
 \x \vee \x^*  - \z^* = (\x+\z)\vee \x^* - \z\vee \z^* ~.
\end{align}
We will show that both the RHS and LHS of the above equation are equal to $\x$:  for the LHS of \labelcref{eq_EqaulityPoints} we can write 
 $\x \vee \x^*  - \z^* =  \x \vee \x^*  - \left(  \x \vee \x^* - \x\right) = \x$.
For the RHS of \labelcref{eq_EqaulityPoints} let us consider any coordinate $i\in [n]$,
\begin{align}\notag 
(x_i+z_i)\vee x_i^* - z_i\vee z_i^* = (x_i+z_i)\vee x_i^* - \left((x_i+z_i)-x_i\right)\vee  \left((x_i \vee x_i^*) - x_i\right) =x_i,
\end{align}
where the last equality holds easily for the two situations: $(x_i+z_i) \geq  x_i^*$ and $(x_i+z_i) < x_i^*$.

Next, we are going to prove that,
\begin{align}\label{proof_part2}
 f(\z^*) + f(\x\wedge \x^*)\geq f(\x^*) + f(0)
\end{align}
it is equivalent to 
$f(\z^*)   - f(0) \geq  f(\x^*) - f(\x\wedge \x^*)$,
which can be done similarily by the DR property: Notice that
\begin{align}\notag 
\x^* - \x\wedge \x^* = \x\vee \x^* - \x = \z^* - 0 \text{ and } 
 0 \leq  \x\wedge \x^*
\end{align}
thus \labelcref{proof_part2} holds from the DR property. 
Combining \labelcref{eq_1718,proof_part1,proof_part2} one can get,
\begin{align}\notag 
2f(\x) + 2f(\z)& \geq  f(\x^*) + f(0) +  f((\x+\z)\vee \x^*)\\\notag 
& \geq f(\x^*)    \quad \text{(non-negativity of $f$) }.
\end{align}
\end{proof} 

Combining \labelcref{eq_1718,eq12} and \cref{claim_key} it reads,
\begin{align}\label{eq16}
 \dtp{\x^*-\x}{\nabla f(\x)} +  \dtp{\z^*-\z}{\nabla f(\z)}  +   2(f(\x) + f(\z) ) \geq f(\x^*) + 
 \frac{\mu}{2}(\|\x -\x^*\|^2 + \|\z -\z^*\|^2) 
\end{align}
From the definition of non-stationarity in \labelcref{non_stationary} one can get, 
\begin{align}\label{eq17}
&  g_{\P}(\x) := \max_{\v\in\P}\dtp{\v - \x}{\nabla f(\x)} \overset{\x^*\in \P}{\geq}  \dtp{\x^*-\x}{\nabla f(\x)}\\\label{eq18}
& g_{\Q}(\z) := \max_{\v\in\Q}\dtp{\v - \z}{\nabla f(\z)}  \overset{\z^*\in \Q}{\geq} \dtp{\z^*-\z}{\nabla f(\z)} 
\end{align}
Putting together \labelcref{eq16,eq17,eq18} we can get, 
\begin{align}
2(f(\x) + f(\z) ) \geq f(\x^*) -g_{\P}(\x) -g_{\Q}(\z) +   \frac{\mu}{2}(\|\x -\x^*\|^2 + \|\z -\z^*\|^2).
\end{align}
so it arrives that
$\max\{f(\x), f(\z) \} \geq \frac{1}{4}[f(\x^*) -g_{\P}(\x) -g_{\Q}(\z)]  +   \frac{\mu}{8}(\|\x -\x^*\|^2 + \|\z -\z^*\|^2)$.
\end{proof}

There are some very  interesting implications from \cref{local_global}, which
we discuss here:
\begin{remark} From \cref{local_global_bk}, we can see: 
1) If we set $\Q = \P$ and $\z = \x$, then $f(\x) \geq \frac{1}{4} (f(\x^*) - 2g_{\P}(\x)) + \frac{\mu}{4} \|\x - \x^*\|^2$. If we further assume $\x$
is some stationary point in $\P$, then $g_{\P}(\x) = 0$, so $f(\x) \geq \frac{1}{4} f(\x^*) + \frac{\mu}{4} \|\x - \x^*\|^2 \geq \frac{1}{4} f(\x^*)$, which indicates that any stationary point inside
$\P$ has  a $1/4$ approximation guarantee. 
2) A second stategy is to calculate a different stationary 
point $\z\neq \x$, over some subset  $\O\subseteq \Q \subseteq \P$.

\end{remark}

\fi

\end{document}